\documentclass[11pt]{article}

\usepackage{fullpage}

\usepackage{amsmath,amsfonts,bm}









\def\eqref#1{equation~\ref{#1}}









\def\1{\bm{1}}








\def\vn{{\bm{n}}}



\DeclareMathAlphabet{\mathsfit}{\encodingdefault}{\sfdefault}{m}{sl}
\SetMathAlphabet{\mathsfit}{bold}{\encodingdefault}{\sfdefault}{bx}{n}


\def\gC{{\mathcal{C}}}
\def\gD{{\mathcal{D}}}

\def\gR{{\mathcal{R}}}












\DeclareMathOperator*{\argmax}{arg\,max}
\DeclareMathOperator*{\argmin}{arg\,min}

\usepackage[utf8]{inputenc} 
\usepackage[T1]{fontenc}    
\usepackage{hyperref}       
\usepackage{url}            
\usepackage{booktabs}       
\usepackage{amsfonts}       
\usepackage{nicefrac}       
\usepackage{microtype}      
\usepackage{xcolor}         

\usepackage{algorithm}
\usepackage{algorithmic}

\usepackage{algorithm} 
\usepackage{algorithmic}
\usepackage{microtype}
\usepackage{graphicx}
\usepackage{subfigure}

\usepackage{natbib} 
\usepackage{mathtools} 
\usepackage{booktabs} 
\usepackage{tikz} 

\usepackage[capitalize,noabbrev]{cleveref}

\usepackage{amsmath}
\usepackage{amssymb}
\usepackage{mathtools}
\usepackage{amsthm}
\usepackage{dsfont}

\theoremstyle{plain}
\newtheorem{assumption}{Assumption}

\newtheorem{definition}{Definition}
\newtheorem{remark}{Remark}
\newtheorem{theorem}{Theorem}

\newtheorem{lemma}{Lemma}

\def \bC {\displaystyle\sC}
\def \bD {\displaystyle\sD}
\def \bR {\displaystyle\sR}
\def \bT {\mathcal{T}}
\def \bI {\bold{I}}

\def \bD {\mathcal{D}}
\def \bR {\mathcal{R}}
\def \bE {\mathbb{E}}
\def \bC {\mathcal{C}}
\def \bbC {\mathcal{C}}

\def \bP {\mathcal{P}}
\def \bn {\displaystyle\vn}
\def \bc {\mathcal{C}}
\def \ba {\bold{a}}
\def \bone {\mathds{1}}
\def \bE {\mathds{E}}
\def \bB {\mathbb{B}}
\def \bN {\mathcal{N}}
\def \bbN {N}

\usepackage[textsize=tiny]{todonotes}
\usepackage[american]{babel}

\usepackage{natbib} 
\usepackage{mathtools} 
\usepackage{tikz} 

\usepackage {xr}
\makeatletter
\newcommand*{\addFileDependency}[1]{
  \typeout{(#1)}
  \@addtofilelist{#1}
  \IfFileExists{#1}{}{\typeout{No file #1.}}
}
\makeatother

\externaldocument{main}



\title{\textbf{Adversarial Attacks on Online Learning to Rank with Stochastic Click Models}}

\author{ Zichen Wang \footnotemark[1] \\
\and 
Rishab Balasubramanian \footnotemark[2]\\
\and 
Hui Yuan \footnotemark[3]\\
\and 
Chenyu Song \footnotemark[2]\\
\and 
Mengdi Wang \footnotemark[3]\\
\and 
Huazheng Wang \footnotemark[2]
}
\date{}
\begin{document}
\maketitle

\renewcommand{\thefootnote}{\fnsymbol{footnote}}
\footnotetext[1]{Southwest University, \texttt{swuzcwang@163.com}}
\footnotetext[2]{Oregon State University, \texttt{\{balasuri,songchen,huazheng.wang\}@oregonstate.edu}}
\footnotetext[3]{Princeton University, \texttt{\{huiyuan,mengdiw\}@princeton.edu}}

\begin{abstract}
  We propose the first study of adversarial attacks on online learning to rank. The goal of the adversary is to misguide the online learning to rank algorithm to place the target item on top of the ranking list linear times to time horizon $T$ with a sublinear attack cost. We propose generalized list poisoning attacks that perturb the ranking list presented to the user. This strategy can efficiently attack any no-regret ranker in general stochastic click models. Furthermore, we propose a click poisoning-based strategy named attack-then-quit that can efficiently attack two representative OLTR algorithms for stochastic click models. We theoretically analyze the success and cost upper bound of the two proposed methods. Experimental results based on synthetic and real-world data further validate the effectiveness and cost-efficiency of the proposed attack strategies.
\end{abstract}

\section{Introduction}


Online learning to rank (OLTR) \citep{grotov2016online} formulates learning to rank \citep{liu2009learning}, the core problem in information retrieval, as a sequential decision-making problem. OLTR is a family of online learning solutions that exploit implicit feedback from users (e.g., clicks) to directly optimize parameterized rankers on the fly. It has drawn increasing attention in recent years \citep{Kveton2015CascadingBL,zoghi2017onlineLT,Lattimore2018TopRankAP,Oosterhuis_2018,wang2019variance,jia2021pairrank} due to its advantages over traditional offline learning-based solutions and numerous applications in web search and recommender systems~\citep{liu2009learning}. 

To effectively utilize users' click feedback to improve the
quality of ranked lists, one line of OLTR studied bandit-based algorithms under different click models. 
In each iteration, the algorithm presents a ranked list of $K$ items selected from $L$ candidates based on its estimation of the user's interests. The ranker observes the user's click feedback and updates these estimates accordingly. 
Different users may examine and click on the ranking list differently, and how the user interacts with the item list is called the \textit{click model}. Many works have been dedicated to establishing OLTR algorithms in the cascade model \citep{Kveton2015CascadingBL,Kveton2015CombinatorialCB,Zong2016CascadingBF,li2016contextual,Vial2022MinimaxRF}, the position-based model \citep{Lagre2016MultiplePlayBI} and the dependent click model \citep{Katariya2016DCMBL,Liu2018ContextualDC}. However, these algorithms are ineffective when employed under a different click model. To overcome this bottleneck, \citet{zoghi2017onlineLT,Lattimore2018TopRankAP,Li2019OnlineLT} proposed OLTR algorithms with general stochastic click models that cover the aforementioned click models. 

There has been a huge interest in developing robust and trustworthy information retrieval systems \citep{Golrezaei2021LearningPR,Ouni2022DeepLF,Sun2016TheCO}, and understanding the vulnerability of OLTR algorithms to adversarial attacks is an essential step towards the goal. Recently, several works explored adversarial attacks on multi-armed bandits \citep{jun2018adversarial,liu2019data} and linear bandits \citep{garcelon2020adversarial,wang2022linear} where the system recommends one item to the user in each round. The idea of the poisoning attack is to lower the rewards of the non-target item to misguide the bandit algorithm to recommend the target item using cost sublinear to time horizon $T$. In online ranking, we consider the goal of the adversary as misguiding the algorithm to rank the target item on top of the ranking list linear times ($T-o(T)$) with sublinear attack cost ($o(T)$).
However, it is hard to directly extend the attack strategy on multi-armed bandits to OLTR since the click model is a black box to the adversary.

In this paper, we propose the first study of adversarial attacks on OLTR with stochastic click models. We study two threat models: click poisoning attacks where the adversary manipulates the rewards the user sends back to the ranking algorithm, and list poisoning attacks where the adversary perturbs the ranking list presented to
the user. We first propose a generalized list poisoning attack strategy that can \textit{efficiently attack any no-regret ranker} for stochastic click models. The adversary perturbs the ranking list presented to the user and pretends the click feedback represents the user's interests in the original ranking list. This guarantees the feedback always follows the unknown click model, making the attack \textit{stealthy}. Furthermore, we propose a click poisoning-based strategy named attack-then-quit that can efficiently attack two representative OLTR algorithms for stochastic click models, i.e., BatchRank \citep{zoghi2017onlineLT} and TopRank \citep{Lattimore2018TopRankAP}. Our theoretical analysis guarantees that the proposed methods succeed with sublinear attack cost. We empirically evaluate the proposed methods against several  OLTR algorithms on synthetic data and a real-world dataset under different click models. Our experimental results validated  the theoretical analysis of the effectiveness and cost-efficiency of the two proposed attack algorithms.

\section{Preliminaries}
\subsection{Online learning to rank}
We denote the total item set with $L$ items as $\gD = \{a_1,...,a_L\}$. Let $\Pi_{K}(\gD) \subset \gD^K$ stands for all $K$-tuples with different elements from $\gD$. At each round $t$, the ranker would present a length-$K$ ordered list $\gR_t = (\ba_{1}^t,...,\ba_{K}^t) \in \Pi_{K}(\gD) $ to the user, where $\ba_{k}^t$ is the item placed at the $k$-th position of $\gR_t$. Generally, $K$ is a constant much smaller than $L$. When the user observes the provided list, he/she returns click feedback $\gC_t = (\gC_{1}^t,..., \gC_{L}^t)$ to the ranker where $\bC^t_k = 1$ stands for user click on item $a_k$. Note that $a_k\not\in \bR_t$ can not be observed by the user, thus its click feedback in round $t$ is $\bc_k^t = 0$. 
The attractiveness score represents the probability the user is interested in item $a_k$, and is defined as  $\alpha(a_k) \in [0,1]$, which is unknown to the ranker. Without loss of generality, we suppose $\alpha(a_1)>,...,>\alpha(a_L)$ where $a_1$ is the most attractive item and $a_L$ is the least attractive item. 

\subsection{Stochastic click models}

In this paper, we consider the general stochastic click models studied by \citet{zoghi2017onlineLT,Lattimore2018TopRankAP}, where the conditional probability that the user clicks on position $k$ in round $t$ is only related to $\bR_t$. This implies there exists an unknown function that satisfies
\begin{align}
    P(\bC_s^t=1\ \vert\ \bR_t = \bR,\ \ba_k^t = a_s) = v(\bR,\ba_k^t,k).
\end{align}
The key problem of OLTR is to present the optimal list $\bR^* = (a_1,...,a_K)$ to the user for per-round click number maximization. 
The optimal list is unique due to the attractiveness of items is unique.

\begin{assumption} [Assumption 2 of \citep{Lattimore2018TopRankAP}]\label{supassumption} Due to the user does not observe items in position $\not \in \bR_t$, we assume the ranker can achieve maximum expected number of clicks in round $t$ if and only if $\bR_t = \bR^*$, i.e. 
\begin{align}
\max_{\bR\in\Pi_K(\bD)} \sum_{k=1}^K v(\bR,\ba_k^t,k) = \sum_{k=1}^K v(\bR^*,\ba_k^t,k).
\end{align}
\end{assumption}

\begin{definition} [Cumulative regret]\label{supdefinition}
 The performance of a ranker can be evaluated by the cumulative regret, defined as
\begin{align}\label{5}
\begin{split}
\nonumber
    R(T) = \bE \bigg[ T \sum_{k = 1}^K v(\gR^*,\ba_k^t,k) - \sum_{t = 1}^T \sum_{k = 1}^K v(\gR_t,\ba_{k}^t,k) \bigg].
\end{split}
\end{align}
Note that if Assumption \ref{supassumption} holds, $\bR^*$ can uniquely maximize $\sum_{k = 1}^K v(\gR_t,\ba_{k}^t,k)$, and every $\bR_t \not = \bR^*$ leads to non-zero regret. 
\end{definition}

We present two classic click models \citep{Chuklin2015ClickMF,Richardson2007PredictingCE,Craswell2008AnEC} that are special instances of the stochastic click models.

\paragraph{Position-based model.} The position-based model \citep{Richardson2007PredictingCE} assumes the examination probability of the $k$-th position in list $\gR_t$ is a constant $\chi(k) \in [0,1]$. In each round, the user receives the ordered list $\gR_t$. He/she would examine position $ k $ with probability $\chi(k)$. If position $k$ is examined then the user would click item $\ba_{k}^t$ with probability $\alpha(\ba_k^t)$. Hence, the probability of item $\ba_k^t$ is clicked by the user is
    \begin{align}
v(\gR_t,\ba_{k}^t,k)  = \chi(k)\alpha(\ba_{k}^t).
    \end{align}
Note that the examination probability of items not in $\gR_t$ is $0$. Hence, the expected number of clicks in round $t$ is
    \begin{align}\label{PBM}
         \sum^{K}_{k = 1} v(\gR_t,\ba_{k}^t,k) = \sum^{K}_{k = 1} \chi(k) \alpha(\ba_{k}^t).
    \end{align}
The examination probabilities of the first $K$ positions  are assumed to follow $\chi(1) > ... > \chi(K)$ \citep{Chuklin2015ClickMF}. The maximum number of clicks in each round is $K$.
    
\paragraph{Cascade model.}
    In the cascade model \citep{Craswell2008AnEC}, the user examines the items in $\gR_t$ sequentially from $\ba_{1}^t$. The user continues examining items until they find an item $\ba_{k}^t$ attractive or they reach the end of the list. If the user finds $\ba_{k}^t$ attractive, they would click on it and stop examining further.

    According to the above description, the examination probability of position $k$ equals the probability of none of the items in the first $k-1$ positions in $\gR_t$ can attract the user, and can be represented as
    \begin{align}
        \chi(\gR_t,k) = \prod_{s=1}^{k-1} (1 - \alpha(\ba_{s}^t)).
    \end{align}
The maximum number of clicks is at most $1$, and the expected number of clicks in each round can be written as
\begin{align}\label{3}
\begin{split}
   \sum^{K}_{k = 1} v(\gR_t,\ba_{k}^t,k) = \sum_{k = 1}^K\chi(\gR_t, k)\alpha(\ba_{k}^t) = 1 - \prod_{k=1}^K (1 - \alpha(\ba_{k}^t)).
    \end{split}
\end{align}
Similar to the position-based model, $ \chi(\gR_t,1) > ... >  \chi(\gR_t,K)$ is hold in the cascade model.

\begin{definition} [No-regret ranker] \label{definition2}
We define the no-regret ranker as a ranker that achieves a sublinear ($o(T)$) regret in its click model under Assumption \ref{supassumption}. By Definition \ref{supdefinition}, we can see that a ranker is no-regret if and only if it presents $\bR^*$ to the user for $T-o(T)$ times.
\end{definition}

\begin{remark} \label{remark1}
We now briefly discuss correlations between click models and no-regret rankers. 
Recall the definition of the position-based model, the optimal list $\bR^*$ can uniquely maximize (\ref{PBM}). Thus, every ranker that achieves regret $R(T) = o(T)$ in the position-based model falls into the category of no-regret ranker (such as PBM-UCB \citep{Lagre2016MultiplePlayBI}). Besides, in the click model presented by \citet{zoghi2017onlineLT,Lattimore2018TopRankAP}, a ranker can achieve a sublinear regret if and only if they can present the optimal list $\bR^*$ for $T-o(T)$ times. Therefore, their click models also satisfy Assumption \ref{supdefinition}, and state-of-the-art online ranking methods BatchRank \citep{zoghi2017onlineLT} and TopRank \citep{Lattimore2018TopRankAP} fall into the category of no-regret rankers. However, every permutation of the first $K$-most attractive items can maximize (\ref{3}) in the cascade model.
The item with the highest attractiveness may not be placed at the first position for $T-o(T)$ times by an online stochastic ranker with $R(T) = o(T)$. Thus not all rankers that achieve $R(T) = o(T)$ in the cascade model are no-regret rankers. 
\end{remark}

\subsection{Threat models}
Let $\bbN_T(a_k)$ denote the total rounds item $a_k$ placed at the first position of $\gR_t$ until time $T$. The adversary aims to fool the ranker to place a target item $\tilde{a}$ at the first position of $\gR_t$ for $T-o(T)$ rounds. We consider two poisoning attack models.

\begin{figure}
\vspace{-5mm}
	\centering 	
 \subfigbottomskip=4pt
	\subfigcapskip=-5pt 
	\subfigure[Click poisoning attack.]{
\includegraphics[width=0.4\linewidth]{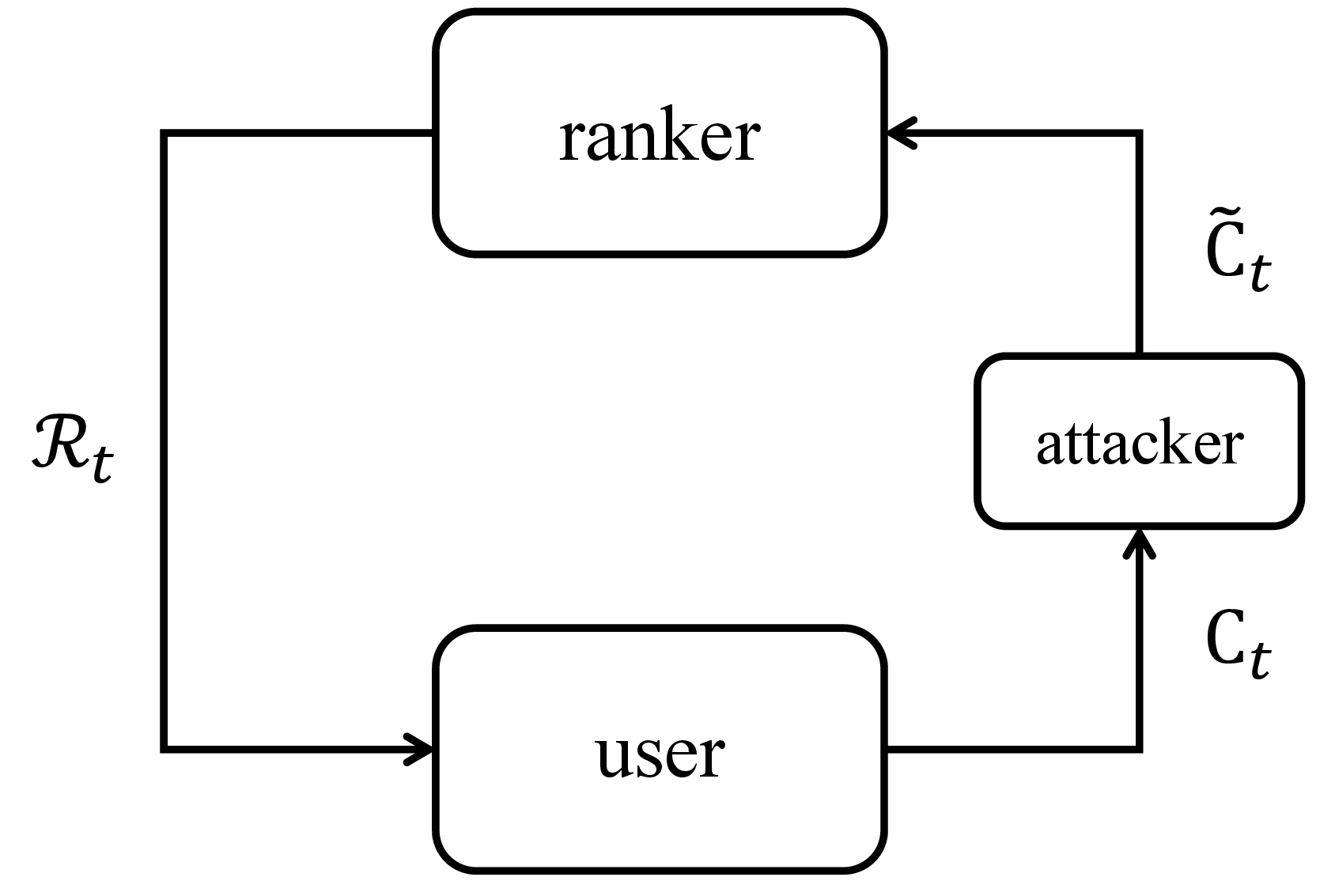}\label{fig1a}}\quad\quad\quad\quad
	\subfigure[List poisoning attack.]{
\includegraphics[width=0.4\linewidth]{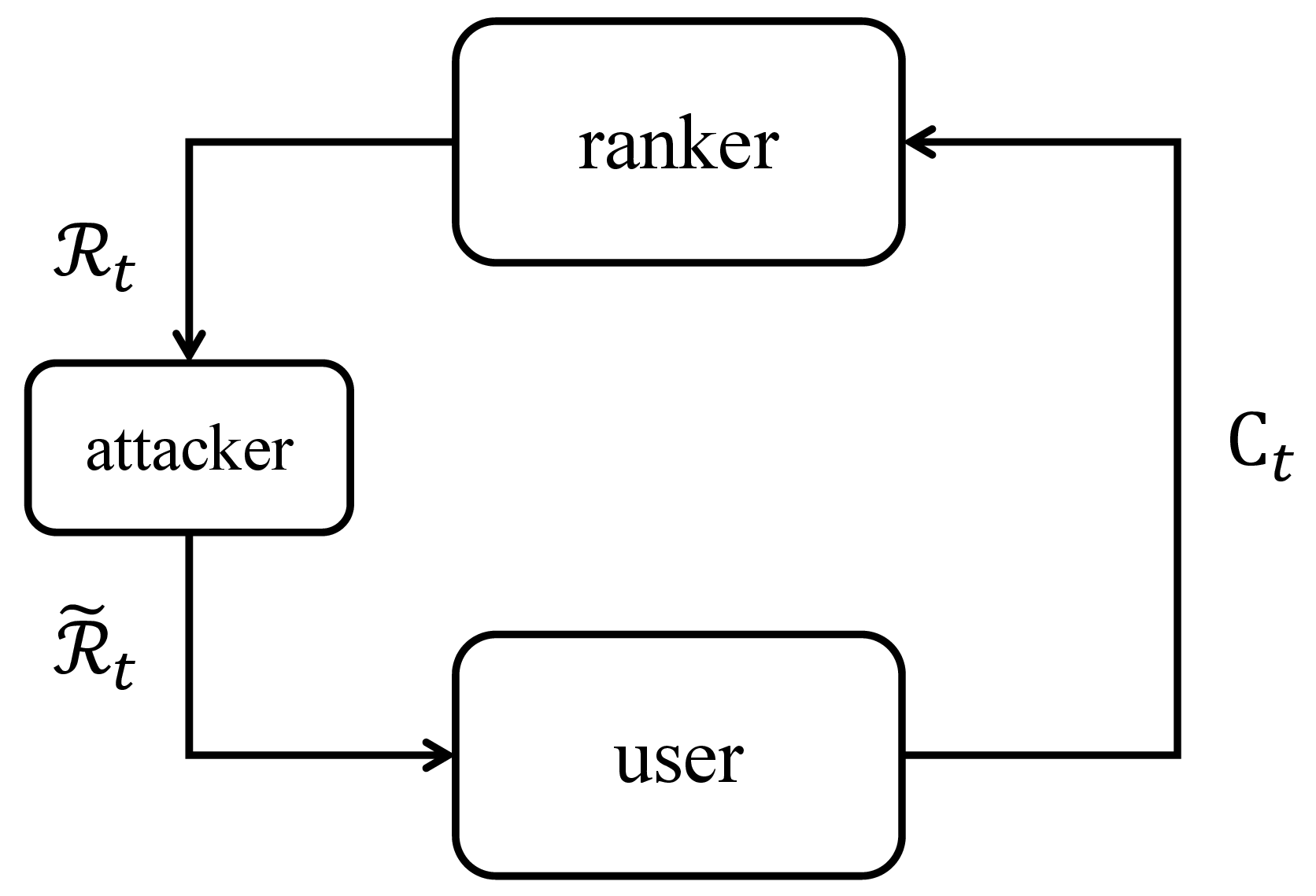}\label{fig1b}}
\caption{Threat models on online learning to rank.}
\vspace{-5mm}
\end{figure}

\paragraph{Click poisoning attacks.} 
We illustrate click poisoning attacks in Figure \ref{fig1a}. This is similar to the reward poisoning attacks studied on multi-armed bandits \citep{jun2018adversarial,liu2019data}.
In each round, the attacker obtains the user's feedback $\gC_t$, and modifies it to perturbed clicks $\tilde{\gC}_t = (\tilde{\gC}_1^t,...,\tilde{\gC}_L^t)$. Naturally, the attacker needs to attain its attack goal with minimum attack cost defined as $\bbC(T) = \sum_{t=1}^T\sum_{k=1}^L \vert\tilde{\gC}_k^t - \gC_k^t\vert$. 

\paragraph{List poisoning attack.}
Instead of directly manipulating the click feedback, the list poisoning attacks manipulate the presented ranking list from $\gR_t$ to $\tilde{\gR}_t$ as illustrated in Figure \ref{fig1b}. This is similar to the action poisoning attack proposed by \citet{Liu2020ActionManipulationAO,Liu2021EfficientAP} against multi-armed bandits. We assume the attacker can access items with low attractiveness denoted as $\{\eta_k\}_{k=1}^{2K+1}\not\in\bD$ and for convenience, $\alpha(\eta_1)>,...,>\alpha(\eta_{2K-1})$. The low attractiveness items satisfy $\alpha(\eta_1) < \alpha(a_L)$. We suppose the attacker does not need to know the actual attractiveness of these items, but only their relative utilities, i.e., the attractiveness of items in $\{\eta_k\}_{k=1}^{K-1}$ is larger than items in $\{\eta_k\}_{k=K}^{2K-1}$. The attacker uploads these items to the candidate action set before exploration and we denote $\tilde{\bD} = \bD\cup\{\eta_k\}_{k=1}^{2K+1}$. In each round, the attacker can replace items in original ranking $\gR_t$ with items in $\{\eta_k\}_{k=1}^{2K-1}$. This modified list $\tilde{\gR}_t = (\tilde{\ba}^t_1,...,\tilde{\ba}^t_K)$ is then sent to the user. The cost of the attack is  $\bbC(T) = \sum_{t=1}^T\sum_{k=1}^K \bone\{\tilde{\ba}^t_k  \neq {\ba}^t_k \}$. Note that the click feedback $\bC_t$ in list poisoning attacks is generated by $\tilde{\bR}_t$ instead of $\bR_t$, but the ranker assumes that the feedback is for $\bR_t$.

In practice, the click poisoning attack could be related to fake clicks/click farms as mentioned in \cite{WSJ,BuzzFeed,Golrezaei2021LearningPR}; list poisoning attack could be achieved by malware installed as a browser extension, where the malware does not directly change the click feedback but can manipulate the web page of ranking list locally. We aim to design \textit{efficient attack strategies} against online rankers, which is defined as follows.

\begin{definition} [Efficient attack] \label{definition1}  We say an attack strategy is efficient if 
\begin{enumerate}
  \item It misguides an  online stochastic ranker to place the target item $\tilde{a}$ at the first position of $\gR_t$ for $T - o(T)$ times in expectation with cost $\bbC(T) = o(T)$. 
  \item To keep the click poisoning attack stealthy, the returned total clicks $\sum_{k=1}^L\tilde{\gC}_k^t$ in the cascade model is at most $1$ and in the position-based model is at most $K$.
\end{enumerate}
\end{definition}

We conclude the preliminary with the difference between poisoning attacks on stochastic bandits \citep{jun2018adversarial,liu2019data, Xu2021ObservationFreeAO} and online learning rankers. Data poisoning attack on stochastic bandits aims to fool the bandit algorithm to pull the target arm $T - o(T)$ times with $o(T)$ cost. The main idea of this class of attack strategies is to hold the expected reward of the target item and reduce the expected reward of the non-target items. However, in the OLTR setting, 1) the ranker would interact with a length $K$ list $\gR_t$ instead of a single arm; 2) \emph{the user would generate click feedback under different click models that depend on examination probability.} Recall from the definition of click models, in the position-based model the user would return at most $K$ clicks in one round, while in the cascade model, the user would return at most $1$ click. Thus, if the attacker returns more than one click in the cascade model, its attack is unstealthy and inefficient.

\section{Generalized list poisoning attack strategy}

In this section, we would propose a generalized list poisoning attack (\texttt{\texttt{GA}}) that misguides \emph{any no-regret ranker} to place the target item at the first position of $\bR_t$  for $T-o(T)$ times in expectation with $o(T)$ cost. 

 \begin{algorithm}[t]
\renewcommand{\algorithmicrequire}{\textbf{Input:}}
\renewcommand{\algorithmicensure}{\textbf{Output:}}
        \label{algorithm1}
	\caption{Generalized List Poisoning Attack (\texttt{GA}) }
	\label{alg1}
	\begin{algorithmic}[1]
            \STATE \textbf{Inputs:} List $\bT = (\tilde{a},\eta_{1},...,\eta_{K-1})$ and $\{\eta_k\}_{k=1}^{2K-1}$ 
            \STATE Upload $\{\eta_k\}_{k=1}^{2K-1}$ to the candidate action set 
            \STATE \textbf{for} $t = 1:T$ \textbf{do}
               \STATE \quad Observe $\bR_t = (\ba_{1}^t,...,\ba_{K}^t)$
               \STATE \quad \textbf{if} $\bR_t\backslash\bT \not= \emptyset$ \textbf{then}
               \STATE \quad \quad \textbf{for} $k = 1 : K$ \textbf{do}
               \STATE \quad \quad \quad \textbf{if} $\ba_k^t\not \in \bT$ \textbf{then}
               \STATE \quad \quad \quad \quad Set $\tilde{\ba}_{k}^t = \eta_{K+k-1}$.
               \STATE \quad \quad \quad \textbf{else} 
               \STATE \quad\quad\quad \quad Set $\tilde{\ba}_{k}^t = \ba_{k}^t$
               \STATE \quad \quad  Return $\tilde{\bR}_t = (\tilde{\ba}_1^t,...,\tilde{\ba}_K^t)$ to the user
               \STATE \quad \textbf{else}
               \STATE \quad\quad Do not attack
	\end{algorithmic}  
\end{algorithm}

\paragraph{\texttt{GA} against no-regret rankers.}
 We briefly illustrate the process of \texttt{GA}. The strategy is summarized in Algorithm \ref{alg1}. The attacker first needs to design list $\bT$, where $\bT = \{\tilde{a},\eta_{1},...,\eta_{K-1}\}$. Intuitively, items $\eta_1,...,\eta_{K-1}$ are utilized to fill in the rest of the positions when $\tilde{a}$ is in $\bR_t$. The attacker would not manipulate item $\ba_k^t \in \bT$. If items $\ba_k^t \in \tilde{\bD}/\bT$ are placed in the position $k$, the attacker would replace it with $\eta_{K+k-1}$ (lines 5-9 in Algorithm \ref{alg1}). This manipulation strategy can mislead the ranker to believe the items in $\tilde{D}/\bT$ are smaller than $\alpha(\eta_{K})$ (due to $\alpha(\eta_{K}) < \alpha(\eta_{K-1}) < \alpha(a_L)$). Hence, list $\bT$ would be deemed as the optimal list and $\tilde{a}$ is the item with the highest attractiveness. Theorem \ref{proposition1} demonstrates that \texttt{GA} can efficiently attack any no-regret ranker.

\begin{theorem} \label{proposition1} 
\texttt{GA} can efficiently attack any no-regret ranker, i.e., $\bE[\bbN_T(\tilde{a})] = T - o(T)$ and $\bbC(T) = o(T)$.
\end{theorem}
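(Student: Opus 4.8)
The plan is to reduce the theorem to a single structural claim: the attack \texttt{GA} makes the ranker interact with a \emph{virtual} stochastic click model in which $\bT$ is the unique optimal list, and then to invoke the definition of a no-regret ranker (Definition \ref{definition2}) to read off both the success and the cost bounds. When the ranker proposes $\bR_t$, the user in fact responds to the attacked list $\tilde{\bR}_t$, so the feedback observed by the ranker is distributed according to $v(\tilde{\bR}_t,\tilde{\ba}^t_k,k)$. Since the map $\bR_t\mapsto\tilde{\bR}_t$ is deterministic (position $k$ keeps $\ba^t_k$ when $\ba^t_k\in\bT$ and otherwise shows $\eta_{K+k-1}$), this defines a genuine stochastic click model $\tilde v(\bR,\ba_k,k):=v(\tilde{\bR},\tilde{\ba}_k,k)$ on the augmented universe $\tilde{\bD}$, whose click probability at position $k$ depends only on $(\bR,\ba_k,k)$. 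The rest of the argument takes place entirely inside $\tilde v$.

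First I would show that $\tilde v$ satisfies Assumption \ref{supassumption} with unique optimum $\bT$. The attack freezes the \emph{perceived} attractiveness ordering: every item of $\bT$ (namely $\tilde a$ and $\eta_1,\dots,\eta_{K-1}$) is shown truthfully, whereas every item outside $\bT$ is, at any position $k$, replaced by a low-group item $\eta_{K+k-1}$. Because $\alpha(\tilde a)\ge\alpha(a_L)>\alpha(\eta_1)>\dots>\alpha(\eta_{K-1})>\alpha(\eta_K)>\dots>\alpha(\eta_{2K-1})$, each item of $\bT$ is strictly more attractive than every replacement item, and $\tilde a$ is the most attractive of all. Hence for any $\bR\neq\bT$: if $\bR$ contains an item outside $\bT$, then $\tilde{\bR}$ substitutes a strictly less attractive low-group item for a missing $\bT$-item, so the expected number of clicks of $\tilde{\bR}$ is strictly smaller than that of $\bT$; and if $\bR$ is merely a reordering of $\bT$, then $\tilde{\bR}=\bR$ and the sorted list $\bT$ again strictly dominates. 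Both comparisons rest on monotonicity of the click model in attractiveness together with the sorted-optimality guaranteed by Assumption \ref{supassumption}. Thus $\bT=(\tilde a,\eta_1,\dots,\eta_{K-1})$ is the unique maximizer of $\sum_k\tilde v(\bR,\ba_k,k)$, i.e. $\bR^*_{\tilde v}=\bT$.

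Given this, the two conclusions are immediate. Since the ranker is a no-regret ranker and $\tilde v$ is a stochastic click model satisfying Assumption \ref{supassumption}, Definition \ref{definition2} guarantees that it presents $\bR^*_{\tilde v}=\bT$ for $T-o(T)$ rounds in expectation. In every such round $\tilde a$ sits at the first position of $\bR_t$, so $\bE[\bbN_T(\tilde a)]\ge T-o(T)$ (and $\le T$ trivially), which is the success claim. For the cost, note that \texttt{GA} modifies the list only when $\bR_t\setminus\bT\neq\emptyset$, which forces $\bR_t\neq\bT$; hence the number of attacked rounds is at most the number of rounds with $\bR_t\neq\bR^*_{\tilde v}$, which is $o(T)$ by the same no-regret property. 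Since each attacked round changes at most $K$ positions, $\bbC(T)\le K\cdot o(T)=o(T)$.

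The main obstacle is the structural claim that $\bT$ is the \emph{unique} optimum of the induced model $\tilde v$ for a general stochastic click model, not merely for the position-based or cascade instances. This requires a monotonicity statement—replacing the item at a position by a strictly less attractive one strictly decreases the per-round click probability, and among a fixed item set the attractiveness-sorted list is the unique maximizer—which I would extract from the click-model assumptions of \citet{zoghi2017onlineLT,Lattimore2018TopRankAP}. A secondary subtlety worth spelling out is that the no-regret guarantee, stated for the ranker's ``real'' environment, transfers verbatim to $\tilde v$: because the attacked feedback is exactly that of the legitimate model $\tilde v$, the ranker cannot distinguish the two, so its $o(T)$-regret guarantee applies with $\bR^*$ replaced by $\bT$.
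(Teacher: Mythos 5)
Your proposal is correct and follows essentially the same route as the paper's proof: both arguments observe that under \texttt{GA} the ranker effectively interacts with a legitimate click model whose unique optimal list is $\bT$, invoke the no-regret property to conclude $\bE[\bbN_T(\tilde{a})] = T-o(T)$, and bound the cost by $K$ times the number of rounds with $\bR_t\setminus\bT\neq\emptyset$. The only cosmetic difference is that the paper converts that count to $KR(T)/\Delta_{\min}$ via a minimum per-round regret gap, whereas you read the $o(T)$ bound directly off the characterization in Definition~\ref{definition2}; these are equivalent, and your explicit flagging of the monotonicity/uniqueness step for general stochastic click models is a point the paper itself leaves implicit.
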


 Although we do not include all rankers with $o(T)$ regret in the cascade model as no-regret rankers from Remark \ref{remark1}, we can still show that classic cascade ranking algorithms such as CascadeUCB1 \citep{Kveton2015CascadingBL} can be efficiently attacked by Algorithm \ref{alg1}.

\begin{theorem} \label{suptheorem} 
Consider the ranker applies CascadeUCB1 and is attacked by Algorithm \ref{alg1}. The expected number of $\tilde{a}$ be placed at the first position till round $T$ satisfies 
\begin{align}
\begin{split}
\bE[\bbN_T(\tilde{a})] \ge T - \sum_{k=1}^{K-1}\frac{ 3 + 81\log(T)}{(\alpha(\tilde{a}) - \alpha(\eta_k))^2} - \sum_{k=1}^{L+K-1}\frac{ 3 + 81\log(T)}{(\alpha(\tilde{a})-\alpha(\eta_K))^2}
\end{split}
\end{align}
and the cost is $o(T)$.
\end{theorem}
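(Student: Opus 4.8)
The plan is to run the standard optimism-based (UCB1) bookkeeping of CascadeUCB1, but with the crucial twist that, from the ranker's viewpoint, the attack rewrites the \emph{effective} attractiveness of every item. First I would record what the ranker observes. Since items in $\bT$ are never replaced, the feedback the ranker attributes to $\tilde{a}$ is genuine $\mathrm{Bernoulli}(\alpha(\tilde{a}))$ and the feedback for $\eta_k$ ($1\le k\le K-1$) is genuine $\mathrm{Bernoulli}(\alpha(\eta_k))$. By contrast, whenever an item $a\in\tilde{\bD}\setminus\bT$ is scheduled at a position $k$ it is swapped for $\eta_{K+k-1}$, whose attractiveness is at most $\alpha(\eta_K)$; hence every click the ranker credits to such an $a$ is drawn from a Bernoulli of mean $\le\alpha(\eta_K)$. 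Thus the attack forces the ranker to estimate an effective mean $\mu^{\mathrm{eff}}(a)=\alpha(a)$ for $a\in\bT$ and $\mu^{\mathrm{eff}}(a)\le\alpha(\eta_K)$ for $a\notin\bT$, and because $\alpha(\tilde{a})>\alpha(\eta_1)>\dots>\alpha(\eta_{K-1})>\alpha(\eta_K)$, the target $\tilde{a}$ is the unique effective maximizer.

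Next I would reduce ``$\tilde{a}$ is not placed first'' to a per-item over-exploration event. CascadeUCB1 lists the top-$K$ items by index $U_t(a)=\hat{\alpha}_t(a)+c_{t,T_{t-1}(a)}$ in decreasing order, so $\tilde{a}$ sits at position $1$ unless some $a\ne\tilde{a}$ has $U_t(a)\ge U_t(\tilde{a})$; therefore
\begin{align}
T-\bbN_T(\tilde{a})\ \le\ \sum_{a\ne\tilde{a}}\ \sum_{t=1}^{T}\bone\{a \text{ is placed first at round } t\}.
\end{align}
On the optimism event $U_t(\tilde{a})\ge\alpha(\tilde{a})$, any round in which $a$ is first forces $\hat{\alpha}_t(a)+c_{t,T_{t-1}(a)}\ge\alpha(\tilde{a})$, and since an item placed first is always examined in the cascade model, $T_{t-1}(a)$ is incremented at each such round. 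The key concentration step (below) gives $\hat{\alpha}_t(a)\le\mu^{\mathrm{eff}}(a)+c_{t,T_{t-1}(a)}$ with high probability, so being first requires $2c_{t,T_{t-1}(a)}\ge\alpha(\tilde{a})-\mu^{\mathrm{eff}}(a)$, i.e.\ $T_{t-1}(a)$ below a threshold of order $\log T/(\alpha(\tilde{a})-\mu^{\mathrm{eff}}(a))^2$. Summing the standard UCB1 tail bound over rounds turns this into an expected count of the form $(3+81\log T)/(\alpha(\tilde{a})-\mu^{\mathrm{eff}}(a))^2$, where the additive and multiplicative constants are exactly those produced by CascadeUCB1's confidence radius together with the $\sum_t t^{-c}$ tail of the failure probabilities. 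Plugging $\mu^{\mathrm{eff}}(\eta_k)=\alpha(\eta_k)$ for the $K-1$ items $\eta_1,\dots,\eta_{K-1}$ and $\mu^{\mathrm{eff}}(a)\le\alpha(\eta_K)$ for the $L+K-1$ items of $\tilde{\bD}\setminus\bT$, then adding the two groups, reproduces the two sums in the statement.

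The step I expect to be the real obstacle is the concentration bound for items in $\tilde{\bD}\setminus\bT$. Unlike vanilla CascadeUCB1, the successive observations the ranker records for such an $a$ are \emph{not} i.i.d.: at its $i$-th examination $a$ is replaced by a position-dependent $\eta_{K+k_i-1}$, and the index $k_i$ is chosen adaptively by the algorithm. What saves the argument is that every one of these variables lies in $[0,1]$ and has conditional mean at most $\alpha(\eta_K)$, so the centered partial sums form a bounded martingale difference sequence and an Azuma--Hoeffding estimate (equivalently, a Hoeffding bound at each fixed observation count $s$ together with a union bound over $s\le t$) yields $P(\hat{\alpha}_{t}(a)\ge\alpha(\eta_K)+c_{t,s})\le t^{-c}$ uniformly. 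Establishing this adaptively-valid one-sided bound, together with the matching lower bound $U_t(\tilde{a})\ge\alpha(\tilde{a})$ for the genuinely sampled target, is precisely what lets the effective-mean picture of the first paragraph drive the UCB1 count.

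Finally, for the cost I would bound the number of rounds in which the presented list is not exactly $\bT$. Writing $\bbC(T)=\sum_t\sum_{k}\bone\{\tilde{\ba}_k^t\ne\ba_k^t\}=\sum_{a\notin\bT}(\text{number of rounds } a\in\gR_t)$, it suffices to control how often each non-$\bT$ item enters the top-$K$. The same optimism argument, now comparing $U_t(a)$ against the weakest $\bT$ item $U_t(\eta_{K-1})$, shows each $a\notin\bT$ appears in the list at most $O\!\big(\log T/(\alpha(\eta_{K-1})-\alpha(\eta_K))^2\big)$ times in expectation; since these gaps are positive constants and there are $L+K-1$ such items, $\bbC(T)=O(L\log T)=o(T)$.
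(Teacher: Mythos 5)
Your proposal is correct and its core counting argument is the same as the paper's: reduce $T-\bbN_T(\tilde{a})$ to the number of rounds each competitor is placed first, split on whether that item's examination count has exceeded a threshold of order $\log T/\Delta^2$, and control the remainder with Hoeffding plus a union bound, yielding the $(3+81\log T)/\Delta^2$ per-item count (the paper isolates exactly this as its Lemma~\ref{suplemma}, proved for CascadeUCB1 in an unattacked instance and then applied to the post-attack ``effective'' instance). Two points where you genuinely diverge are worth noting. First, you explicitly flag that the observations credited to an item $a\in\tilde{\bD}\setminus\bT$ are not i.i.d.\ --- the substitute $\eta_{K+k-1}$ depends on the adaptively chosen position --- and you repair this with a martingale/Azuma version of the one-sided concentration bound. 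The paper silently treats these items as having ``attractiveness at most $\alpha(\eta_K)$'' and applies the i.i.d.\ Hoeffding lemma; your treatment is the more careful one, and since only an upper bound on the conditional means is needed, the martingale argument does go through. Second, for the cost you count directly how often each non-$\bT$ item enters the top-$K$ via the same optimism argument against $\eta_{K-1}$, obtaining an explicit $O(L\log T)$ bound, whereas the paper bounds $\bbC(T)\le K\,\bE[\sum_t\bone\{\bR_t\backslash\bT\neq\emptyset\}]\le KR(T)/\Delta_{\min}$ and invokes the known sublinear regret of CascadeUCB1 on the perturbed instance. Your route gives a sharper, self-contained logarithmic cost bound; the paper's is shorter but only delivers $o(T)$ and leans on the external regret guarantee. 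Neither difference affects correctness.
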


After the above observation, one may wonder why we don't utilize click poisoning strategy to achieve the same goal of \texttt{GA}, we propose an motivated example.

\paragraph{Example 1.}
    Consider an example of the Cascade model, where the examination probability of an item in $\bR_t$
 is related to other items' attractiveness in $\bR_t$. We suppose the case when item $a_1\in\bT$
 is placed before item $a_2\not\in\bT$
 and the click feedback of item $a_1$ is $1$ (which implies the user will not examine the following items and thus true click feedback of $a_2$
 will be $0$). If the attacker trivially reduces the click feedback of all the items $\not\in\bT$
 to $0$ (which is a common strategy of attack on bandits \citep{jun2018adversarial,garcelon2020adversarial}), this can be interpreted as the attractiveness of item $a_2$
 is reduced to $0$. Since $a_1$
 is not clicked, the following items should be examined and the OLTR algorithm would recognize items placed after $a_1$
 (includes $a_2$) as $0$
 attractiveness. The click manipulation strategy clearly harms the attack in this cascade model example, making the attack results hard to be analyzed. According to this instance, existing reward (e.g., click) poisoning strategies on bandits can hard to be proved to succeed in different click models, as the clicks should be manipulated according to the property of the click model. However, our \texttt{GA} can adapt to stochastic click models for \emph{any no-regret ranker} and enjoys a simple theoretical characteristic.

\begin{remark}
The idea of \texttt{GA} against online stochastic rankers is similar to the previous reward poisoning attack idea against stochastic bandits, i.e., reduces the expected reward (i.e., clicks) of the non-target items and holds the expected reward of the target item. The main difference is 1) we enlarge our target from an item to a list; 2) we manipulate the ranking list $\bR_t$ instead of manipulating 'rewards'; 3) the attack is applied to the comprehensive stochastic click models. 
\end{remark}

\section{Attack-then-quit strategy}
    In this section, we provide a click poisoning attack strategy that applied to elimination-based rankers and divide-and-conquer-based rankers. We will demonstrate our strategy can \emph{efficiently attack} some specific rankers under any click models. Previous reward poisoning attack strategies on multi-armed bandits only manipulate the reward of the non-target item and never perturb the reward of the target item \citep{jun2018adversarial,liu2019data,garcelon2020adversarial,wang2022linear}. However, the attack strategy proposed in this section perturbs the click feedback of every item. The attacker would spend a linear cost in a short period $T_1 = o(T)$ and then stop the attack and quit. Thus, we call our attack strategy the attack-then-quit (\texttt{ATQ}) strategy. 
    
    As mentioned in the previous section, the BatchRank and TopRank can work in general stochastic click models, while other UCB-type algorithms can only work in specific click models (e.g., CascadeUCB1 or PBM-UCB). Hence, the BatchRank and TopRank can be deemed as the superior ranker to those UCB-type algorithms, which are worth more significant consideration. Moreover, to our best knowledge, current algorithms dedicate to the stochastic click models all follow the elimination/divide-and-conquer idea.

\subsection{Click poisoning-based \texttt{ATQ}}
Algorithm \ref{alg2} can efficiently attack elimination-based rankers (e.g., BatchRank) and divide-and-conquer-based rankers (e.g., TopRank). At the initialization stage, the attacker would receive target item $\tilde{a}$ and threshold $T_1$. In each round, if click feedback $\bc_{k}^t$ is returned from $\tilde{a}$ and $\tilde{a}\in \bR_t$, the attacker would set $\tilde{\bc}_{k}^t = 1$, otherwise, $\tilde{\bc}_{k}^t = 0$.  Besides, as we mention in Assumption 1 and Definition 1, the item $a_k\not\in\bR_t$ would not be observed by the user in round $t$, and would not be clicked either. This implies their click feedback would be 0. In ATQ, the attacker does not (cannot) modify the click feedback of items $a_k\not\in\bR_t$ 
 (line 9$\sim$10 in Algorithm 2). After scanning $\bC_t$, the attacker would return $\tilde{\bC}_t = (\tilde{\bc}_{1}^t,...,\tilde{\bc}_{L}^t)$ to the ranker. Note that $\sum_{k=1}^L \tilde{\bc}_k^t \le 1$ (line 4-10 in Algorithm \ref{alg2}), hence Algorithm \ref{alg2} can work stealthily and efficiently in different click models. Note that the general stochastic click models in \citep{zoghi2017onlineLT} and \citep{Lattimore2018TopRankAP} directly ensure $\bR^*$ can uniquely maximize the expected click number. Hence, the target of BatchRank and TopRank is to quickly estimate the attractiveness of items and rank them accordingly. Our attack Algorithm \ref{alg2} aims to mislead these rankers to 'believe' the target item is the best item with the smallest rounds. Once these rankers make this judgment, they would place the target item at the first position until $T$. In the following sections, we will demonstrate Algorithm \ref{alg2} can efficiently attack BatchRank and TopRank with suitable $T_1$.

 \begin{algorithm}[t]
\renewcommand{\algorithmicrequire}{\textbf{Input:}}
\renewcommand{\algorithmicensure}{\textbf{Output:}}
	\caption{Attack-Then-Quit (\texttt{ATQ}) Algorithm}
	\label{alg2}
	\begin{algorithmic}[1]
            \STATE \textbf{Inputs:} The target item $\tilde{a}$, threshold $T_1$
              \STATE \textbf{for} $t = 1 : T_1$ \textbf{do}
              \STATE \quad Observe $\bC_t = (\bc_{1}^t,...,\bc_{L}^t)$ and $\bR_t = (\ba_{1}^t,...,\ba_{K}^t)$
              \STATE \quad \textbf{for} $k = 1:L$ \textbf{do}
              \STATE  \quad\quad \textbf{if} $a_k = \tilde{a}$ and $a_k \in \bR_t$ \textbf{then}
              \STATE \quad \quad \quad Set $\tilde{\bc}_{k}^t = 1$
              \STATE \quad \quad \textbf{else if} $a_k \not = \tilde{a}$ and $a_k \in \bR_t$ \textbf{then}
              \STATE \quad \quad \quad Set $\tilde{\bc}_{k}^t = 0$
              \STATE \quad \quad \textbf{else}
              \STATE \quad \quad \quad Set $\tilde{\bc}_{k}^t = \bc_k^t$
              \STATE \quad Return $\tilde{\bC}_t = (\tilde{\bc}_1^t,...,\tilde{\bc}_L^t)$ to the ranker
	\end{algorithmic}  
\end{algorithm}

\subsection{Attack on BatchRank}

\paragraph{Brief explanation of BatchRank.}
 The BatchRank falls into the category of elimination-based algorithms \citep{zoghi2017onlineLT}. The BatchRank would begin with stage $\ell_1 = 0$ and the first batch $B_{1,\ell_1} = \bD$. In stage $\ell_1$, every item would be explored for $\bn_{\ell_1} = 16\tilde{\Delta}^{-1}_{\ell_1}\log(T)$ times and $\tilde{\Delta}^{-1}_{\ell_1} = 2^{-\ell_{1}}$. Afterward, if BatchRank collects enough information to confirm the attractiveness of every item in a group is larger than the others, BatchRank then splits the initial batch into two sub-batches $B_{2,\ell_{2}}$ and $B_{3,\ell_{3}}$.  Items that are considered with higher attractiveness (i.e., in sub-batch $B_{2,\ell_2}$) would be always placed before items with lower attractiveness (i.e., in sub-batch $B_{3,\ell_3}$). The BatchRank would restart with stage $\ell_{2} = 0$ and $\ell_{3} = 0$ and sub-batches $B_{2,\ell_{2}}$ and $B_{3,\ell_{3}}$. Batches would recursively split until round $T$.
 Intuitively, the action `split' of BatchRank is similar to the elimination action in the elimination-based bandit algorithms \citep{EvenDar2006ActionEA,lykouris2018stochastic,bogunovic2021stochastic}. The  details of BatchRank are provided in the appendix. 
 
  Algorithm \ref{alg2} can successfully attack BatchRank owing to BatchRank's elimination property. Algorithm \ref{alg2} maximizes the returned clicks of the target item and minimizes the returned click of the non-target item in a short period $o(T)$. After this period, BatchRank regards the target item owning the highest attractiveness (i.e., split). Then the target item would be always placed at the first position of the list $\bR_t$ and the attacker achieves the attack goal. We also provide an example in the appendix to further illustrate the process of Algorithm \ref{alg2} attack BatchRank.

\begin{theorem} [Attack against BatchRank] \label{theorem1} Consider the attacker aims to attack BatchRank with stochastic click models  \citep{zoghi2017onlineLT}. If the attacker runs Algorithm \ref{alg2} with $T_1 = 16L\log(T)$, we have $\bbN_T(\tilde{a}) \ge T - T_1$ and the cost can be bounded by $\bbC(T) \le KT_1$.
\end{theorem}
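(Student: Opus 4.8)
The plan is to exploit the elimination structure of BatchRank together with the distortion that Algorithm \ref{alg2} imposes on the click statistics during the initial window $t \le T_1$. The key observation is that ATQ forces the click feedback to report exactly one click on $\tilde{a}$ whenever $\tilde{a}\in\bR_t$ and zero clicks on every other displayed item. Thus, from BatchRank's point of view, during the attack phase the empirical attractiveness estimate of $\tilde a$ is driven toward its maximal value while the empirical estimate of every competing item is driven toward zero. First I would recall from the BatchRank specification that in the very first stage $\ell_1=0$ the single batch $B_{1,0}=\bD$ is explored with $\bn_{0}=16\tilde\Delta_0^{-1}\log(T)=16\log(T)$ samples per item (since $\tilde\Delta_0^{-1}=2^{0}=1$). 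Hence after at most $T_1 = 16L\log(T)$ rounds every item in $\bD$ has received its full stage-$0$ budget of observations, all collected under the poisoned feedback.

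The core step is to verify that at the end of this stage BatchRank's elimination (``split'') test separates $\tilde a$ from the rest of $\bD$ and places $\tilde a$ into the high-attractiveness sub-batch. Because every observation of $\tilde a$ returned a click and every observation of a non-target item returned no click, the empirical attractiveness gap between $\tilde a$ and any other item, as seen by BatchRank, is essentially $1$, which dominates the confidence-width threshold that BatchRank uses to certify a split at stage $\ell_1=0$. I would check that BatchRank's split criterion (the Kullback--Leibler or Hoeffding-type confidence bound used in \citep{zoghi2017onlineLT}) is met deterministically under the poisoned statistics, so that $\tilde a$ is isolated into a singleton (or top) sub-batch and is therefore placed at the first position of $\bR_t$ in every subsequent round. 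Once $\tilde a$ is committed to the top sub-batch, BatchRank never re-examines it against the eliminated items, so $\tilde a$ occupies position one for all of the remaining $T-T_1$ rounds, giving $\bbN_T(\tilde a)\ge T-T_1$.

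For the cost bound, I would note that Algorithm \ref{alg2} only modifies feedback during the $T_1$ attack rounds, and in each such round it alters at most the $K$ displayed entries of $\bC_t$ (the entries for items $a_k\notin\bR_t$ are left untouched, per lines 9--10). Therefore the per-round cost $\sum_{k=1}^L|\tilde\bc_k^t-\bc_k^t|$ is at most $K$, and summing over the $T_1$ attack rounds yields $\bbC(T)\le KT_1 = 16KL\log(T) = o(T)$, as claimed. The attack is stealthy because $\sum_{k}\tilde\bc_k^t\le 1$ in every round, consistent with both click models.

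The main obstacle I anticipate is the middle step: showing rigorously that the poisoned statistics force the desired split at stage $\ell_1=0$ rather than merely making it likely. This requires opening up BatchRank's exact elimination rule and confirming that, with the empirical attractiveness of $\tilde a$ forced to $1$ and the others to $0$, the confidence intervals used for splitting become disjoint after exactly $16\log(T)$ pulls per item. A subtle point is that the true (un-poisoned) click process for non-target items still governs nothing observable here, since ATQ overwrites all displayed feedback deterministically; the only randomness that survives is in which list $\bR_t$ BatchRank chooses to display, and I would need to argue that regardless of these display choices, each item nonetheless accumulates enough poisoned observations within the $T_1$ budget to trigger the split. Handling this scheduling argument carefully — that $T_1=16L\log(T)$ suffices to complete stage $\ell_1=0$ for every item under any admissible exploration order — is where the bulk of the technical care will lie.
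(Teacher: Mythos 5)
Your proposal follows essentially the same route as the paper's proof: the paper likewise notes that stage $\ell_1=0$ (with $\bn_0=16\log T$ samples per item) completes within $T_1=16L\log(T)$ rounds, explicitly computes the poisoned KL confidence bounds in Lemma \ref{lemma1} to show $L_{b,\ell_1}(\tilde a)$ exceeds $U_{b,\ell_1}(a_k)$ for every non-target item so that the first split deterministically isolates $\tilde a$ into a singleton sub-batch holding position one, and bounds the cost by $K$ per round over the $T_1$ attack rounds. The single step you defer --- opening up the KL-UCB/LCB formulas to verify the split condition holds deterministically under empirical estimates of $1$ and $0$ --- is precisely the content of that lemma, so your outline is sound and matches the paper's argument.
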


 Besides BatchRank, this attack idea can also be utilized to attack some rankers that do not belong to the elimination-based category, such as TopRank.

\subsection{Attack on TopRank}

\paragraph{Brief explanation of TopRank.}
TopRank is a divide-and-conquer-based ranker \citep{Lattimore2018TopRankAP}. It  begins with a blank graph $G_1 = \emptyset$. In round $t$, TopRank would establish blocks $\bP_{t1},...,\bP_{td}$ via graph $G_t$. The items in block $\bP_{t1}$ would be placed at the first $\vert \bP_{t1} \vert$ positions and the items in block $\bP_{t2}$ would be placed at the next $\vert \bP_{t2} \vert$ positions, and so on. During rounds $1$ to $T$, TopRank would explore items with blocks, collect click information and compare attractiveness between items in the same block. If the collected evidence is enough to let TopRank regards the attractiveness of item $a_i$ as larger than the attractiveness of item $a_j$, a directional edge $(a_j,a_i)$ would be established. This behavior is similar to the `split' action in BatchRank. Besides, graph $G_t$ would not contain cycles with high probability. If the graph contains at least one cycle, we consider TopRank would be out of control. Details of TopRank are provided in the appendix. 

Note that if there exist edges from every non-target item to the target item and $G_t$ contains no cycle, then the target item would be isolated from the non-target items and would always be placed at the first position of $\bR_t$. This is because the first block only contains the target item. We also provide an example to specifically explain how Algorithm \ref{alg2} attacks TopRank in the appendix.

\begin{theorem} [Attack against TopRank] \label{theorem2}
Consider the attacker aims to attack TopRank and TopRank runs in the click model in \citep{Lattimore2018TopRankAP}. If the attacker sets $T_1 = \frac{4\log(c/\delta)}{\frac{K}{L} + (1 - \sqrt{1 + 8K/L})/4}$ and runs Algorithm \ref{alg2}. The attacker can achieve $\bbN_T(\tilde{a}) \ge T - T_1$ with probability at least $1-(1/c+L^2)\delta$. The cost can be bounded by $\bbC(T) \le KT_1$.
\end{theorem}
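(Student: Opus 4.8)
The plan is to track how the forced feedback of \texttt{ATQ} drives the pairwise statistics that TopRank uses to insert edges, show that within $T_1$ rounds every non-target item acquires an edge pointing to $\tilde a$ (so that $\tilde a$ becomes the unique first block and is pinned to position $1$ forever after), and then bound the two failure modes and the cost separately.

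First I would record the effect of \texttt{ATQ} on TopRank's internal comparisons. For $t \le T_1$ the reported feedback is $\tilde{\bc}^t_k = 1$ for the position holding $\tilde a$ and $\tilde{\bc}^t_k = 0$ for every position holding a non-target. Recall that TopRank compares two items $a_i,a_j$ only while they share a block, accumulating $U_{ij}$ (a signed click difference) and $N_{ij}$ (the click total), and inserts the edge $(a_j,a_i)$ once $U_{ij} \ge \sqrt{2 N_{ij}\log(c/\delta)}$. Under \texttt{ATQ}, in any round in which $\tilde a$ and a non-target $a_j$ are simultaneously displayed from the same block, the pair contributes exactly $+1$ to both $U_{\tilde a,j}$ and $N_{\tilde a,j}$; no round ever decreases $U_{\tilde a,j}$, and any two non-targets contribute $0$ to their mutual statistics. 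Two deterministic consequences follow: (i) the reverse edge that would dominate $\tilde a$ can never be created, and no edge is ever created between two non-targets during the attack, so $G_t$ is a star oriented into $\tilde a$ and is in particular acyclic; and (ii) the edge $(a_j,\tilde a)$ appears as soon as the number $n_j$ of rounds in which $\tilde a$ and $a_j$ have been jointly displayed reaches $2\log(c/\delta)$, since then $n_j \ge \sqrt{2 n_j\log(c/\delta)}$.

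Next I would lower-bound the rate of these joint displays and convert it into a high-probability isolation guarantee. Because $\tilde a$ can never be dominated it stays maximal, hence in the first block, for the whole attack, while every non-target leaves the first block exactly when its edge to $\tilde a$ forms; thus the first block only shrinks, and the per-round probability that $\tilde a$ and a still-undominated $a_j$ both land in the top-$K$ display is minimized when the block is all of $\bD$. A combinatorial computation of this simultaneous-display probability (the relevant root of a quadratic in $K/L$) gives the lower bound $\rho := \tfrac{K}{L} + \tfrac{1}{4}\big(1-\sqrt{1+8K/L}\big)$. With $T_1 = 4\log(c/\delta)/\rho$ the expected number of joint displays for each fixed $a_j$ is at least $4\log(c/\delta)$, i.e.\ twice the threshold in (ii); a multiplicative Chernoff bound then shows $n_j < 2\log(c/\delta)$ for that pair with probability at most $(1/c)\delta/L$, and a union bound over the at most $L$ non-targets gives that all edges $(a_j,\tilde a)$ form within $T_1$ except with probability $(1/c)\delta$. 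I would also import TopRank's own concentration guarantee, which fails (a wrong edge or a cycle somewhere over the horizon) with probability at most $L^2\delta$ after a union bound over the $\binom{L}{2}$ pairs; on its complement the isolation of $\tilde a$ is preserved for all $t>T_1$, since edges are never deleted and $\tilde a$, already the lone first block, keeps being shown at position $1$.

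Finally I would assemble the pieces: off the two failure events, whose probabilities add to at most $(1/c + L^2)\delta$, the item $\tilde a$ is isolated by round $T_1$ and stays at position $1$ thereafter, so $\bbN_T(\tilde a) \ge T - T_1$. The cost bound is immediate, because \texttt{ATQ} alters feedback only during the first $T_1$ rounds and can change at most the $K$ displayed coordinates of each click vector, whence $\bbC(T) \le K T_1$. The main obstacle is the rate analysis in the third step: pinning down the combinatorial simultaneous-display probability and justifying the worst-case full-block lower bound $\rho$ as the block composition evolves, and then arranging the concentration-plus-union-bound bookkeeping so that it yields exactly the $(1/c + L^2)\delta$ failure probability while keeping $T_1 = o(T)$.
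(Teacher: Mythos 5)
Your skeleton matches the paper's: force $\tilde c^t=1$ on $\tilde a$ and $0$ on displayed non-targets, show every edge $(a_j,\tilde a)$ forms within $T_1$, note the reverse edges and non-target--non-target edges never form so $G_t$ stays acyclic and $\tilde a$ becomes the singleton first block, then add the $L^2\delta$ cycle probability to the concentration failure and bound the cost by $KT_1$. But your central rate analysis has a genuine gap. TopRank accrues $U_{tij}=\bc_i^t-\bc_j^t$ whenever $a_i$ and $a_j$ lie in the \emph{same block}, not only when both are displayed; since $\bc_j^t=0$ for every non-target whether or not it is shown, the pair $(\tilde a,a_j)$ gains $+1$ in \emph{every} round with $\tilde a\in\bR_t$. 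The paper therefore tracks the single count $\sum_t\bone\{\tilde a\in\bR_t\}$, all edges into $\tilde a$ form simultaneously once this count reaches $4\log(c/\delta)$, and no union bound over non-targets is needed. Your joint-display accounting both undercounts the increments and forces a per-pair Chernoff-plus-union-bound argument whose constants do not close: with $K=5$, $L=50$ the joint-display probability is about $\tfrac{K}{L}\cdot\tfrac{K-1}{L-1}\approx 0.0082$, while $\rho=\tfrac{K}{L}+\tfrac14(1-\sqrt{1+8K/L})\approx 0.0146$, so $\rho$ is \emph{not} the simultaneous-display probability you claim it is, and $T_1=4\log(c/\delta)/\rho$ would give you roughly $2.2\log(c/\delta)$ expected joint displays, nowhere near ``twice the threshold.''

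Two further corrections. First, the edge criterion is $S_{tij}\ge\sqrt{2N_{tij}\log\bigl(\tfrac{c}{\delta}\sqrt{N_{tij}}\bigr)}$, with $\sqrt{N_{tij}}$ inside the logarithm; the resulting threshold is $n\ge 4\log(c/\delta)$ (via $n/4>\log\sqrt{n}$), not the $2\log(c/\delta)$ you derive from the simplified bound $\sqrt{2N\log(c/\delta)}$. Second, the denominator $\rho$ comes from a Hoeffding calculation, not combinatorics: while $G_t=\emptyset$ one has $P(\tilde a\in\bR_t)=K/L$, and choosing the deviation $a$ so that $T_1(K/L-a)=4\log(c/\delta)$ and $e^{-T_1a^2/2}\le\delta/c$ forces $2a^2+a-K/L=0$, whose positive root $a=\tfrac14(\sqrt{1+8K/L}-1)$ yields exactly the stated $T_1$ and the single $\delta/c$ failure term. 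Once the count is right, the rest of your argument (acyclicity, persistence of the isolated first block, cost $\le KT_1$) goes through as in the paper.
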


By choosing $\delta = 1/T$ and $c = 4\sqrt{2/\pi}/\emph{erf}(\sqrt{2}) \approx 3.43$ which is same as in TopRank algorithm, we have $T_1 = O((L/K)\log T)$. The proof of Theorem \ref{theorem2} mainly focuses on how to bound the number of the target item to be placed in $\bR_t$ ($\sum_{s=1}^t\bone\{\tilde{a}\in\bR_s\}$) when $G_t = \emptyset$. Note that we can manipulate the click of the target item only if $\tilde{a} \in \bR_t$. Hence, we can deduce when are the edges from the non-target item to the target item established with $\sum_{s=1}^t\bone\{\tilde{a}\in\bR_s\}$. The probability of the attack failure is at most $L^2\delta + \delta/c$, where $L^2\delta$ is the intrinsic probability of TopRank's $G_t$ contains cycle and $\delta/c$ is the probability the attacker fails to bound $\sum_{s=1}^t\bone\{\tilde{a}\in\bR_s\}$ when $G_t(\tilde{a}) = \emptyset$.


\section{Experiments}
\begin{figure*}[t]
	\centering  
        \subfigbottomskip=0pt 
	\subfigcapskip=-5pt 
	\subfigure[Cost in CM.]{
\includegraphics[width=0.33\linewidth]{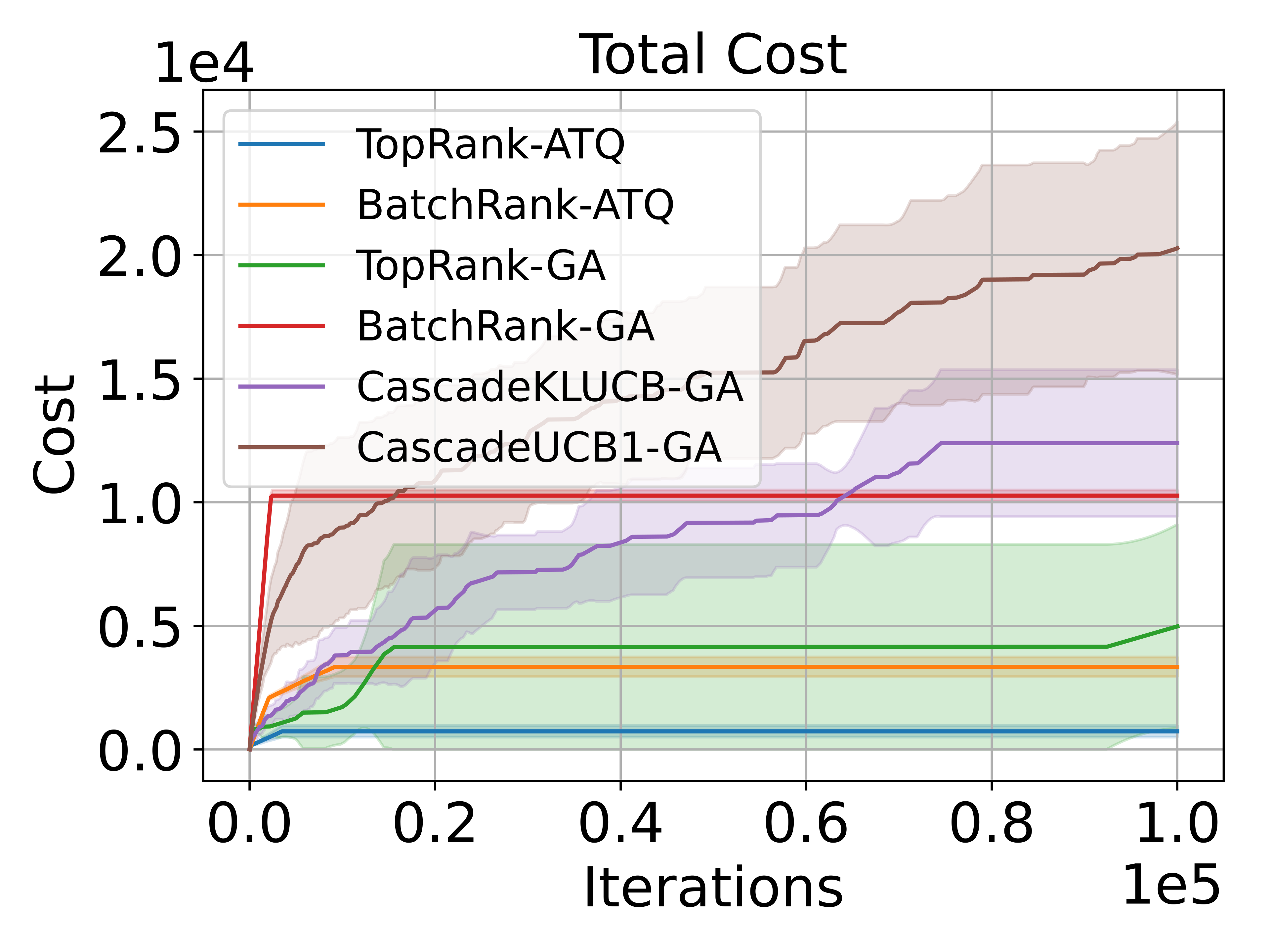}\label{fig:CM_synthetic_cost}}\quad\quad\quad\quad
	\subfigure[$\bbN_t(\tilde{a})$ in CM.]{
\includegraphics[width=0.33\linewidth]{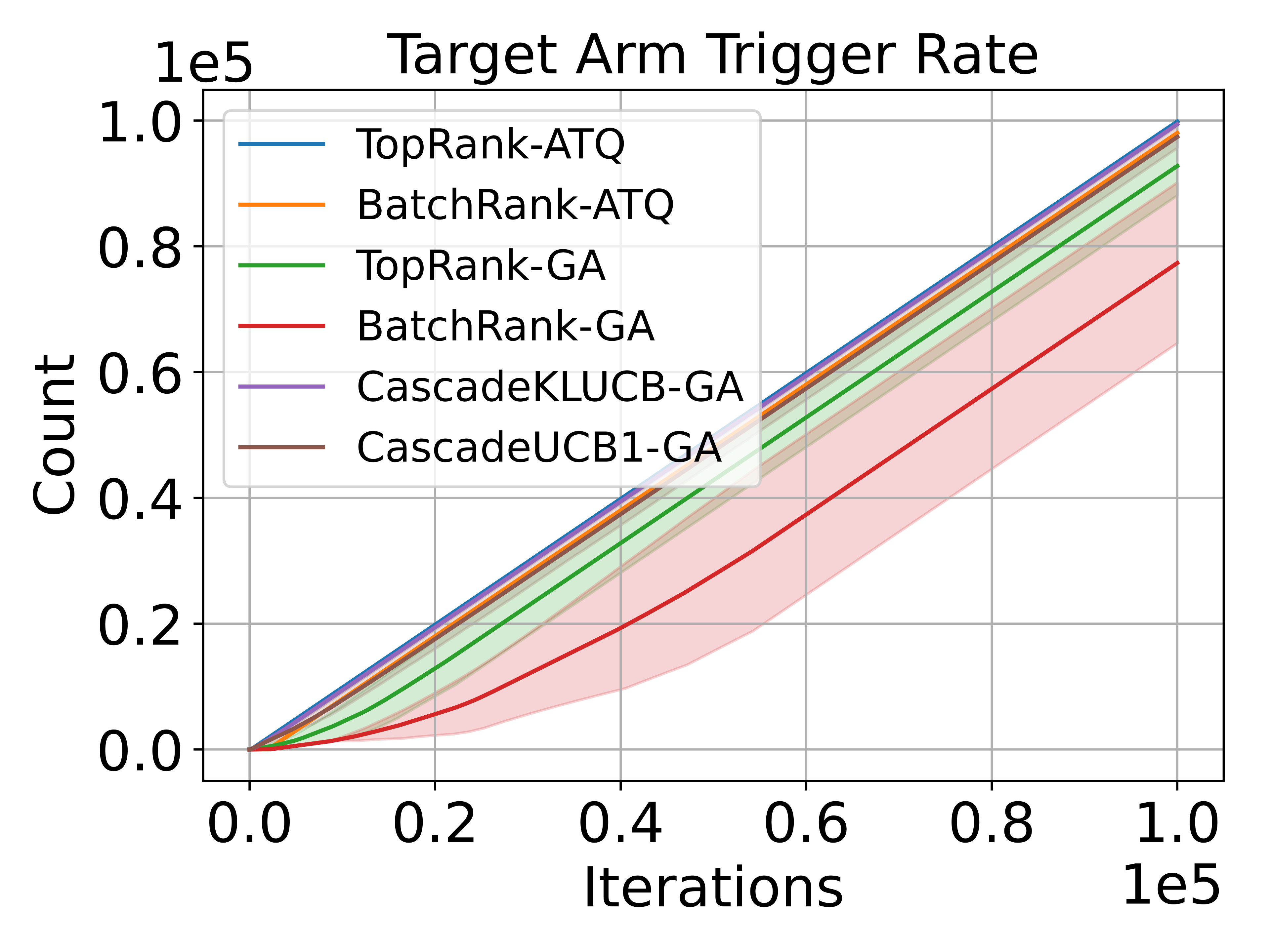}\label{fig:CM_synthetic_rate}}\\
\subfigure[Cost in PBM.]{
\includegraphics[width=0.33\linewidth]{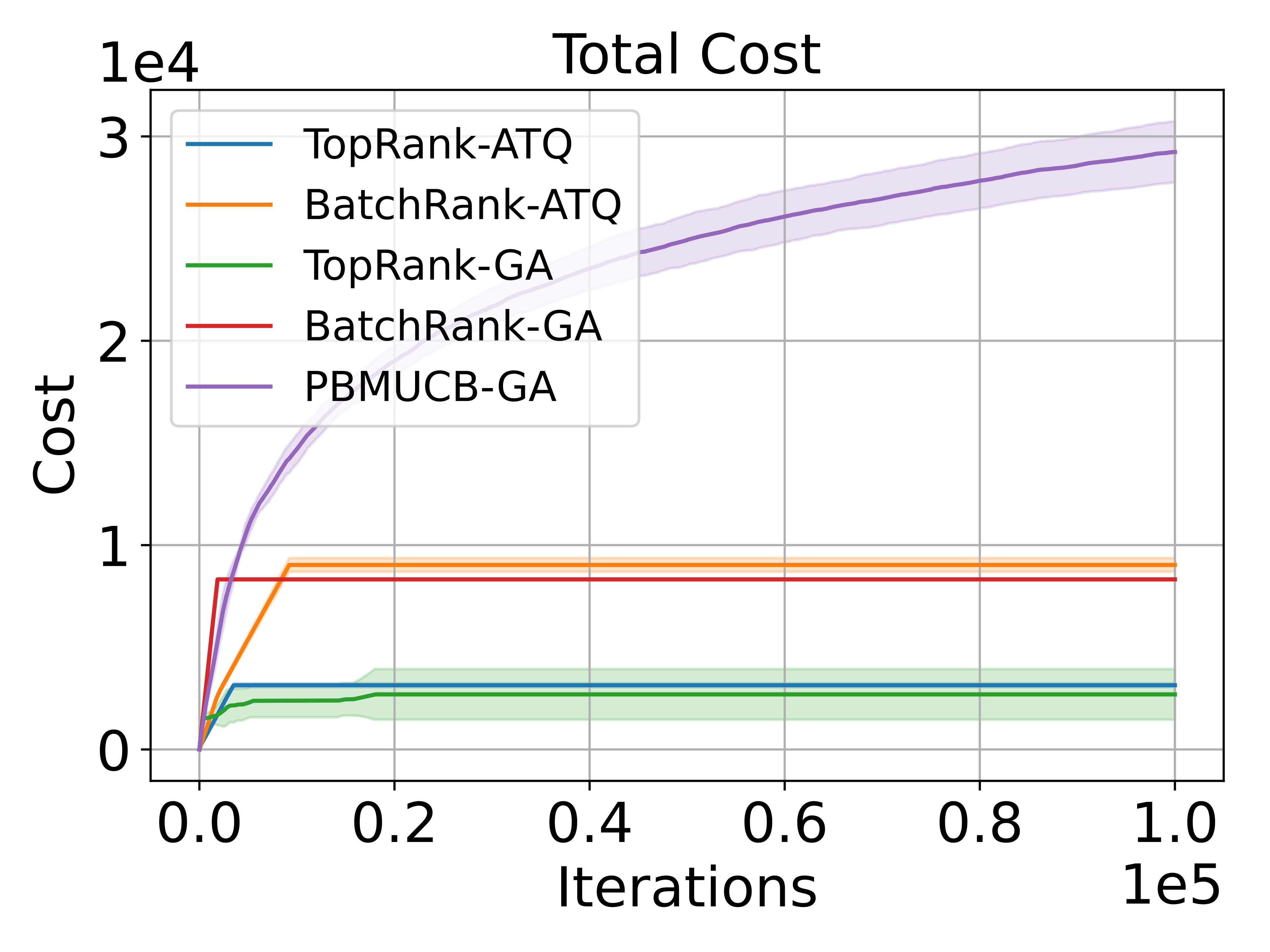}\label{fig:PBM_synthetic_cost}}\quad\quad\quad\quad
\subfigure[$\bbN_t(\tilde{a})$ in PBM.]{
\includegraphics[width=0.33\linewidth]{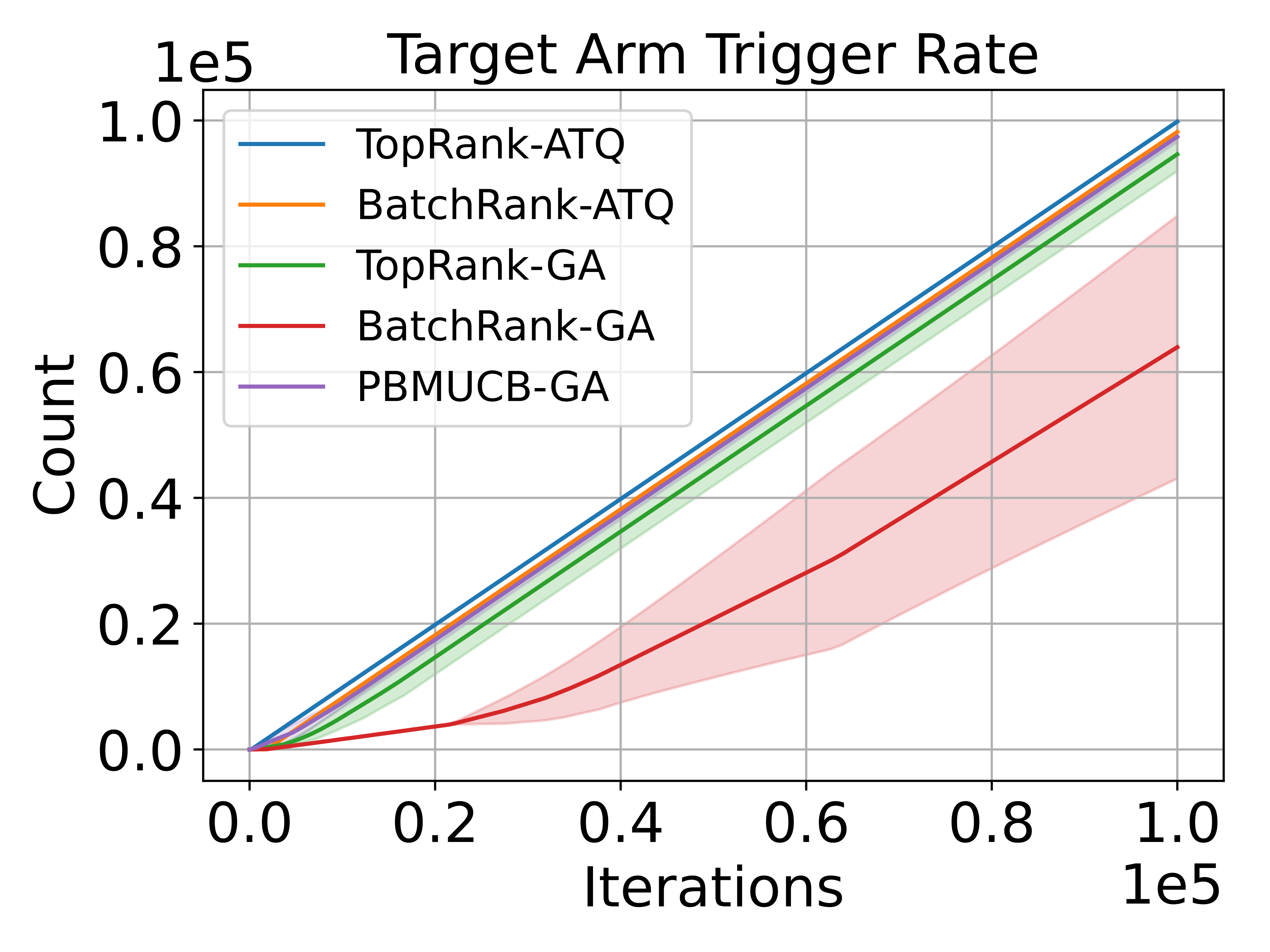}\label{fig:PBM_synthetic_rate}}
\caption{Synthetic data experiment: (a) total cost spend in the cascade model, (b) $\bbN_t(\tilde{a})$ in the cascade model, (c) the total cost spend in the cascade model and (d) $\bbN_t(\tilde{a})$ in the position-based model. We report averaged result
and variance of 10 runs.}
\label{fig:synthetic_results}
\end{figure*}

In the experiment section, we apply the proposed attack methods against the OLTR algorithms listed in Table \ref{tab:simulation results} with their corresponding click models. We compare the effectiveness of our attack on synthetic data and real-world MovieLens dataset. For all our experiments, we use $L=50$, $K=5$ (the set up of \citet{zoghi2017onlineLT,Lattimore2018TopRankAP} is $L=10$ and $K=5$) and $T = 10^5$. For ATQ, we set the $T_1$ in Algorithm \ref{alg2} by Theorem \ref{theorem1} and Theorem \ref{theorem2}.

\begin{table}
\centering
\caption{Target ranking algorithms and their applied click models}\label{tab:simulation results}
\begin{tabular}{ll}
   \toprule
   Algorithm & Click model\\
   \midrule
    BatchRank \cite{zoghi2017onlineLT} & Stochastic click model\\
    TopRank \cite{Lattimore2018TopRankAP} & Stochastic click model\\
    PBM-UCB \cite{Lagre2016MultiplePlayBI}& Position-based model\\
    CascadeUCB1 \cite{Kveton2015CascadingBL} & Cascade model \\
    CascadeKLUCB \cite{Kveton2015CascadingBL}& Cascade model\\
   \bottomrule
\end{tabular}
\end{table}

\subsection{Synthetic data}
First, we verify the effectiveness of our proposed attack strategies on synthetic data. We generate a size-$L$ item set $\bD$, in which each item $a_k$ is related to a unique attractiveness score $\alpha(a_k)$. Each attractiveness score  $\alpha(a_k)$ is drawn from a uniform distribution $U(0,1)$. We randomly select a suboptimal target item $\tilde{a}$. Figure \ref{fig:synthetic_results} shows the results and variances of 10 runs.

In Figures \ref{fig:CM_synthetic_cost} and \ref{fig:CM_synthetic_rate}, we plot the results of the \texttt{GA} against CascadeUCB1, CascadeKLUCB, BatchRank, and TopRank, and the \texttt{ATQ} against BatchRank and TopRank in the cascade model. Both attack strategies can efficiently misguide the rankers to place the target item at the first position for $T-o(T)$ times as shown in Figure \ref{fig:CM_synthetic_rate}, and the cost of the attack is sublinear as shown in Figure \ref{fig:CM_synthetic_cost}. The \texttt{GA} is cost-efficient when attacking all four algorithms. We can observe that when it attacks TopRank and BatchRank, the cost would not increase after some periods (similar to the \texttt{ATQ}'s results). This is when the TopRank and BatchRank believe the target item and the auxiliary items have a relatively higher attractiveness than the other items, they would only put the target item and the auxiliary items in $\bR_t$. Besides, when attacking TopRank and BatchRank, the growth rate of \texttt{GA}'s target arm pulls $N_t(\tilde{a})$ slowly increased from $0.2$ per iteration to 1 per iteration. This is because the \texttt{GA} does not manipulate the items in $\bT$ and the TopRank and BatchRank need time to confirm the target item has a higher attractiveness than $\{\eta_k\}_{k=1}^{K-1}$. Hence, the smaller the gap between $\tilde{a}$ and $\eta_1$, the larger the confirmed time. Compare with the \texttt{GA}, the \texttt{ATQ} can also efficiently attack BatchRank and TopRank with a sublinear cost. However, its $N_T(\tilde{a})$ is almost $T$, which is relatively larger than \texttt{GA}'s $N_T(\tilde{a})$. This is because the \texttt{ATQ} is specifically designed for divide-and-conquer-based algorithms like TopRank and BatchRank. The \texttt{ATQ} can maximize the target item's click number and misguide these algorithms to believe the target item is the best in the shortest period. 

Figures \ref{fig:PBM_synthetic_cost} and \ref{fig:PBM_synthetic_rate} report the results in the position-based model. We can observe that the spending cost of the \texttt{GA} on the PBM-UCB is slightly larger than the spending cost on the CascadeKLUCB and CascadeUCB1. Besides, although the \texttt{GA} can let the TopRank believe the target item is the best item in almost 500 iterations, it still needs a large number of iterations (around $6\times10^4$ iterations) to make the BatchRank make such a decision. From the results of the two models, the \texttt{ATQ} is obviously more effective than the \texttt{GA} when the target algorithms are TopRank and BatchRank.

Due to the page limitation, the experiment results based on real-world data are provided in the appendix.


\section{Related Work}

\paragraph{Online learning to rank.}
OLTR is first studied as ranked bandits \citep{Radlinski2008HowDC, Slivkins2013RankedBI}, where each position in the list is modeled as an individual multi-armed bandits problem \citep{auer2002finite}. 
Such a problem can be settled down by bandit algorithms which can maximize the expected click number in each round. Recently studied of OLTR focused on different click models \citep{Craswell2008AnEC, Chuklin2015ClickMF}, including the cascade model \citep{Kveton2015CascadingBL, Kveton2015CombinatorialCB, Zong2016CascadingBF, li2016contextual,Vial2022MinimaxRF}, the position-based model \citep{Lagre2016MultiplePlayBI} and the dependent click model \citep{Katariya2016DCMBL, Liu2018ContextualDC}. OLTR with general stochastic click models is studied in \citep{zoghi2017onlineLT, Lattimore2018TopRankAP, Li2018BubbleRankSO,Li2019OnlineLT,Gauthier2022UniRankUB}. 

\paragraph{Adversarial attack against bandits.}
Adversarial reward poisoning attacks against multi-armed bandits have been recently studied in stochastic bandits \citep{jun2018adversarial,liu2019data,Xu2021ObservationFreeAO} and linear bandits \citep{wang2022linear, garcelon2020adversarial}. These works share a similar attack idea, where the attacker holds the reward of the target arm, meanwhile lowers the reward of the non-target arm. Besides reward poisoning attacks, other threat models such as action poisoning attacks \citep{Liu2020ActionManipulationAO, Liu2021EfficientAP} were also being studied. However, adversarial attack on online ranking problem has not been explored yet. In this paper, we first time studied click poisoning attacks and list poisoning attacks against OLTR algorithms. Our click poisoning attacks share the same threat model as reward poisoning attacks, and list poisoning attacks follow a similar idea as action poisoning attacks against multi-armed bandits. 

\section{Conclusion}
 In this paper, we proposed the first study of adversarial attacks on online learning to rank. Different from the poisoning attacks studied in the multi-armed bandits setting where reward or action is manipulated, the attacker manipulates binary click feedback instead of reward and item list instead of a single action in our model. In addition, due to the interference of the click models, it is difficult for the attacker to precisely control the ranker behavior under different unknown click models with simple click manipulation. Based on this insight, we developed the \texttt{GA} that can efficiently attack any no-regret ranking algorithm. Moreover, we also proposed the \texttt{ATQ} that follows the click poisoning idea, which can efficiently attack BatchRank and TopRank. Finally, we presented experimental results based on synthetic data and real-world data that validated the cost-efficient and effectiveness of our attack strategies. 

 In our future work, it is interesting to study the adversarial attack on online learning to rank where the target is a list instead of a single item. Another intriguing direction is to establish robust rankers against poisoning attacks. In the ideal case, the robust ranker should achieve sublinear regret in general stochastic click models under different threat models. 

\bibliography{main}

\appendix
\onecolumn 
\section{Notations}
For clarity, we collect the frequently used notations in this paper.

\begin{center}
\begin{tabular}{l|l}
    $\bD$ & Total item set\\
    $\bR_t$ & $K$-length item list be shown to the user in round $t$\\
    $\bR^*$ & Optimal list\\
    $\tilde{\bR}_t$ & Manipulated list in round $t$\\
    $\bC_t$ & Click feedback list in round $t$\\
    $\tilde{\bC}_t$ & Manipulated click feedback list in round $t$\\
    $\bT$ & Ordered list $(\tilde{a},\bar{a}_1,...,\bar{a}_K)$\\
    $\tilde{a}$ & Target item \\
    $a_k$ & $k$-th most attractive item in $\bD$\\
    $\bar{a}_k$ & $k$-th most attractive auxiliary item\\
    $\eta_k$ & Particular item in the list poisoning attack\\
    $\alpha(a_k)$ & Attractiveness of item $a_k$\\
    $\ba_k^t$ & Item on the $k$-th position in $\bR_t$\\
    $\tilde{\ba}_k^t$ & Manipulated item on the $k$-th position in $\tilde{\bR}_t$\\
    $\bc_k^t$ & Click feedback of item $a_k$ in round $t$ \\
    $\tilde{\bc}_k^t$ & Manipulated click feedback of the item $a_k$ in round $t$ \\
    $v(\bR_t,\ba_k^t,k)$ & Click probability of item at the $k$-th position in round $t$\\
    $R(T)$ & Cumulative regret in $T$ rounds\\
    $\bbC(T)$ & Total cost in $T$ rounds\\
    $\bbN_t(a_k)$ & Number of item $a_k$ be placed at the first position in $t$ rounds\\
    $\bN_t(a_k)$ & Number of item $a_k$ be examined in $t$ rounds\\
    $T$ & Total number of interaction\\
    $T_1$ & Input threshold value of the attack-then-quit algorithm\\
    \textbf{BatchRank} & \\
    $b$ & Batch index\\
    $\ell$ & Stage index\\
    $B_{b,\ell}$ & $b$-th batch explored in stage $\ell$\\
    $\bn_\ell$ & Exploration number of item in batch $B_{b,\ell}$ in stage $\ell$\\
    $\bc_{b,\ell}(a_k)$ & Total received click number of item $a_k$ during stage $\ell$\\
    $\hat{\bc}_{b,\ell}(a_k)$ & Attractiveness estimator of item $a_k$ in stage $\ell$\\
    $U_{b,\ell}(a_k)$ & Upper confidence bound of item $a_k$ in stage $\ell$\\
    $L_{b,\ell}(a_k)$ & Lower confidence bound of item $a_k$ in stage $\ell$\\
    \textbf{TopRank} & \\
    $G_t$ & Auxiliary graph in round $t$\\
    $(a_j,a_i)$ & Directional edge from item $a_j$ to item $a_i$\\
    $\bP_{tc}$ & $c$-th block in round $t$\\
    $S_{tij}$ & Sum of the $U_{tij}$ from round $1$ to $t$\\
    $N_{tij}$ & Sum of the absolute value of $U_{tij}$ from round $1$ to $t$
\end{tabular}
\end{center}

\newpage

\section{Additional experiment}

\begin{figure*}
	\centering  
	\subfigbottomskip=0pt 
	\subfigcapskip=-5pt 
	\subfigure[Cost in CM.]{
\includegraphics[width=0.4\linewidth]{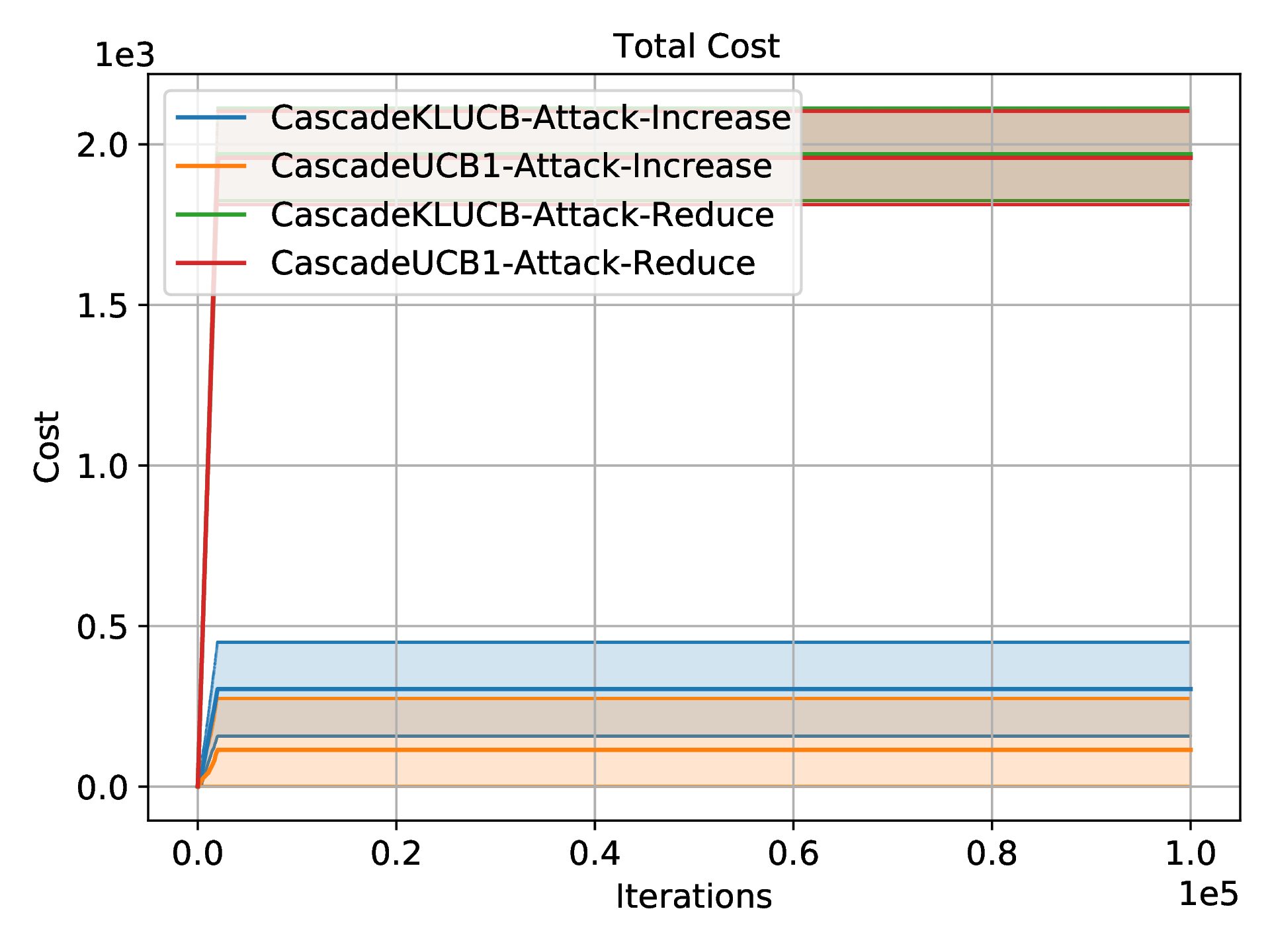}\label{fig:additional_baseline_cascade_cost}}\quad\quad\quad\quad
	\subfigure[$\bbN_t(\tilde{a})$ in CM.]{
\includegraphics[width=0.4\linewidth]{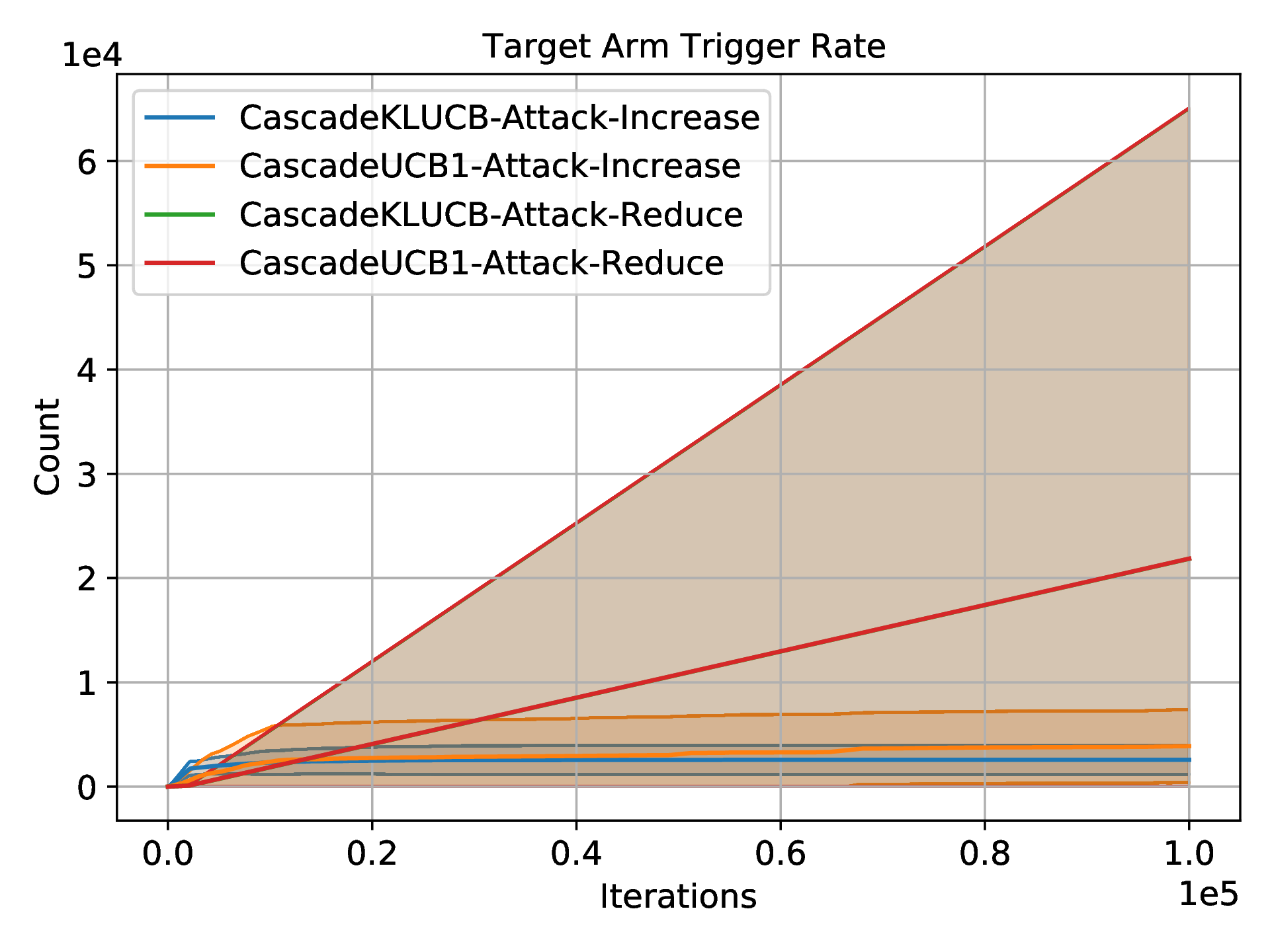}\label{fig:additional_baseline_cascade_rate}}\\
\subfigure[Cost in PBM.]{
\includegraphics[width=0.4\linewidth]{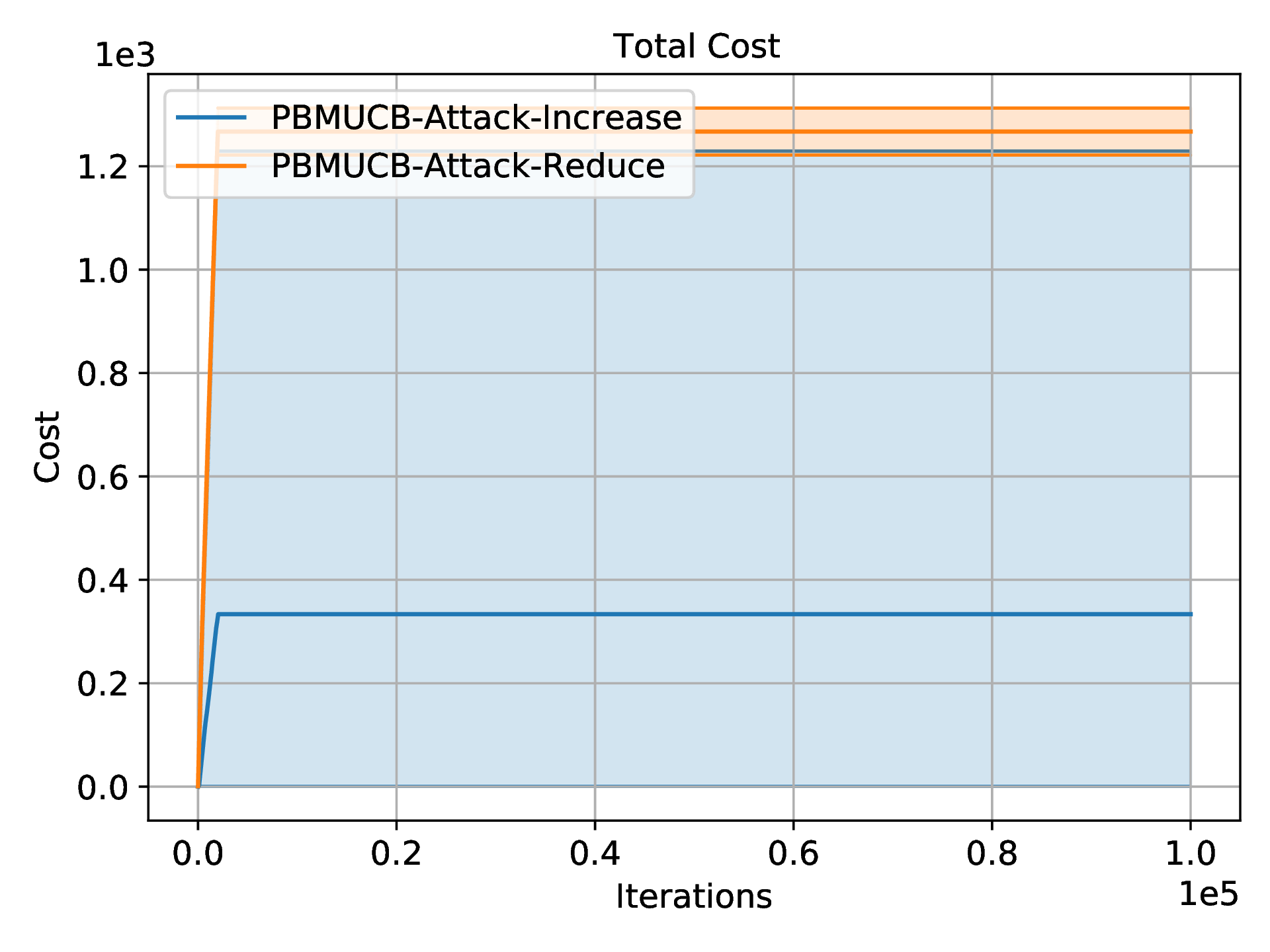}\label{fig:additional_baseline_pbm_cost}}\quad\quad\quad\quad
\subfigure[$\bbN_t(\tilde{a})$ in PBM.]{
\includegraphics[width=0.4\linewidth]{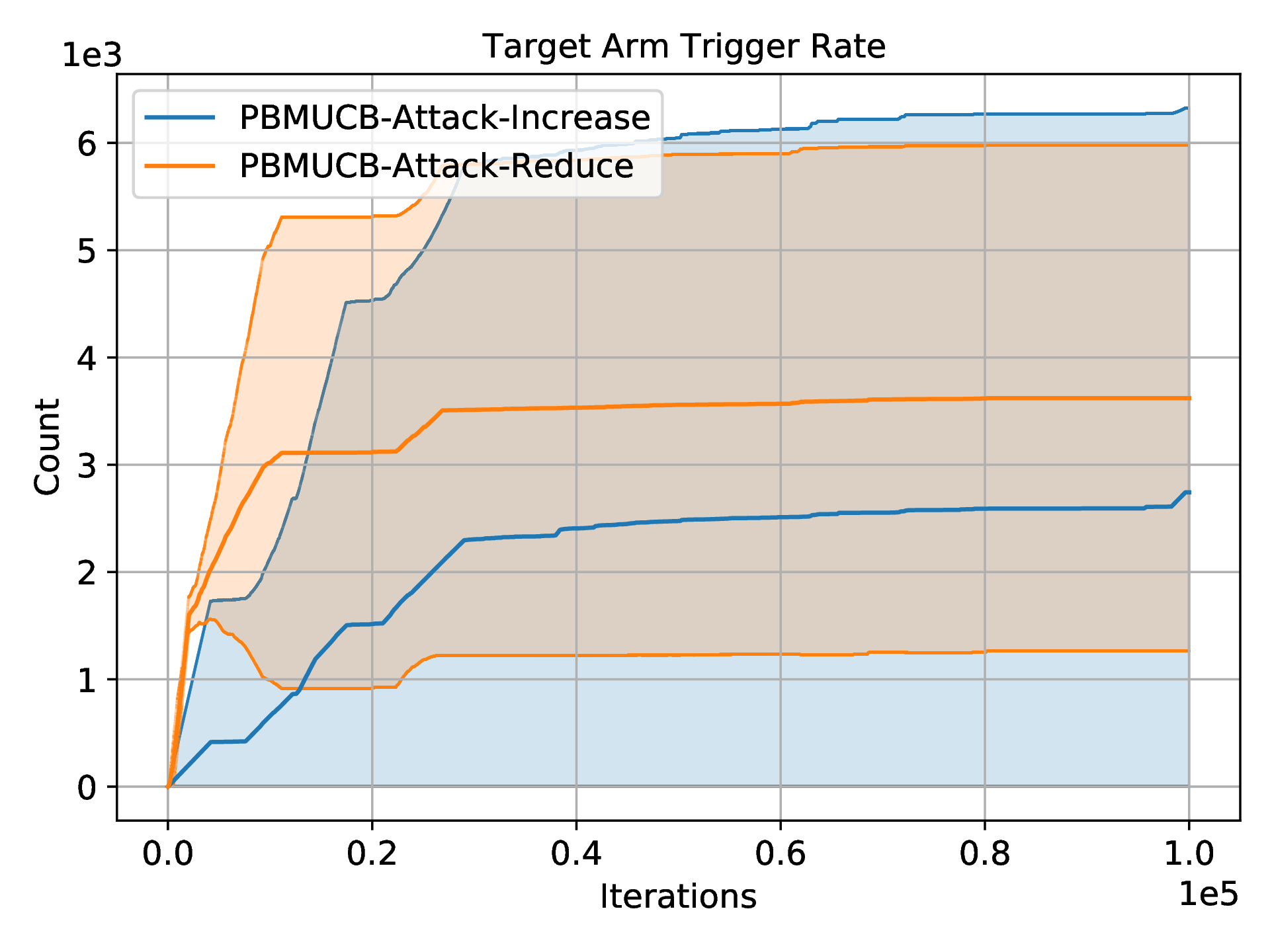}\label{fig:additional_baseline_pbm_rate}}
\caption{ Additional baseline: (a) the total cost spend in the cascade model, (b) $\bbN_t(\tilde{a})$ in the cascade model, (c) the total cost spend in the cascade model and (d) $\bbN_t(\tilde{a})$ in the position-based model. We report averaged result
and variance of 10 runs.}
\label{fig:additional_baseline}
\end{figure*}

\begin{figure*}
	\centering  
	\subfigbottomskip=0pt 
	\subfigcapskip=-5pt 
	\subfigure[Cost in CM.]{
\includegraphics[width=0.43\linewidth]{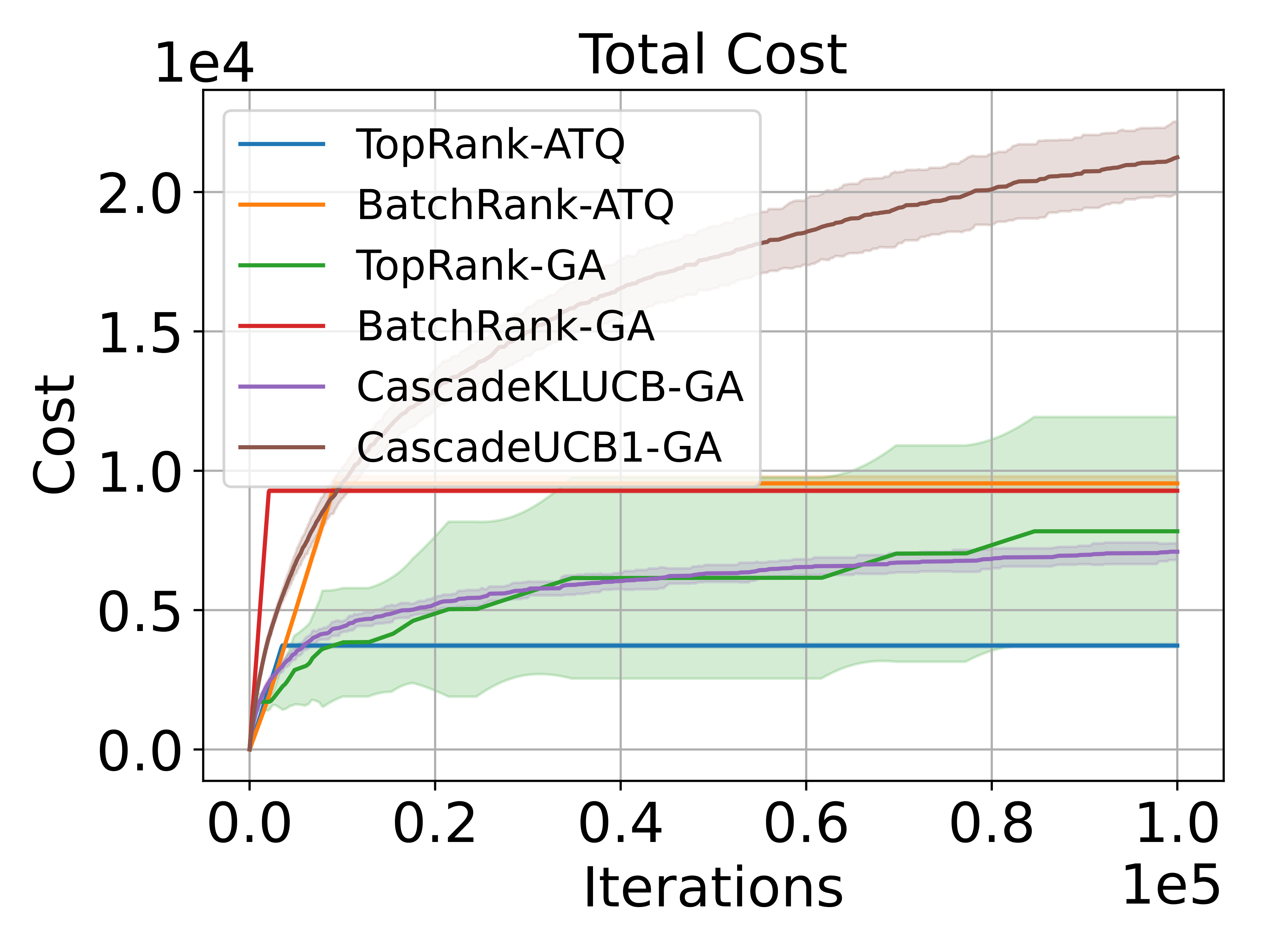}\label{fig:CM_movielens_cost}}\quad\quad\quad\quad
	\subfigure[$\bbN_t(\tilde{a})$ in CM.]{
\includegraphics[width=0.43\linewidth]{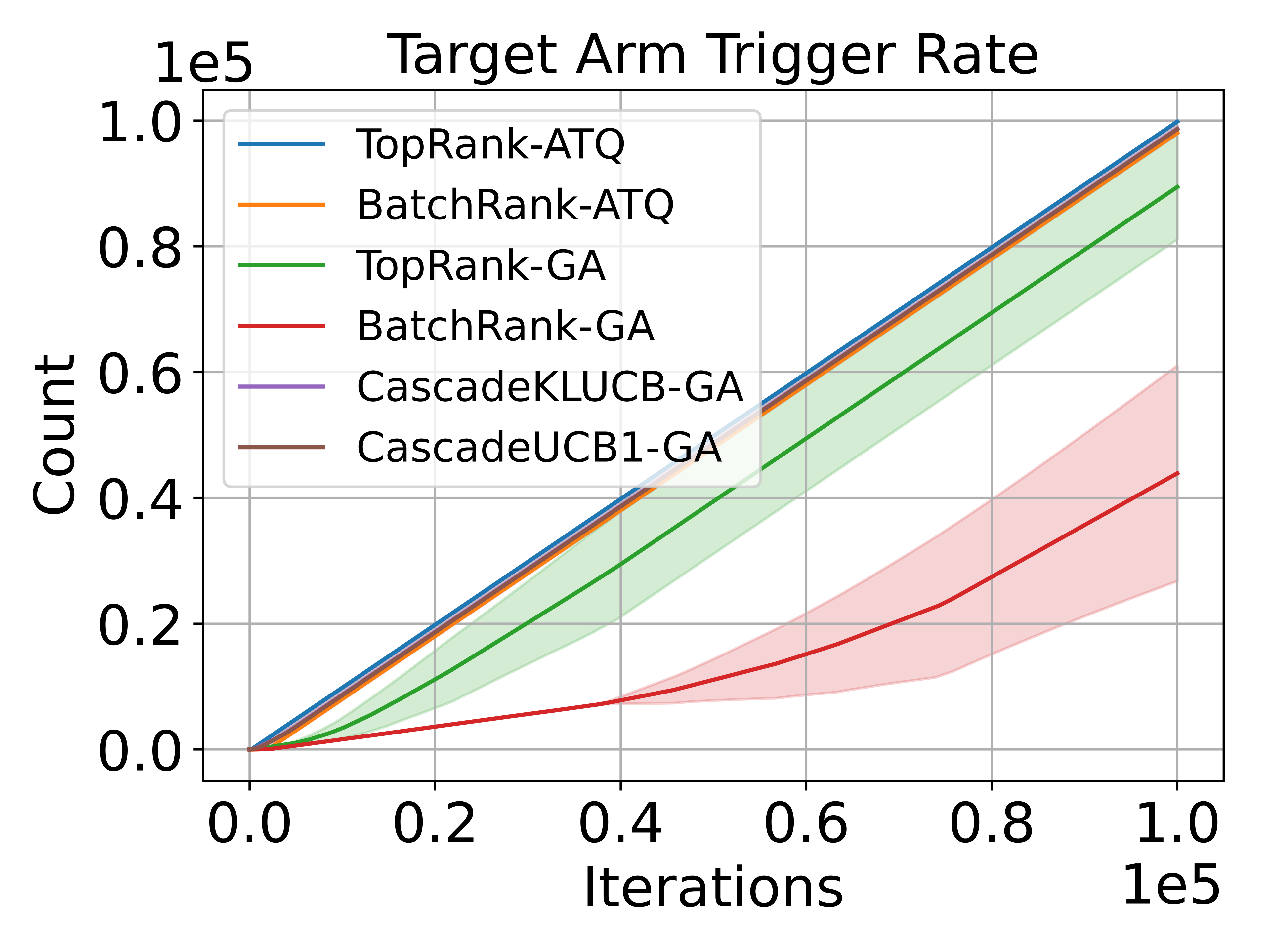}\label{fig:CM_movielens_rate}}\\
\subfigure[Cost in PBM.]{
\includegraphics[width=0.43\linewidth]{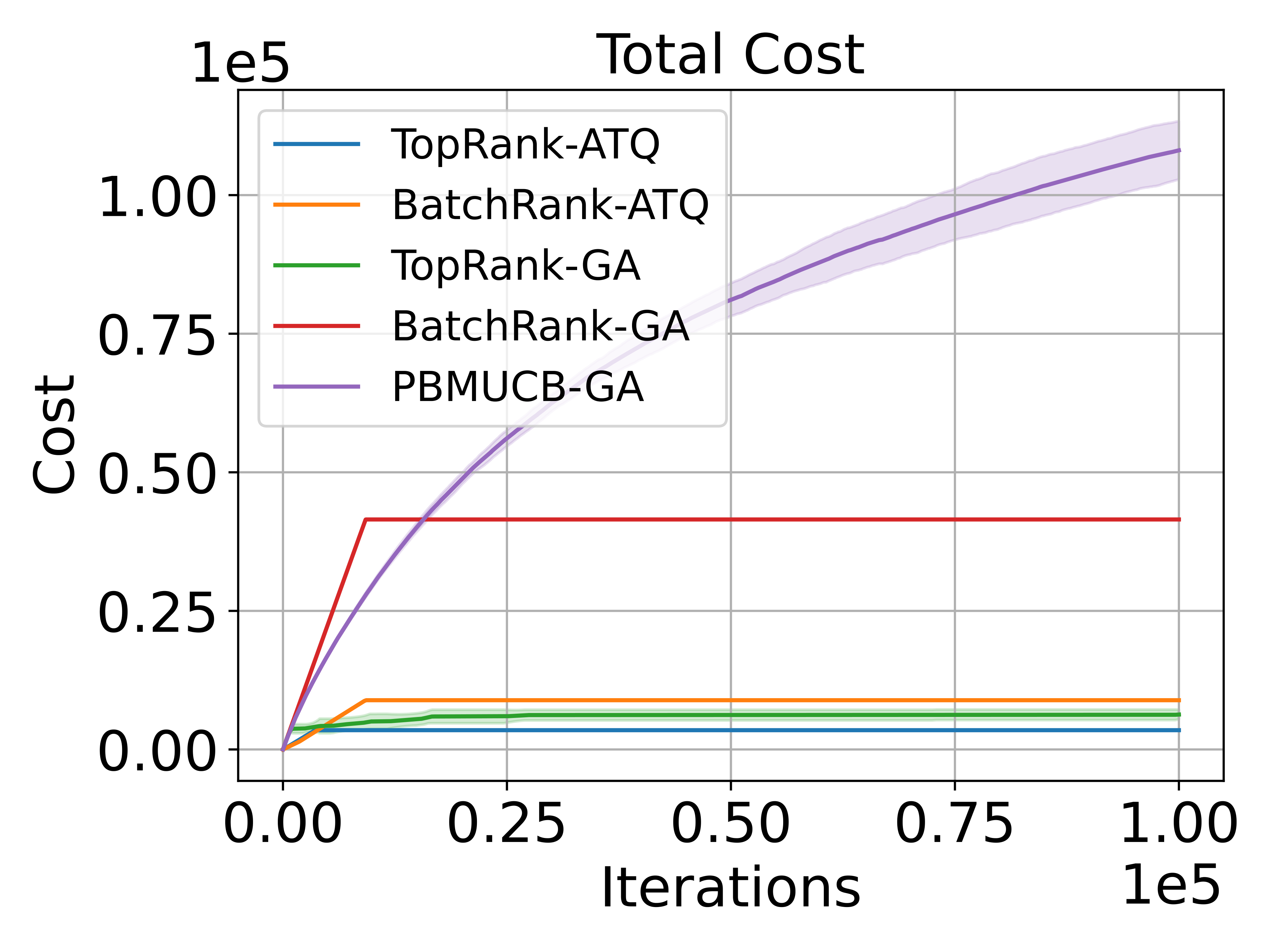}\label{fig:PBM_movielens_cost}}\quad\quad\quad\quad
\subfigure[$\bbN_t(\tilde{a})$ in PBM.]{
\includegraphics[width=0.43\linewidth]{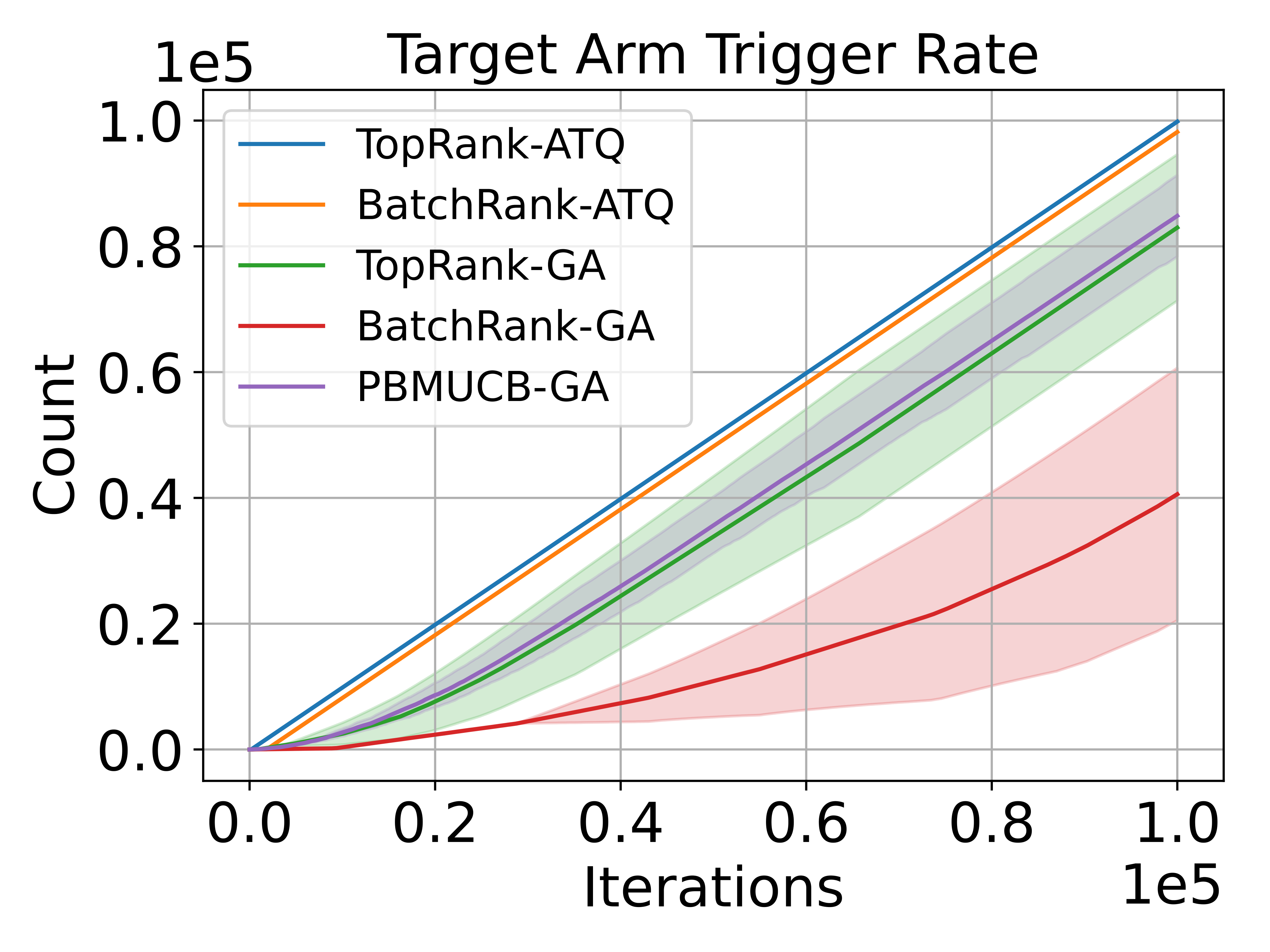}\label{fig:PBM_movielens_rate}}
\caption{ MovieLens experiment: (a) the total cost spend in the cascade model, (b) $\bbN_t(\tilde{a})$ in the cascade model, (c) the total cost spend in the cascade model and (d) $\bbN_t(\tilde{a})$ in the position-based model. We report averaged result
and variance of 10 runs.}
\label{fig:movielens_results}
\end{figure*}

\subsection{Additional  experiments on synthetic data}
Since we propose the first attack against OLTR, there are no existing baseline attack strategies in the literature to compare with. Nevertheless, We build two additional simple click-poisoning attack strategies as baselines. The setting of this experiment is the same as our  experiment on synthetic data in Section 5.
The first baseline directly reduces all the non-target item's click feedback to $0$ in the first two thousand rounds (indexed by Attack-reduce). The second baseline directly increases the click feedback of the target item to $1$ when the target item is in $\bR_t$ in the first two thousand rounds. These two attack strategies are tested to attack OLTR algorithms CascadeUCB1, KL-CascadeUCB, and PBM-UCB. The attack results are provided in Figure \ref{fig:additional_baseline}.

From Figure \ref{fig:additional_baseline}, we can observe that although baseline attacks achieve a sublinear cost (due to their stopping attack at around $2000$), none of them can fool at least one of the target algorithms to place the target item at the top position for $T-o(T)$ times. However, our attack strategies (including \texttt{GA} and \texttt{ATQ}) can efficiently attack all of these target OLTR algorithms, which are shown in Section 5. 

\subsection{Experiments on real-world data}
We also evaluate the proposed attacks on MovieLens dataset \citep{Harper2016TheMD}. We first split the dataset into train and test data subsets. Using the training data, we compute a $d$-rank SVD approximation, which is used to compute a mapping from movie rating to the probability that a user selected at random would rate the movie with 3 stars or above. We use the learned probability to simulate user's clicks given the ranking list.  We refer the reader to the Appendix C of \citep{Vial2022MinimaxRF} for further details. Figure \ref{fig:movielens_results} shows the attack results of our attack strategy averaged over $10$ rounds.

 We can observe that the trends in Figure \ref{fig:movielens_results} are similar to those in Figure \ref{fig:synthetic_results}, and the two attack algorithms are again able to efficiently fool the OLTR algorithms. In the cascade model, we see that successfully attacking CascadeKLUCB, TopRank, and BatchRank with \texttt{GA} only needs a relatively low cost, and the cost is higher when the target is CascadeUCB1. Besides, the \texttt{ATQ} strategy can still outperform the \texttt{GA} in $N_T(\tilde{a})$ when the target algorithms are TopRank and BatchRank. In the position-based model, the results are similar to the results in the cascade model, and the cost spent in the PBM-UCB is larger than the cost spent in the other algorithms.

\newpage

\section{Proof of Theorem \ref{proposition1}} 
 Recall the definition of the no-regret ranker, we can derive that the item with the highest attractiveness would be placed at the first position of $\bR_t$ for $T-o(T)$ times, otherwise, the regret would be linear. The reader should remember when the attacker implements Algorithm \ref{alg1}, the optimal list becomes $\bR^* = \bT = (\tilde{a},\eta_1,...,\eta_{K-1})$ due to the attractiveness of items belong to $\tilde{\bD}\backslash\bT$ is smaller than $\alpha(\eta_{K-1})$ (i.e., $\alpha(\eta_K) < \alpha(\eta_{K-1}) \le \alpha(\tilde{a})$). Based on Definition \ref{supdefinition} and Assumption \ref{supassumption}, the target item $\tilde{a}$ has the highest attractiveness and we can derive $\bE[\bbN_T(\tilde{a})] =  T-o(T)$.

Besides, according to the line 4 of the Algorithm \ref{alg1}, the cost of Algorithm \ref{alg1} can be bounded by
\begin{align}
     \bbC(T) = \bE\bigg[\sum_{t=1}^T\sum_{k=1}^K \bone \{ \ba_k^t \in \tilde{\bD} \backslash \bT \}\bigg] \le K\bE\bigg[\sum_{t=1}^T \bone \{ \bR_t \backslash \bT \not = \emptyset \}\bigg].
\end{align}
Due to the optimal list becomes $\bT$ during the attack, if $\bR_t\backslash\bT\not = \emptyset$, the per step regret is at least $\Delta_{\min} = \min_{\bR\in \Pi_K(\Lambda)}(\sum_{k = 1}^K v(\bT,\ba_k^t,k) - \sum_{k = 1}^K v(\gR,\ba_k^t,k)) > 0$, where $\Lambda$ consists of $\bT$ and $L+K-1$ items with attractiveness smaller equals then $\alpha(\eta_K)$. The cost can be bounded by
\begin{align}
    \bbC(T) \le K\bE\bigg[\sum_{t=1}^T \bone \{ \bR_t \backslash \bT \not = \emptyset \}\bigg] \le \frac{KR(T)}{\Delta_{\min}}.
\end{align}
Therefore, if the target ranker can achieve a sublinear regret $R(T) = o(T)$ in its click model under Assumption \ref{supassumption} (the definition of the no-regret ranker), the cost of Algorithm \ref{alg1} would be sublinear. According to Definition \ref{definition1} and our deduction, we can conclude that if a ranker belongs to no-regret rankers, it can be efficiently attacked by Algorithm \ref{alg1}. Here finish the proof of Theorem \ref{proposition1}.

\newpage

\section{Proof of Theorem \ref{suptheorem}}

\subsection{Introduction of CascadeUCB1}
The pseudo-code of the CascadeUCB1 is provided as follows.
\begin{algorithm}
\renewcommand{\algorithmicrequire}{\textbf{Input:}}
\renewcommand{\algorithmicensure}{\textbf{Output:}}
	\caption{The CascadeUCB1 \citep{Kveton2015CascadingBL}}
	\label{alg4}
	\begin{algorithmic}[1]
            \STATE \textbf{Input:} Item set $\bD$
            \STATE \textbf{for} $k=1:L$ \textbf{do}
             \STATE \quad
             Explore item $a_k$ and derive $\bc_k^0$
            \STATE \quad Set $\bN_0(a_k) = 1$ and $\hat{\alpha}_{1}(a_k) = \bc_k^0$
            \STATE \textbf{for} $t=1:T$ \textbf{do}
            \STATE \quad \textbf{for} $k=1:L$ \textbf{do}
            \STATE \quad \quad Compute $UCB_t(a_k)$
            \STATE \quad Let $\ba_1^t,...,\ba_K^t$ be $K$ items with largest UCBs and set $\bR_t = (\ba_1^t,...,\ba_K^t)$
            \STATE \quad Observe click feedback $\bC_t$
            \STATE \quad \textbf{for} $k=1:K$ \textbf{do}
            \STATE \quad \quad \textbf{if} $\ba^t_k$ is clicked \textbf{then}
            \STATE \quad\quad\quad Set $s = k$
            \STATE \quad \textbf{for} $k=1:L$ \textbf{do}
            \STATE \quad\quad Set $\bN_t(a_k) = \bN_{t-1}(a_k)$
            \STATE \quad \textbf{for} $k = 1:s$ \textbf{do} 
            \STATE \quad \quad Set $\bN_t(\ba_k^t) = \bN_{t}(\ba_k^t) + 1$
            \STATE \quad \quad $\hat{\alpha}_{\bN_t(\ba_k^t)}(\ba_k^t) = \frac{\bN_{t-1}(\ba_k^{t})\hat{\alpha}_{\bN_{t-1}(\ba_k^{t})}(\ba_k^{t}) + \bone\{s=k\}}{\bN_t(\ba_k^t)}$
	\end{algorithmic}  
\end{algorithm}

We let $\bN_t(a_k)$ denotes the number of item $a_k$ be examined till round $t$. The upper confidence bound  is defined as $UCB_t(a_k) = \hat{\alpha}_{\bN_{t-1}(a_k)}(a_k) + 3\sqrt{(\log(t-1))/\bN_{t-1}(a_k)}$.

\subsection{Proof of Theorem \ref{suptheorem}}

The proof of Theorem \ref{suptheorem} relies on the following lemmas.

\begin{lemma} [The Hoeffding inequality] Let $X_1, X_2,...,X_n$ i.i.d drawn from a Bernoulli distribution, $\bar{X} = \frac{1}{n}\sum_{i=1}^n X_i$ and $\bE[X]$ be the mean, then 
\begin{align}
    P(\bar{X} - \bE[X] \le -a) \le e^{-na^2/2}.
\end{align}
\end{lemma}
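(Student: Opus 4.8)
The plan is to establish this by the standard Chernoff (exponential moment) method, which in fact yields the sharper bound $e^{-2na^2}$; since $e^{-2na^2}\le e^{-na^2/2}$, the stated inequality follows at once. First I would pass from the empirical mean to the sum $S_n=\sum_{i=1}^n X_i$, noting that the event $\{\bar{X}-\bE[X]\le -a\}$ is the same as $\{-(S_n-n\bE[X])\ge na\}$. For any free parameter $s>0$, applying Markov's inequality to the nonnegative variable $e^{-s(S_n-n\bE[X])}$ gives
\begin{align}
P(\bar{X}-\bE[X]\le -a)\le e^{-sna}\,\bE\!\left[e^{-s(S_n-n\bE[X])}\right].
\end{align}

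Next I would use the i.i.d.\ hypothesis to factor the moment generating function: independence splits the expectation of the product into a product of expectations, and identical distribution collapses this to $\bigl(\bE[e^{-s(X_1-\bE[X])}]\bigr)^{n}$. The crux of the argument, and the step I expect to be the main obstacle, is the single-variable moment generating function bound (Hoeffding's lemma): for a centered variable $Y=X_1-\bE[X]$ supported in an interval of length one, $\bE[e^{sY}]\le e^{s^2/8}$. I would prove this by bounding $e^{sY}$ using convexity of the exponential on the supporting interval, which reduces the task to analyzing a smooth scalar function $\psi(s)=\log\bE[e^{sY}]$ whose value and first derivative vanish at $s=0$ and whose second derivative (a variance under the exponentially tilted law) is uniformly at most $1/4$, since any distribution supported on an interval of range one has variance at most $1/4$; a second-order Taylor expansion then gives $\psi(s)\le s^2/8$.

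Finally I would assemble the pieces into $P(\bar{X}-\bE[X]\le -a)\le e^{-sna+ns^2/8}$, valid for every $s>0$, and optimize the exponent over $s$. The quadratic $-sna+ns^2/8$ is minimized at $s=4a$, producing exponent $-2na^2$ and hence $P(\bar{X}-\bE[X]\le -a)\le e^{-2na^2}\le e^{-na^2/2}$. I would close by remarking that the Bernoulli assumption enters only through boundedness in $[0,1]$, so the same argument applies verbatim to any independent variables taking values in a bounded interval; the precise constant in the exponent depends only on the range, and the looser $e^{-na^2/2}$ stated here is all that is needed in the subsequent concentration arguments.
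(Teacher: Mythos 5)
Your proof is correct; the paper states this lemma as a standard, citable result and gives no proof of its own, so there is nothing in the paper to compare your argument against. Your Chernoff-method derivation (Markov's inequality on the exponential moment, factorization by independence, Hoeffding's lemma via the variance bound $1/4$ for a range-one variable, and optimization at $s=4a$) is the canonical route and in fact establishes the sharper exponent $-2na^2$, which immediately implies the weaker $e^{-na^2/2}$ bound the paper actually uses.
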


\begin{lemma} \label{suplemma} Consider item $a_1$ is the item with the highest attractiveness and $a_k \neq a_1$. When the principal runs the CascadeUCB1, the expected number of $a_k$ be placed at the first position till round $T$ can be bounded by $\bE[\bbN_{T}(a_k)] \le 3 + 81\log(T)/\Delta_{k}^2$, where $\Delta_k = \alpha(a_1) - \alpha(a_k)$.
\end{lemma}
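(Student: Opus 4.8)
The plan is to run the classical optimism-based UCB argument, adapted to the one feature special to the cascade model: the item placed at the first position is always examined. First I would use that CascadeUCB1 sorts items by $UCB_t(\cdot)$ in decreasing order, so the event $\{\ba_1^t = a_k\}$ simultaneously implies $UCB_t(a_k) \ge UCB_t(a_1)$ and forces $a_k$ to be examined in round $t$, whence $\bN_t(a_k) = \bN_{t-1}(a_k)+1$. Writing $\bbN_T(a_k) = \sum_{t=1}^T \bone\{\ba_1^t = a_k\}$, I would fix the threshold $\ell = 81\log(T)/\Delta_k^2$ and split the sum according to whether $\bN_{t-1}(a_k) < \ell$ or $\bN_{t-1}(a_k) \ge \ell$.

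For the low-count part, I would order the rounds $t_1 < \cdots < t_m$ at which $a_k$ occupies the first position. Since each such round increments the examination count of $a_k$ and the initialization sets $\bN_0(a_k) = 1$, we have $\bN_{t_i - 1}(a_k) \ge i$, so at most $\ell$ of these rounds can satisfy $\bN_{t-1}(a_k) < \ell$; this bounds the low-count contribution by $\ell$.

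For the high-count part, I would introduce the good event on which the confidence bounds behave as intended: $UCB_t(a_1) \ge \alpha(a_1)$ and $\hat{\alpha}_{\bN_{t-1}(a_k)}(a_k) \le \alpha(a_k) + 3\sqrt{\log(t-1)/\bN_{t-1}(a_k)}$. On this event, whenever $\bN_{t-1}(a_k) \ge \ell$ the confidence radius satisfies $6\sqrt{\log(T)/\ell} = \tfrac{2}{3}\Delta_k$, hence
\begin{align}
UCB_t(a_k) \le \alpha(a_k) + 6\sqrt{\frac{\log(t-1)}{\bN_{t-1}(a_k)}} \le \alpha(a_k) + \tfrac{2}{3}\Delta_k < \alpha(a_1) \le UCB_t(a_1),
\end{align}
which contradicts $\ba_1^t = a_k$. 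Thus on the good event the high-count term vanishes, and the entire high-count contribution is charged to rounds where the good event fails.

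Finally I would bound the expected number of failure rounds. Conditioning on the sample count $s$ and applying the stated Hoeffding inequality with deviation $3\sqrt{\log(t-1)/s}$ gives a per-$(t,s)$ failure probability of order $(t-1)^{-9/2}$; a union bound over the at most $t$ admissible values of $s$, together with the analogous bound for the $a_1$ event, and a sum over $t$ yield a convergent series whose value is an absolute constant. Combined with the initial exploration this constant is what appears as the additive $3$, giving $\bE[\bbN_T(a_k)] \le 3 + 81\log(T)/\Delta_k^2$. I expect the main obstacle to be the bookkeeping rather than any single deep estimate: making precise that the first-position appearances form a subsequence of the examination events so that $\bN_{t-1}(a_k)$ grows at least as fast as the running index, and checking that the constant $3$ inside the confidence radius forces $6\sqrt{\log(T)/\ell} = \tfrac{2}{3}\Delta_k$ to sit strictly below $\Delta_k$.
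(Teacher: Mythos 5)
Your proof is correct and follows essentially the same route as the paper's: the same decomposition at the threshold $81\log(T)/\Delta_k^2$, the same Hoeffding-plus-union-bound over sample counts with the factor-$3$ confidence radius yielding the $\tfrac{2}{3}\Delta_k$ slack, and the same accounting of the residual constant. The only (cosmetic) differences are that you phrase the high-count step via a good event rather than bounding $P(UCB_t(a_k)\ge UCB_t(a_1))$ directly, and you split on the examination count $\bN_{t-1}(a_k)$ where the paper splits on $\bbN_{t-1}(a_k)$ and then invokes $\bN_{t-1}(a_k)\ge\bbN_{t-1}(a_k)$.
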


\begin{proof}[Proof of Lemma \ref{suplemma}] 
 We first decompose $\bE[\bbN_{T}(a_k)]$ as follows
\begin{align}
    \begin{split}
    \bE[\bbN_{T}(a_k)] &\le 1 + \bE\bigg[\sum_{t = 1}^T \bone\bigg\{\ba_1^t = a_k,\ \bbN_{t-1}(a_k) < \frac{81\log(T)}{\Delta_{k}^2}\bigg\} \bigg] + \bE\bigg[\sum_{t = 1}^T \bone\bigg\{\ba_1^t = a_k,\ \bbN_{t-1}(a_k) \ge \frac{81\log(T)}{\Delta_{k}^2}\bigg\} \bigg]\\
    &\le 1 + \frac{81\log(T)}{\Delta_{k}^2} + \bE\bigg[\sum_{t = 1}^T \bone\bigg\{\ba_1^t = a_k,\ \bbN_{t-1}(a_k) \ge \frac{81\log(T)}{\Delta_{k}^2}\bigg\} \bigg]\\
    & \le 1 + \frac{81\log(T)}{\Delta_{k}^2} + \sum_{t = 1}^T P\bigg(UCB_t(a_k) \ge UCB_t(a_1),\ \bbN_{t-1}(a_k) \ge \frac{81\log(T)}{\Delta_{k}^2}\bigg).
    \end{split}
\end{align}
By union bound, we then decompose and bound probability $P\Big(UCB_t(a_k) \ge UCB_t(a_1),\ \bbN_{t-1}(a_k) \ge 81\log(T)/\Delta_{k}^2 \Big)$
\begin{align}
    \begin{split}
        &P\bigg(UCB_t(a_k)  \ge UCB_t(a_1),\   \bbN_{t-1}(a_k) \ge \frac{81\log(T)}{\Delta_{k}^2}\bigg)\\ 
        \le &\sum_{\lambda = 1}^{t-1}\sum_{\sigma \ge \frac{81\log(T)}{\Delta_{k}^2}}^{t-1} P\bigg(UCB_t(a_k)  \ge UCB_t(a_1)  \bigg\vert \bN_{t-1}(a_k) = \sigma,\ \bN_{t-1}(a_1) = \lambda \bigg).
    \end{split}
\end{align}
The inequality holds due to $\bN_{t-1}(a_k) \ge \bbN_{t-1}(a_k)$. 
We further upper bound $P\Big(UCB_t(a_k)  \ge UCB_t(a_1)  \Big\vert \bN_{t-1}(a_k) = \sigma,\ \bN_{t-1}(a_1) = \lambda \Big)$. Consider for $1 \le \lambda \le t-1$ and $81\log(T)/\Delta_{k}^2 \le \sigma \le t-1$, we have
\begin{align}\label{5}
    \begin{split}
        &P\bigg(UCB_t(a_k)  \ge UCB_t(a_1)  \bigg\vert \bN_{t-1}(a_k) = \sigma,\ \bN_{t-1}(a_1) = \lambda \bigg)\\
        \le& P\bigg(\hat{\alpha}_{\bN_{t-1}(a_k)}(a_k) + 3\sqrt{\frac{\log(T)}{\bN_{t-1}(a_k)}} + \frac{\Delta_{k}}{3} \ge  \hat{\alpha}_{\bN_{t-1}(a_1)}(a_1) + 3\sqrt{\frac{\log(T)}{\bN_{t-1}(a_1)}} \bigg\vert \bN_{t-1}(a_k) = \sigma,\ \bN_{t-1}(a_1) = \lambda \bigg)\\
         \le& P\bigg(\hat{\alpha}_{\bN_{t-1}(a_k)}(a_k) + \frac{2\Delta_{k}}{3} \ge  \hat{\alpha}_{\bN_{t-1}(a_1)}(a_1) + 3\sqrt{\frac{\log(T)}{\bN_{t-1}(a_1)}} \bigg\vert \bN_{t-1}(a_k) = \sigma,\  \bN_{t-1}(a_1) = \lambda \bigg)\\
         \le &P\bigg(\hat{\alpha}_{\bN_{t-1}(a_k)}(a_k) + \alpha(a_1) - \alpha(a_k) \ge \frac{\Delta_{k}}{3} + \hat{\alpha}_{\bN_{t-1}(a_1)}(a_1) + 3\sqrt{\frac{\log(T)}{\bN_{t-1}(a_1)}} \bigg\vert \bN_{t-1}(a_k) = \sigma,\  \bN_{t-1}(a_1) = \lambda \bigg).
    \end{split}
\end{align}
The first inequality relies on the definition of the $UCB_t(a_k)$. The second inequality holds because $\sigma \ge 81\log(T)/\Delta_{k}^2$. The third inequality holds because $\Delta_{k} = \alpha(a_1) - \alpha(a_k)$. 

Based on the Hoeffding inequality, we have for any $\lambda \ge 1$ and $\sigma \ge 81\log(T)/\Delta_{k}^2$ 
\begin{align}
    \begin{split}\label{2}
        &P\bigg(\alpha(a_1) - \hat{\alpha}_{\bN_{t-1}(a_1)}(a_1) \ge 3\sqrt{\frac{\log(T)}{\bN_{t-1}(a_1)}} \bigg\vert \bN_{t-1}(a_1) = \lambda \bigg) \le \frac{1}{T^{9/2}} \\
        &P\bigg(\hat{\alpha}_{\bN_{t-1}(a_k)}(a_k) - \alpha(a_k) \ge \frac{\Delta_k}{3} \bigg\vert \bN_{t-1}(a_k) = \sigma \bigg) \le \frac{1}{T^{9/2}}.
    \end{split}
\end{align}
 The last term of (\ref{5}) can be further bounded by
\begin{align}\label{6}
    \begin{split}
        &P\bigg(\hat{\alpha}_{\bN_{t-1}(a_k)}(a_k) + \alpha(a_1) - \alpha(a_k) \ge \frac{\Delta_{k}}{3} + \hat{\alpha}_{\bN_{t-1}(a_1)}(a_1) + 3\sqrt{\frac{\log(T)}{\bN_{t-1}(a_1)}} \bigg\vert \bN_{t-1}(a_k) = \sigma,\  \bN_{t-1}(a_1) = \lambda \bigg)\\
        \le & P\bigg(\hat{\alpha}_{\bN_{t-1}(a_k)}(a_k) -  \alpha(a_k) \ge \frac{\Delta_{k}}{3} \bigg\vert \bN_{t-1}(a_k) = \sigma\bigg) + P\bigg( \alpha(a_1) -  \hat{\alpha}_{\bN_{t-1}(a_1)}(a_1) \ge 3\sqrt{\frac{\log(T)}{\bN_{t-1}(a_1)}} \bigg\vert \bN_{t-1}(a_1) = \lambda \bigg)\\
        \le& \frac{1}{T^{9/2}} + \frac{1}{T^{9/2}} \le \frac{2}{T^{9/2}}.
    \end{split}
\end{align}
The first inequality holds due to the union bound and the last inequality holds due to (\ref{2}).

With the fact that
\begin{align}\label{15}
    \sum_{t=1}^T\sum_{\lambda = 1}^{t-1}\sum_{\sigma \ge \frac{81\log(T)}{\Delta_{k}^2}}^{t-1}\frac{2}{T^{9/2}} \le \frac{2}{T^{3/2}} \le 2.
\end{align}
In the light of (\ref{15}), the total expected number of $a_k$ been placed at the first position can be bounded by
\begin{align}
    \bE[\bbN_T(a_k)] \le 3 + \frac{81\log(T)}{\Delta_{k}^2}.
\end{align}
Here finish the proof of Lemma \ref{suplemma}.
\end{proof}

\begin{proof}[Proof of Theorem \ref{suptheorem}] 
With Lemma \ref{suplemma}, we can bound the total expected number of $a_k \not = a_1$ being placed at the first position till round $T$. Thus, from round $1$ to round $T$, the expected number of CascadeUCB1 place item $a_1$ at the first position satisfies
\begin{align}\label{16}
    \bbN_T(a_1) \ge T - \sum_{k=2}^L \bigg( 3 + \frac{81\log(T)}{\Delta_{k}^2}\bigg).
\end{align}
Remember when the attacker implements attack Algorithm \ref{alg1}, the target item would become the item with the highest attractiveness. The rest of the items consist of $\{\eta_k\}_{k=1}^{K-1}$ and $L+K-1$ items with attractiveness at most $\alpha(\eta_K)$. Therefore, when Algorithm \ref{alg1} attacks the CascadeUCB1, $\bbN_T(\tilde{a})$ can be lower bounded by
\begin{align}
    \bE[\bbN_T(\tilde{a})] \ge T - \sum_{k=1}^{K-1}\bigg( \frac{3 + 81\log(T)}{(\alpha(\tilde{a}) - \alpha(\eta_k))^2} \bigg) - \sum_{s=1}^{L+K-1}\bigg( \frac{3 + 81\log(T)}{(\alpha(\tilde{a}) - \alpha(\eta_K))^2}\bigg).
\end{align}
Besides, according to the line 4 of Algorithm \ref{alg1}, the cost of Algorithm \ref{alg1} attack CascadeUCB1 can be bounded by
\begin{align}
    \bbC(T) \le K\bE\bigg[\sum_{t=1}^T \bone \{ \bR_t \backslash \bT \not = \emptyset \}\bigg].
\end{align}
It is worth noting that Algorithm \ref{alg1} only manipulates items in list $\bR_t$, hence the cost generates in one round is at most $K$. Recall the definition of regret in the cascade model
\begin{align}
\begin{split}
    R(T) & = \bE \bigg[ T\sum_{k=1}^K v(\bT,\ba_{k}^t,k) - \sum_{t=1}^T\sum_{k=1}^K v(\bR_t,\ba_{k}^t,k) \bigg]\\
    &= \bE \bigg[ T \bigg(1 - (1-\alpha(\tilde{a}))\prod_{k=1}^{K-1} (1 - \alpha(\bar{a}_k))\bigg) - \sum_{t = 1}^T \bigg(1 - \prod_{k=1}^{K} (1 - \alpha(\ba_k^t))\bigg) \bigg].
\end{split}
\end{align}
The total regret is generated by $K$ positions. Algorithm \ref{alg1} only attacks when $\bR_t\backslash\bT\not=\emptyset$. And situation $\bR_t\backslash\bT\not=\emptyset$ implies there is at least one item $\not\in\bT$ be placed in the $\bR_t$ and its attractiveness is reduced to at most $\alpha(\eta_K)$. Due to when $\bR_t\backslash\bT \not= \emptyset$, the number of items is placed in $\bR_t$ and belongs to $\tilde{\bD}\backslash\bT$ is at least $1$. Then for the cascade model, the regret generates in round $t$ is at least 
\begin{align}
\begin{split}
   &\sum_{k=1}^K \bigg(v(\bT,\ba_{k}^t,k) - v(\bR_t,\ba_{k}^t,k)\bigg)\\
    \ge& 1 - (1-\alpha(\tilde{a})) \prod_{k=1}^{K-1} (1 - \alpha(\eta_{k})) - 1 + (1-\alpha(\tilde{a}))(1-\alpha(\eta_K)) \prod_{k=1}^{K-2} (1 - \alpha(\eta_{k}))\\  =&
 (\alpha(\eta_1) - \alpha(\eta_K)) (1-\alpha(\tilde{a})) \prod_{k=1}^{K-2} (1 - \alpha(\eta_{k})).
\end{split}
\end{align}
The first inequality holds due to $\alpha(\eta_{K-1})$ has the lowest attractiveness in $\bT$. With the above derivation, we can derive when $\bR_t \backslash \bT \not = \emptyset$, the regret generates in each round is at least $(\alpha(\eta_{1}) - \alpha(\eta_K)) (1-\alpha(\tilde{a})) \prod_{k=1}^{K-2} (1 - \alpha(\eta_{k}))$. With this in mind, we can further bound the total cost by
\begin{align}
\bbC(T) \le K\bE\bigg[\sum_{t=1}^T \bone \{ \bR_t \backslash \bT \not = \emptyset \}\bigg] \le \frac{KR(T)}{(\alpha(\eta_{1}) - \alpha(\eta_K)) (1-\alpha(\tilde{a})) \prod_{k=1}^{K-2} (1 - \alpha(\eta_{k}))}
\end{align}
Due to the regret of the CascadeUCB1 satisfies $R(T) = o(T)$, the cost of Algorithm \ref{alg1} would be sublinear. We conclude that the CascadeUCB1 can be efficiently attacked by Algorithm \ref{alg1}. Here finish the proof of Theorem \ref{suptheorem}.
\end{proof}
\newpage

\section{Proof of Theorem \ref{theorem1}}

\subsection{Introduction of BatchRank}

We here specifically illustrate details of
BatchRank. The pseudo-code of the BatchRank is provided as follows.

\begin{algorithm}
\renewcommand{\algorithmicrequire}{\textbf{Input:}}
\renewcommand{\algorithmicensure}{\textbf{Output:}}
	\caption{BatchRank \citep{zoghi2017onlineLT}}
	\label{alg4}
	\begin{algorithmic}[1]
            \STATE \textbf{Initialize:} 
             $b_{\max} = 1$, $\bI_1 = (\bI_1(1) = 1, \bI_1(2) = K)$, $\ell_1 = 0$, $B_{1,0} = \bD$, $\bB = \{1\}$
             \STATE \textbf{for} $b = 1:K$ \textbf{do}
             \STATE \quad \textbf{for} $\ell = 0:T-1$ \textbf{do}
             \STATE \quad \quad \textbf{for all} $a_k \in \bD$ \textbf{do}
               \STATE \quad \quad \quad $\bc_{b,\ell}(a_k) = 0$, $\bn_{b,\ell}(a_k) = 0$
            \STATE \textbf{for} $t = 1:T$ \textbf{do}
            \STATE \quad \textbf{for all} $b\in \bB$ \textbf{do} 
            \STATE \quad \quad DisplayBatch($t$,$b$)
            \STATE \quad \textbf{for all} $b\in \bB$ \textbf{do} 
            \STATE \quad \quad CollectClicks($t$,$b$)
            \STATE \quad \textbf{for all} $b\in \bB$ \textbf{do} 
            \STATE \quad \quad UpdateBatch($t$,$b$)
	\end{algorithmic}  
\end{algorithm}

\begin{algorithm}
\renewcommand{\algorithmicrequire}{\textbf{Input:}}
\renewcommand{\algorithmicensure}{\textbf{Output:}}
	\caption{DisplayBatch}
	\label{alg4}
	\begin{algorithmic}[1]
            \STATE \textbf{Input:} batch index $b$, time $t$
              \STATE  Set $\ell = \ell_b$
            \STATE  Let $a_1,...,a_{\vert 
            B_{b,\ell} \vert}$ be a random permutation of items in  $B_{b,\ell}$ such that $\bn_{b,\ell}(a_1) \le...\le \bn_{b,\ell}(a_{\vert B_{b,\ell} \vert})$
             \STATE  Let $\pi \in \prod_{len(b)}([len(b)])$ be a random permutation of position assignments
             \STATE \textbf{for} $k = \bI_{b}(1):\bI_{b}(2)$ \textbf{do}
             \STATE \quad  $\ba_{k}^t = a_{\pi (k - \bI_b(1) + 1)}$
	\end{algorithmic}  
\end{algorithm}

\begin{algorithm}
\renewcommand{\algorithmicrequire}{\textbf{Input:}}
\renewcommand{\algorithmicensure}{\textbf{Output:}}
	\caption{CollectClicks}
	\label{alg4}
	\begin{algorithmic}[1]
            \STATE \textbf{Input:} batch index $b$, time $t$
            \STATE  Set $\ell = \ell_b$ and $\bn_{\min} = \min_{a_k\in B_{b,\ell}} \bn_{b,\ell}(a_{k})$
            \STATE Receive the click feedback $\bC_t = (\bc_1^t,...,\bc_L^t)$
             \STATE \textbf{for} $k = \bI_b(1):\bI_b(2)$ \textbf{do}
             \STATE \quad \textbf{if} $\bn_{b,\ell}(\ba_{k}^t) = \bn_{\min}$ \textbf{then}
             \STATE \quad \quad Set $\bc_{b,\ell}(\ba_{k}^t) = \bc_{b,\ell}(\ba_{k}^t) + \sum_{s=1}^L \bc_{s}^t\bone\{a_s = \ba_k^t\}$ and $\bn_{b,\ell}(\ba_{k}^t) = \bn_{b,\ell}(\ba_{k}^t) + 1$
	\end{algorithmic}  
\end{algorithm}

\begin{algorithm}
\renewcommand{\algorithmicrequire}{\textbf{Input:}}
\renewcommand{\algorithmicensure}{\textbf{Output:}}
	\caption{UpdateBatch}
	\label{alg4}
	\begin{algorithmic}[1]
            \STATE \textbf{Input:} batch index $b$, time $t$
            \STATE  Set $\ell = \ell_b$
            \STATE  \textbf{if}  $ \min_{a_k \in B_{b,\ell}} \bn_{b,\ell}(a_k)  =\bn_{\ell} $
             \STATE  \quad \textbf{for all} $a_k \in B_{b,\ell}$ \textbf{do}
             \STATE \quad \quad Compute $U_{b,\ell}(a_k)$ and $L_{b,\ell}(a_k)$
             \STATE  \quad Let $a_1,...,a_{\vert 
            B_{b,\ell} \vert}$ be any permutation of  $B_{b,\ell}$ such that $L_{b,\ell}(a_1) \ge...\ge L_{b,\ell}(a_{\vert B_{b,\ell} \vert})$
            \STATE  \quad \textbf{for} $k = 1:len(b)$ \textbf{do}
            \STATE  \quad \quad Set $B_{k}^+ = \{a_1,...,a_k\}$ and $B_k^- = B_{b,\ell}\backslash B^+_k$
            \STATE  \quad \textbf{for} $k = 1:len(b) - 1$ \textbf{do}
            \STATE  \quad \quad \textbf{if} $L_{b,\ell}(a_k) > \max_{a_k \in B_{k}^-}U_{b,\ell}(a_k)$ \textbf{then}
             \STATE  \quad \quad \quad Set $s = k$ 
             \STATE  \quad \textbf{if} $s = 0$ and $\vert B_{b,\ell} \vert > len(b)$ \textbf{then} 
             \STATE  \quad \quad
             Set $B_{b,\ell + 1} = \{a_k \in B_{b,\ell}:U_{b,\ell}(a_k) \ge L_{b,\ell}(a_{len(b)})\}$ and $\ell = \ell + 1$
             \STATE  \quad \textbf{else if} $s > 0$ \textbf{then}
             \STATE  \quad \quad Set $\bB = \bB \bigcup \{b_{\max} + 1,b_{\max} + 2\}\backslash\{b\}$, $B_{b_{\max} + 1,0} = B_{s}^+$,  $B_{b_{\max} + 2,0} = B_{s}^-$, $\ell_{ b_{\max} + 1} = 0$
             \STATE \quad \quad $\ell_{b_{\max} + 2} = 0$,
             $\bI_{b_{\max} + 1} = (\bI_{b}(1),\bI_{b}(1) + s -1)$, $\bI_{b_{\max} + 2} = (\bI_{b}(1) + s,\bI_{b}(2))$, 
             $b_{\max} = b_{\max} + 2$
	\end{algorithmic}  
\end{algorithm}

The BatchRank explores items with batches, which are indexed by $b$. 
 The BatchRank would begin with stage $\ell_1 = 0$, batch index $b =1$, and the first batch $B_{b,\ell_1} = \bD$. The first position in batch $b$ is indexed by $\bI_b(1)$ and the last position is indexed by $\bI_b(2)$, and the number of positions in batch $b$ is $\text{len}(b) = \bI_b(1) - \bI_b(2) + 1$. The first batch $B_{b,\ell_1}$ contains all the positions in $\bR_t$. In stage $\ell_1$, every item in $B_{b,\ell_1}$ would be explored for
 $\bn_{\ell_1} = 16\tilde{\Delta}_{\ell_1} ^{-2}\log(T)$ times (DisplayBatch) and $\tilde{\Delta}^{-1}_{\ell_1} = 2^{-\ell_{1}}$. Afterward, the BatchRank would  estimate the attractiveness of item $a_k$ as 
 \begin{align}\label{6}
    \hat{\bc}_{b,\ell}(a_k) = \bc_{b,\ell}(a_k)/
    \bn_{\ell}.
\end{align}
  After the CollectClicks section, the ranker would compute the KL-upper confidence bound and lower confidence bound \citep{Garivier2011TheKA,zoghi2017onlineLT} for every item in the batch, denote as $U_{b,\ell}(a_k)$ and $L_{b,\ell}(a_k)$
 \begin{align}
     \begin{split}
         & U_{b,\ell}(a_k) = \argmax_{q \in [\hat{\bc}_{b,\ell}(a_k),1]}\{\bn_\ell D_{KL}(\hat{\bc}_{b,\ell}(a_k) \Vert q) \le \log(T) + 3\log\log(T)\}\\
         &L_{b,\ell}(a_k) = \argmin_{q \in [0, \hat{\bc}_{b,\ell}(a_k)]}\{\bn_\ell D_{KL}(\hat{\bc}_{b,\ell}(a_k) \Vert q) \le \log(T) + 3\log\log(T)\}\\
     \end{split}
 \end{align}
 where $D_{KL}$ represents the \emph{Kullback-Leibler divergence} between Bernoulli random variables with means $p$ and $q$.
 In the UpdateBatch section, all the items in batch $B_{b,\ell_1}$ would be placed by order $a_1,..., a_{\vert B_{b,\ell_1} \vert}$, where $L_{b,\ell_1}(a_1)\ge,...,\ge L_{b,\ell_1}(a_{\vert B_{b,\ell_1} \vert})$. The BatchRank would compare the first $len(b)-1$ item's lower confidence bound to the maximal upper confidence bound in $B_k^-$. If $L_{b,\ell_1}(a_k) > \max_{a_k \in B_{k}^-}U_{b,\ell_1}(a_k)$, the BatchRank would set $s = k$. Ones $s>0$, the batch would spilt from position $s$ and the ranker derives sub-batches $B_{b+1,\ell_{2}}$ and $B_{b+2,\ell_{3}}$. Sub-batch $B_{b+1,\ell_{2}}$ contains $s$ items and the first $s$ positions in $\bR_t$ and sub-batch $B_{b+2,\ell_{3}}$ contains $L-s$ items and positions from $s$ to $K$. The BatchRank would restart with stages $\ell_{2} = 0$ and $\ell_{3} = 0$ and sub-batches $B_{b+1,\ell_{2}}$ and $B_{b+2,\ell_{3}}$.  The batches would recursively run and split until round $T$. 

  \subsection{Missing example in section 4.2}

\paragraph{Example 2.}\label{exp1}
Consider the item set $\bD = \{a_1,a_2,...,a_{10}\}$ contains $10$ items. The target item is $\tilde{a} = a_1$ with arbitrary attractiveness and item list $\bR_t$ contains $3$ positions. The BatchRank begins with $B_{1,\ell_1} = \bD$ and recursively explores items. The attacker begins the attack at round $1$ and ends at $T_1$. During $T_1$ rounds, the attacker returns $\tilde{\bc}_{k}^t = 1$ if $\tilde{a} = a_{k}$ and $\tilde{a} \in \bR_t$, otherwise $\tilde{\bc}_{k}^t = 0$. This fraudulent strategy misleads BatchRank to believe the attractiveness of the non-target items $a_k \neq \tilde{a}$ is $0$ and the attractiveness of $\tilde{a}$ is $1$. After BatchRank finishes the split section, the batch $B_{1,\ell_1}$ split into $B_{2,\ell_2} = \{a_1\}$ and $B_{3,\ell_3} = \{a_2,...,a_{10}\}$. Sub-batch $B_{2,\ell_2}$ contains position $1$ (i.e., $\ba_1^t$) and $B_{3,\ell_3}$ contains position $2$ and $3$ (i.e., $\ba_2^t$ and $\ba_3^t$). The process is illustrated in Figure \ref{fig3}.
The target item $\tilde{a}$ would always be placed at the first position of the $\bR_t$ in the next $T - T_1$ rounds due to $B_{2,\ell_2}$ only contains $\tilde{a}$. The attack cost would be $o(T)$ because the attacker only manipulates the click feedback for $o(T)$ rounds.

\begin{figure}
\centering  
\includegraphics[width=0.7\linewidth]{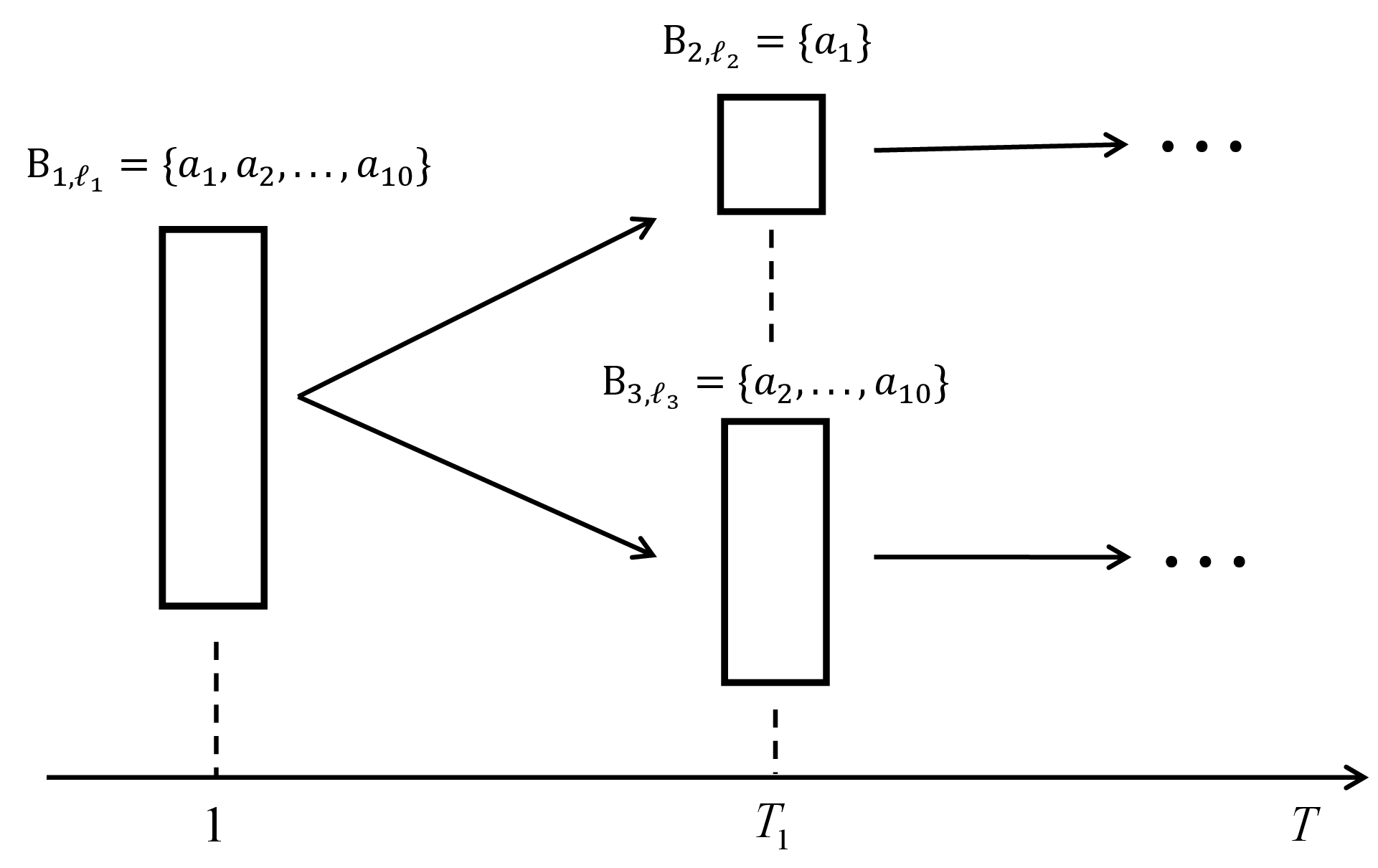}
\caption{Process of the Algorithm \ref{alg2} attacks BatchRank.} 
\label{fig3}
\end{figure}

\subsection{Proof of Theorem \ref{theorem1}}
The proof of Theorem \ref{theorem1} relies on the following Lemma \ref{lemma1}.

 \begin{lemma}\label{lemma1} 
 The attacker utilizes Algorithm \ref{alg2} to manipulate the returned click feedback of the BatchRank. After $16L\log(T)$ rounds attack and the BatchRank begins its first split. The upper confidence bound and lower confidence bound of every non-target item satisfies $L_{b,\ell_1}(a_k) = 0$ and $U_{b,\ell_1}(a_k) = 1 - (T\log(T)^3)^{ -1/\bn_{\ell_1} }$. The lower confidence bound and the upper confidence bound of the target item are $L_{b,\ell_1}(\tilde{a}) = 1$ and $U_{b,\ell_1}(\tilde{a}) = 1$.
 \end{lemma}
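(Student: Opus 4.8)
The plan is to follow the state of BatchRank through the first $T_1$ rounds and then read the confidence bounds directly off their KL definitions. Throughout this window the ranker is still working on its initial batch $B_{1,0} = \bD$ at stage $\ell_1 = 0$, for which $\tilde\Delta_{\ell_1}^{-1} = 2^{-\ell_1} = 1$ and hence $\bn_{\ell_1} = 16\log(T)$.

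First I would show that $T_1 = 16L\log(T) = L\,\bn_{\ell_1}$ rounds are enough for stage $\ell_1 = 0$ to complete and for the first UpdateBatch split test to be reached. By DisplayBatch the $K$ least–explored items of $B_{1,0}$ are shown each round, and by CollectClicks every displayed item whose counter equals $\bn_{\min}$ has its counter incremented; in particular at least one counter advances per round. Since the stage ends exactly when every one of the $L$ items has been counted $\bn_{\ell_1}$ times, the total number of counter increments needed is $L\,\bn_{\ell_1} = T_1$, so at most $T_1$ rounds elapse before the stage finishes and UpdateBatch is invoked. This places the analysis precisely at the moment described in the lemma. Next I would record the attack-induced statistics: Algorithm~\ref{alg2} forces $\tilde{\bc}_k^t = 1$ whenever $\tilde a \in \bR_t$ is displayed and $\tilde{\bc}_k^t = 0$ for every displayed non-target item, independently of the true click model. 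Hence over the $\bn_{\ell_1}$ counted displays the target accumulates $\bc_{b,\ell_1}(\tilde a) = \bn_{\ell_1}$ and each non-target accumulates $\bc_{b,\ell_1}(a_k) = 0$, so the estimator $\hat{\bc}_{b,\ell_1}(a_k) = \bc_{b,\ell_1}(a_k)/\bn_{\ell_1}$ takes the two boundary values $\hat{\bc}_{b,\ell_1}(\tilde a) = 1$ and $\hat{\bc}_{b,\ell_1}(a_k) = 0$.

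Finally I would substitute these into the KL-UCB/LCB definitions, using $D_{KL}(0\Vert q) = -\log(1-q)$ and $D_{KL}(1\Vert q) = -\log(q)$ under the convention $0\log 0 = 0$, and solve the threshold equation $\bn_{\ell_1} D_{KL}(\hat{\bc}\Vert q) = \log(T) + 3\log\log(T) = \log(T\log(T)^3)$. For a non-target item the UCB constraint becomes $-\bn_{\ell_1}\log(1-q) \le \log(T\log(T)^3)$, i.e. $U_{b,\ell_1}(a_k) = 1 - (T\log(T)^3)^{-1/\bn_{\ell_1}}$, while its LCB is taken over $q\in[0,0]$ and is therefore $L_{b,\ell_1}(a_k)=0$. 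For the target, $U_{b,\ell_1}(\tilde a)$ is maximized over $q\in[1,1]$ and equals $1$, and $L_{b,\ell_1}(\tilde a)$ is the smallest $q$ with $-\bn_{\ell_1}\log q \le \log(T\log(T)^3)$, namely $(T\log(T)^3)^{-1/\bn_{\ell_1}}$ — the reflection of the non-target UCB about $1/2$, which for the relevant range of $T$ sits strictly above every non-target UCB, yielding the separation $L_{b,\ell_1}(\tilde a) > U_{b,\ell_1}(a_k)$ that the split step in Theorem~\ref{theorem1} relies on.

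The main obstacle I expect is the bookkeeping in the first step: justifying that the round-robin display/collect rule really advances the stage within $T_1$ rounds, so that the bounds are evaluated at the full count $\bn_{\ell_1}$ rather than at some smaller, still-growing counter, and that the target indeed receives a forced click on each of its counted displays. The secondary care point is handling the two degenerate KL evaluations at $\hat{\bc}\in\{0,1\}$, where the feasible interval for $q$ collapses to a point in one direction and the closed form must be extracted from the boundary of the KL constraint in the other.
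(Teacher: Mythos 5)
Your proposal follows the same route as the paper's proof: record that the attack forces the boundary estimates $\hat{\bc}_{b,\ell_1}(\tilde a)=1$ and $\hat{\bc}_{b,\ell_1}(a_k)=0$ for non-target $a_k$, then evaluate the KL confidence bounds at these degenerate points using the conventions $0\log 0=0$ and $x\log(x/0)=+\infty$. Two differences are worth noting. First, you add a bookkeeping step the paper skips entirely, namely that $T_1 = 16L\log(T) = L\bn_{\ell_1}$ rounds suffice for stage $\ell_1=0$ to terminate (at least one counter advances per round, and exactly $L\bn_{\ell_1}$ increments are needed before the first call to UpdateBatch triggers); this is a genuine improvement in rigor. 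Second, and more substantively, your value for the target's lower confidence bound disagrees with the lemma: you obtain $L_{b,\ell_1}(\tilde a) = (T\log(T)^3)^{-1/\bn_{\ell_1}}$, whereas the statement (and the paper's proof) asserts $L_{b,\ell_1}(\tilde a)=1$. Your value is the one the definition actually yields: the feasible set for the LCB at $\hat{\bc}=1$ is $\{q\in[0,1] : \bn_{\ell_1}\log(1/q)\le\log(T)+3\log\log(T)\} = [(T\log(T)^3)^{-1/\bn_{\ell_1}},\,1]$, whose minimum is the left endpoint, not $1$; the paper's own derivation writes down exactly this constraint and then concludes the argmin equals $1$ without justification. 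So strictly speaking you have not proved the lemma as stated --- you have proved a corrected version of it. As you observe, the correction is harmless downstream: what the split step in Theorem \ref{theorem1} needs is $L_{b,\ell_1}(\tilde a) > \max_k U_{b,\ell_1}(a_k)$, and since $(T\log(T)^3)^{-1/\bn_{\ell_1}} = \exp\big(-(\log T + 3\log\log T)/(16\log T)\big) > 1/2$ for all relevant $T$, one indeed has $L_{b,\ell_1}(\tilde a) > 1 - (T\log(T)^3)^{-1/\bn_{\ell_1}} = U_{b,\ell_1}(a_k)$, so the split at $s=1$ still occurs. You should state explicitly that you are establishing the lemma with this corrected value of $L_{b,\ell_1}(\tilde a)$ and carry the strict inequality $L_{b,\ell_1}(\tilde a)>U_{b,\ell_1}(a_k)$ forward in place of the equality $L_{b,\ell_1}(\tilde a)=1$ that the proof of Theorem \ref{theorem1} currently invokes.
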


\begin{proof}[Proof of Lemma \ref{lemma1}] According to the introduction of BatchRank, the estimated click probability of an arbitrary item is written as (\ref{6}) and $\bc_{b,\ell_1}(a_k)$ is at most $16\log(T)$ in the first stage ($\ell_1 = 0$ and $\tilde{\Delta}_{\ell_1}^{-2} = 2^{2\ell_1} = 1$). Recall our attack Algorithm \ref{alg2} returns $\tilde{\bc}_{k}^t = 0$ when $a_k \neq \tilde{a}$ and $a_k\in\bR_t$. Thus, the total collected click number of the non-target item is $\bc_{b,\ell_1}(a_k) = 0$, and the estimated click probability is $\hat{\bc}_{b,\ell_1}(a_k) = 0$. 

We first introduce the definition of the KL-divergence
\begin{align}
\begin{split}
    D_{KL}(p \Vert q) = p\log(\frac{p}{q}) + (1 - p)\log(\frac{1 - p}{1 - q}).
\end{split}
\end{align}
By convenience, we define $0\log(0) = 0\log(0/0) = 0$ and $x\log(x/0) = + \infty$ for $x > 0$ \citep{Garivier2011TheKA}. With this knowledge, we can derive the upper confidence bound of the non-target item in stage $\ell_1$ 
\begin{align}
\begin{split}
U_{b,\ell_1}(a_k) &= \argmax_{q \in [\hat{\bc}_{b,\ell_1}(a_k),1]}\{\bn_{\ell_1} D_{KL}(\hat{\bc}_{b,\ell_1}(a_k) \Vert q) \le \log(T) + 3\log\log(T)\}\\
& = \argmax_{q \in [0,1]}\{\bn_{\ell_1} (0\log\frac{0}{q} + 1\log\frac{1}{1-q}) \le \log(T) + 3\log\log(T)\}\\
& = \argmax_{q \in [0,1]}\{\bn_{\ell_1} \log\frac{1}{1-q} \le \log(T) + 3\log\log(T)\}.
\end{split}
\end{align}

Apparently, when $q = 1$, $\log(T) + 3\log\log(T) \le \bn_{\ell_1} \log(1/1-q) = + \infty$, hence $q$ should smaller than $1$. When $\bn_{\ell_1} \log(1/(1-q)) = \log(T) + 3\log\log(T)$, we have
\begin{align}
    \begin{split}
       \bn_{\ell_1} \log(\frac{1}{1-q}) &= \log(T\log(T)^3)\\
      \log(\frac{1}{1-q}) &= \log\bigg((T\log(T)^3)^{ 1/\bn_{\ell_1} }\bigg)\\
       \frac{1}{1-q} &= (T\log(T)^3)^{ 1/\bn_{\ell_1} }\\
       q &
       = 1 - (T\log(T)^3)^{ -1/\bn_{\ell_1} }.
    \end{split}
\end{align}
Due to $T\log(T)^3 > 1$ and $\bn_{\ell_1} > 0$, we can derive $0<(T\log(T)^3)^{ -1/\bn_{\ell_1} } < 1$ and $0 < U_{b,\ell_1}(a_k) < 1$. 

The lower confidence bound of the non-target item has
\begin{align}
    L_{b,\ell_1}(a_k) &= \argmin_{q \in [0,0]}\{\bn_{\ell_1} D_{KL}(\hat{\bc}_{b,\ell_1}(a_k) \Vert q) \le \log(T) + 3\log\log(T)\} = 0.
\end{align}
Remember the attacker returns $\tilde{\bc}_{k}^t = 1$ if $a_{k} = \tilde{a}$ and $a_k\in\bR_t$. Thus, the total collected click number of target item is $\bc_{b,\ell_1}(\tilde{a}) = 16\log(T)$ and $\hat{\bc}_{b,\ell_1}(\tilde{a}) = 1$. We can further deduce the upper confidence bound of the target item as
\begin{align}
    \begin{split}
        U_{b,\ell_1}(\tilde{a}) &= \argmax_{q \in [1,1]}\{\bn_{\ell_1} D_{KL}(\hat{\bc}_{b,\ell_1}(\tilde{a}) \Vert q) \le \log(T) + 3\log\log(T)\} = 1.
    \end{split}
\end{align}
The lower confidence bound of the target item has
\begin{align}
    \begin{split}
        L_{b,\ell_1}(\tilde{a}) &= \argmin_{q \in [0,1]}\{\bn_{\ell_1} D_{KL}(\hat{\bc}_{b,\ell_1}(\tilde{a}) \Vert q) \le \log(T) + 3\log\log(T)\} = 1\\
        & = \argmin_{q \in [0,1]}\{\bn_{\ell_1} (1\log\frac{1}{q} + (1 - 1)\log\frac{1 - 1}{1-q}) \le \log(T) + 3\log\log(T)\}\\
        & = \argmin_{q \in [0,1]}\{\bn_{\ell_1} \log\frac{1}{q} \le \log(T) + 3\log\log(T)\}\\& = 1.
    \end{split}
\end{align}
Here finish the proof of Lemma \ref{lemma1}.
\end{proof}

                    \begin{proof}[Proof of Theorem \ref{theorem1}] Consider the attacker implements attack Algorithm \ref{alg2} with $T_1 = 16L\log(T)$.
                    With the knowledge of Lemma \ref{lemma1}, we can obtain when the BatchRank begins to split the first batch $B_{1,\ell_1} = \bD$, the lower confidence bound of every non-target item satisfies $L_{b,\ell_1}(a_k) = 0$, and the lower confidence bound of the target item satisfies $L_{b,\ell_1}(\tilde{a}) = 1$. Therefore $\tilde{a}$ would be ranked at the first position because it has the highest lower confidence bound (line 6 in UpdateBatch). The BatchRank starts comparing $L_{b,\ell_1}(a_k) $ and $ \max_{a_k \in B_{k}^-}U_{b,\ell_1}(a_k)$ for $k=1$ to $K-1$ (line 10 in UpdateBatch). Owing to $L_{b,\ell_1}(\tilde{a}) = 1 > U_{b,\ell_1}(a_k)$ and $L_{b,\ell_1}(a_k) < U_{b,\ell_1}(a_k) $, the split point is $s=1$ (line 11 in Updatebatch). After the split action, the BatchRank would derive two sub-batches $B_{2,\ell_2} = \{\tilde{a}\}$ and $B_{3,\ell_3} = \bD\backslash\tilde{a}$. Sub-batch $B_{2,\ell_2}$ contains the first position of $\bR_t$ (i.e., $\ba_1^t$) and $B_{3,\ell_3}$ contains the rest of the positions of $\bR_t$ (i.e., $\ba_2^t,...,\ba_K^t$). Sub-batch $B_{2,\ell_2}$ would not split until round $T$ because it only contains a position and an item. This implies after round $16L\log(T)$, the target item would always be placed at the first position of $\bR_t$ until round $T$ is over, i.e., $\bE[\bbN_T(\tilde{a})] \ge T - 16L\log(T)$. 
Due to the click number in each round being at most $K$, the cost in one round is at most $K$. Hence, the cost of Algorithm \ref{alg2} can be bounded by $\bbC \le KT_1$.

Based on the above results, we conclude that Algorithm \ref{alg2} can efficiently attack BatchRank when $T_1 = 16L\log(T)$. Here finish the proof of Theorem \ref{theorem1}.
\end{proof}

\newpage

\section{Proof of Theorem \ref{theorem2}}

\subsection{Introduction of TopRank}
We here specifically illustrate details of
the TopRank. The pseudo-code of the TopRank is provided.
\begin{algorithm}
\renewcommand{\algorithmicrequire}{\textbf{Input:}}
\renewcommand{\algorithmicensure}{\textbf{Output:}}
	\caption{The TopRank \citep{Lattimore2018TopRankAP}}
	\label{alg4}
	\begin{algorithmic}[1]
            \STATE \textbf{Input:} Graph $G_1 = \emptyset$, round number $T$, $c = \frac{4\sqrt{2/\pi}}{\text{erf}(\sqrt{2})} \approx 3.43$
             \STATE \textbf{for} $t = 1:T$ \textbf{do}
             \STATE \quad Set $d = 0$
             \STATE \quad \textbf{while} $\bD\backslash\bigcup_{c = 1}^d\bP_{tc} \neq \emptyset$ \textbf{do}
             \STATE \quad \quad Set $d = d +1$
             \STATE \quad \quad Set $\bP_{td} = \min_{G_t}\Big( \bD \backslash \bigcup_{c = 1}^{d - 1}\bP_{tc} \Big)$
             \STATE \quad Choose $\bR_t$ uniformly at random from $\bP_{t1},...,\bP_{td}$
             \STATE \quad Observe click feedback $\bC_{t} = (\bc_{1}^t,...,\bc_{L}^t)$
             \STATE \quad \textbf{for} $(i,j) \in [L]^2$ \textbf{do}
             \STATE \quad \quad \textbf{if} $a_i,a_j\in \bP_{td}$ for some $d$ \textbf{then}
             \STATE \quad\quad\quad Set $U_{tij} = \bc_{i}^t - \bc_{j}^t$
             \STATE \quad\quad \textbf{else}
             \STATE \quad\quad\quad Set $U_{tij} = 0$
             \STATE \quad \quad Set $S_{tij} = \sum_{s = 1}^t U_{tij}$ and $N_{tij} = \sum_{s = 1}^t \vert U_{tij} \vert$
             \STATE \quad Set $G_{t+1} = G_t \bigcup \Big\{(a_j,a_i):S_{tij} \ge \sqrt{2N_{tij}\log(\frac{c}{\delta}\sqrt{N_{tij}})}$ and $N_{tij} > 0\Big\}$
	\end{algorithmic}  
\end{algorithm}

The TopRank would begin with a blank graph $G_1 \subseteq [L]^2$. A directional edge $(a_j,a_i)\in G_t$ denotes the TopRank believes item $a_i$'s attractiveness is larger than item $a_j$. Let $\min_{G_t}(\bD\backslash\bigcup_{c = 1}^{d-1}\bP_{tc}) = \{a_i \in \bD\backslash\bigcup_{c = 1}^{d-1}\bP_{tc} : (a_i,a_j)\not\in G_t\ \text{for}\ \text{all}\ a_j \in \bD\backslash\bigcup_{c = 1}^{d-1}\bP_{tc}\}$. The algorithm would begin from round $1$ to round $T$. In each round, the TopRank would establish blocks $\bP_{t1},...,\bP_{td}$ via the graph $G_t$. Items in block $\bP_{t1}$ would be placed randomly at the first $\vert\bP_{t1}\vert$ positions in $\bR_t$, and items in $\bP_{t2}$ would be placed randomly at the next $\vert\bP_{t2}\vert$ positions, and so on. In each round, after deriving click feedback $\bC_t$, the TopRank would compute $U_{tij} = \bc_{i}^t - \bc_{j}^t$ if item $a_i$ and item $a_j$ are in the same block, otherwise, $U_{tij} = 0$. Afterward, the TopRank would compute $S_{tij} = \sum_{s=1}^t U_{sij}$ and $N_{tij} = \sum_{s=1}^t \vert U_{sij} \vert$ and establish edge $(a_j,a_i)$ if $S_{tij} \ge \sqrt{2N_{tij}\log(\frac{c}{\delta}\sqrt{N_{tij}})}$ and $N_{tij} > 0$. Without the attacker interference, the graph would not contain any cycle with probability at least $1-\delta L^2$, if the graph contains at least one cycle the TopRank would behave randomly \citep{Lattimore2018TopRankAP}. Parameter $\delta$ would be set as $\delta = 1/T$.

\subsection{Missing example in section 4.3}

\paragraph{Example 3.}
The process of the attack is shown in Figure \ref{fig4}.
Consider the total item set $\bD = \{a_1,a_2,a_3\}$ with $3$ items. The length of the list $\bR_t$ is $K=2$ and the target item is $\tilde{a} = a_1$. The TopRank would start with block $\bP_{11} = \bD$ and $d = 1$ because the graph contains no edges at the beginning. In the first $T_1$ rounds, the attacker receives click feedback $\bC_{t}$ and modifies click feedback $\tilde{\bc}_{k}^t = 1$ if $a_k = \tilde{a}$ and $a_k\in\bR_t$, otherwise $\tilde{\bc}_{k}^t = 0$. After $T_1$, the edges $(a_k,\tilde{a}),\ k=2,3$ are established simultaneously. In the last $T - T_1$ rounds, the block $\bP_{t1}$ would only contain $\tilde{a}$ and $\tilde{a}$ would always be placed at the first position of $\bR_t$. Due to TopRank would only compare items' attractiveness in the same block, the edges from $\tilde{a}$ to $a_{k} \neq \tilde{a}$ would never be established and cycle would appear in $G_t$ with very low probability (will be explained in the proof of Theorem \ref{theorem2} in the appendix). 

\begin{figure}
\centering  
\includegraphics[width=0.7\linewidth]{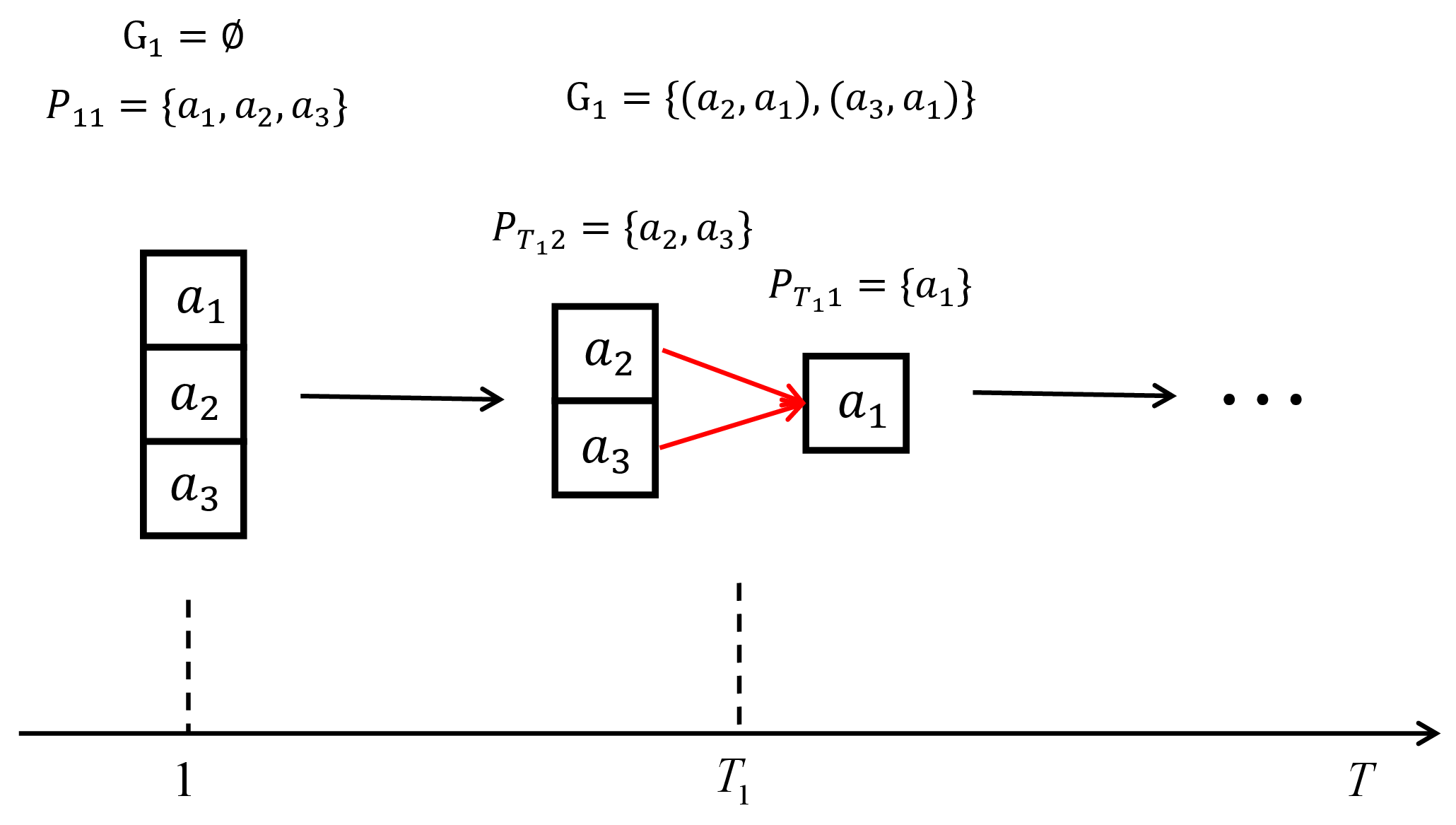}
\caption{Process of the Algorithm \ref{alg2} attacks TopRank.}
\label{fig4}
\vspace{-2mm}
\end{figure}

\subsection{Proof of Theorem \ref{theorem2}}
The proof of Theorem \ref{theorem2} relies on the following lemmas.

\begin{lemma} \label{lemma2} Consider the TopRank is under the attack of Algorithm \ref{alg2}. Denotes $a_i = \tilde{a}$ as the target item and $a_j \neq \tilde{a}$ as non-target items. When $\sum_{t=1}^{T_1} \bone\{\tilde{a} \in \bR_t\} = 4\log(c/\delta)$, then $S_{T_1ij} \ge \sqrt{2N_{T_1ij}\log(\frac{c}{\delta}\sqrt{N_{T_1ij}})}$ and $N_{T_1ij} > 0$ are satisfied and
edges from non-target items to target item (i.e., $(a_j,a_i)$, $a_j \not = a_i$) are established simultaneously.
\end{lemma}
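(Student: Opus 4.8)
The plan is to compute the TopRank statistics $S_{T_1ij}$ and $N_{T_1ij}$ for the pair $i=\tilde a$, $a_j\neq\tilde a$ exactly under the attack, and then verify the edge-firing inequality by elementary algebra. First I would note that while the graph is empty, $G_t=\emptyset$, there is a single block $\bP_{t1}=\bD$ containing all items, so the condition ``$a_i,a_j\in\bP_{td}$ for some $d$'' holds for every pair and $U_{tij}=\tilde{\bc}_i^t-\tilde{\bc}_j^t$. Under Algorithm~\ref{alg2} the manipulated click of the target satisfies $\tilde{\bc}_{\tilde a}^t=\bone\{\tilde a\in\bR_t\}$ (it is set to $1$ precisely when $\tilde a$ is displayed), whereas every non-target item has $\tilde{\bc}_j^t=0$ whether or not it is displayed. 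Hence $U_{t,\tilde a,j}=\bone\{\tilde a\in\bR_t\}\in\{0,1\}$ for every non-target $j$ throughout the empty-graph phase.

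Summing over the first $T_1$ rounds then gives $S_{T_1ij}=\sum_{t=1}^{T_1}\bone\{\tilde a\in\bR_t\}=:m$, and since $U_{t,\tilde a,j}\ge 0$ also $N_{T_1ij}=\sum_{t=1}^{T_1}|U_{t,\tilde a,j}|=m$. The firing condition $S_{T_1ij}\ge\sqrt{2N_{T_1ij}\log(\tfrac{c}{\delta}\sqrt{N_{T_1ij}})}$ with $N_{T_1ij}>0$ therefore reduces to $m\ge\sqrt{2m\log(\tfrac{c}{\delta}\sqrt m)}$. Squaring (both sides are nonnegative once $m\ge 1$ and $\tfrac{c}{\delta}\sqrt m>1$) turns this into $m\ge 2\log(c/\delta)+\log m$. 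Substituting the hypothesized display count $m=4\log(c/\delta)$ and writing $a:=\log(c/\delta)$, the inequality becomes $2a\ge\log(4a)$, which holds throughout the relevant regime $\delta=1/T$ (where $a=\log(cT)$ is bounded away from $0$). This establishes both $N_{T_1ij}>0$ and the threshold inequality, so the edge $(a_j,\tilde a)$ is created by round $T_1$.

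For the simultaneity claim I would observe that the computation of $S_{T_1ij}$ and $N_{T_1ij}$ depends only on the display history of $\tilde a$ and not on the identity of $j$, so the threshold is crossed at the same round for every pair $(\tilde a,a_j)$ and all edges into $\tilde a$ appear together. I would also check that no competing edge disturbs the single-block structure beforehand: for two non-target items both manipulated clicks are $0$, so the increments vanish identically and no edge between non-target items is ever created, while the statistic for the ordering that would place $\tilde a$ below a non-target item equals $-m<0$ and never crosses the nonnegative threshold. Consequently $G_t$ stays empty until round $T_1$, which is precisely what justifies using the single-block value of $U_{t,\tilde a,j}$ in the sum.

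The main obstacle is this self-consistency of the empty-graph assumption: the clean identity $U_{t,\tilde a,j}=\bone\{\tilde a\in\bR_t\}$ is valid only while $\tilde a$ and $a_j$ share a block, so I must rule out any edge firing strictly earlier in a way that would reshuffle the blocks and alter the increments. The argument above closes this loop because all admissible edges $(a_j,\tilde a)$ carry identical statistics and hence fire simultaneously, and no other edge is ever created; the only remaining care is the elementary verification that $m=4\log(c/\delta)$ clears the threshold and that the integer display count reaching this value is enough, which is routine.
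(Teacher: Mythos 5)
Your proposal is correct and follows essentially the same route as the paper: both compute $S_{T_1ij}=N_{T_1ij}=\sum_{t\le T_1}\bone\{\tilde a\in\bR_t\}$ under the single-block phase and reduce the edge-firing threshold to the condition $\sum_{t\le T_1}\bone\{\tilde a\in\bR_t\}\ge 4\log(c/\delta)$ by the same elementary bound ($\tfrac14 x>\log\sqrt{x}$ in the paper, the equivalent $2a\ge\log(4a)$ check in yours). Your explicit verification that the graph stays empty until all edges $(a_j,\tilde a)$ fire simultaneously is a point the paper only asserts in Lemma~\ref{lemma2} and defers to the proof of Lemma~\ref{lemma3}, so it is a welcome tightening rather than a different argument.
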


\begin{proof}[Proof of Lemma \ref{lemma2}]
Note that the TopRank sets $U_{tij} = \bc_{i}^t - \bc_{j}^t$ if $a_i,a_j \in \bP_{td}$ for some $d$, otherwise, $U_{tij} = 0$. According to attack Algorithm \ref{alg2}, the TopRank would receive $\bc_{i}^t = 1$ ($\bc_{i}^t$ is generates by the target item) if $\tilde{a} \in \bR_t$ and $\bc_j^t = 0$ ($\bc_{j}^t$ is generated by non-target items) when $t \le T_1$. Based on this, we can derive
\begin{align}
    U_{tij} = \bc^t_i - \bc^t_j =1,\ t \le T_1,\ \tilde{a} \in \bR_t,\ a_i,a_j\in\bP_{td}.
\end{align}
Thus, when $\bP_{t1} = \{\bD\}$, we have
\begin{align} \label{21}
    S_{tij} = \sum_{s=1}^t U_{tij} = N_{tij} = \sum_{s = 1}^t \vert U_{tij}\vert = \sum_{s=1}^t \bone \{\tilde{a} \in \bR_t\},\quad t \le T_1.
\end{align}
In the light of (\ref{21}) and line 15 of TopRank, if $\sum_{s=1}^t \bone \{\tilde{a} \in \bR_t\} \ge \sqrt{2\sum_{s=1}^t \bone \{\tilde{a} \in \bR_t\}\log(\frac{c}{\delta}\sqrt{\sum_{s=1}^t \bone \{\tilde{a} \in \bR_t\}})}$, edges $(a_j,a_i)$ would establish. Utilizing the knowledge of the elementary algebra, we have
\begin{align}
    \begin{split}
        \sum_{s=1}^t \bone \{\tilde{a} \in \bR_t\} &\ge \sqrt{2\sum_{s=1}^t \bone \{\tilde{a} \in \bR_t\}\log\bigg(\frac{c}{\delta}\sqrt{\sum_{s=1}^t \bone \{\tilde{a} \in \bR_t\}}\bigg)}\\
        \bigg( \sum_{s=1}^t \bone \{\tilde{a} \in \bR_t\} \bigg)^2 &\ge 2 \sum_{s=1}^t \bone \{\tilde{a} \in \bR_t\} \Bigg(\log(\frac{c}{\delta}) 
 + \log\bigg(\sqrt{\sum_{s=1}^t \bone \{\tilde{a} \in \bR_t\}}\bigg)\Bigg)\\
       \frac{1}{2} \sum_{s=1}^t \bone \{\tilde{a} \in \bR_t\} - \log\bigg(\sqrt{\sum_{s=1}^t \bone \{\tilde{a} \in \bR_t\}}\bigg)  &\ge \log(\frac{c}{\delta})\\
      \sum_{s=1}^t \bone \{\tilde{a} \in \bR_t\} &\ge 4\log(\frac{c}{\delta}).
    \end{split}
\end{align}
The second inequality holds because of $\sum_{s=1}^t \bone \{\tilde{a} \in \bR_t\} > 0$. The fourth inequality holds because of $(1/4)x > log(\sqrt{x})$ when $x > 0$. Thus, when $\sum_{s=1}^t \bone \{\tilde{a} \in \bR_t\} \ge 4\log(c/\delta)$ and $t\le T_1$, edges $(a_j,a_i)$ would establish simultaneously. We here finish the proof of Lemma \ref{lemma2}.
\end{proof}

\begin{lemma} \label{lemma6} Suppose input $T_1 = \frac{4\log(c/\delta)}{\frac{K}{L} + (1 - \sqrt{1 + 8K/L})/4}$, then with probability at least $1-\delta/c$, the TopRank would achieve  $\sum_{t=1}^{T_1} \bone\{\tilde{a} \in \bR_t\} > 4\log(c/\delta) $. 
\end{lemma}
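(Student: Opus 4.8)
The plan is to reduce the claim to a lower-tail concentration bound for a binomial random variable, and then to check that the prescribed $T_1$ is tuned so that this bound equals exactly $\delta/c$.

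First I would establish the structural fact that, under Algorithm \ref{alg2}, the graph $G_t$ carries no edge until the target is isolated. Under the attack the perturbed feedback satisfies $\tilde{\bc}_i^t = \bone\{a_i = \tilde a\}\bone\{\tilde a \in \bR_t\}$, so for items in a common block $U_{tij} = \tilde{\bc}_i^t - \tilde{\bc}_j^t$ is nonzero only for pairs involving $\tilde a$, equalling $+1$ exactly when $a_i = \tilde a \in \bR_t$. Hence $S_{tij}$ is strictly positive only for pairs $(a_j, \tilde a)$ with $a_j \neq \tilde a$; the only edges TopRank can create point from the non-target items into $\tilde a$, and by Lemma \ref{lemma2} they all appear simultaneously, precisely when the running count $\sum_{s \le t}\bone\{\tilde a \in \bR_s\}$ reaches $4\log(c/\delta)$. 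Therefore, in every round before that count is reached, $G_t = \emptyset$, the first block is $\bP_{t1} = \bD$, and TopRank fills the $K$ shown positions with a uniformly random $K$-subset of the $L$ items, so that $P(\tilde a \in \bR_t) = K/L$, conditionally on the past.

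Next I would convert this into a stochastic-domination statement. Let $Y_1, \dots, Y_{T_1}$ be i.i.d. $\mathrm{Bernoulli}(K/L)$ coupled to the per-round permutations so that $\bone\{\tilde a \in \bR_t\} = Y_t$ on rounds before isolation and $\bone\{\tilde a \in \bR_t\} = 1 \ge Y_t$ afterwards. Then $\sum_{t=1}^{T_1} \bone\{\tilde a \in \bR_t\} \ge \sum_{t=1}^{T_1} Y_t$ pathwise, so it suffices to show $P\big(\sum_{t=1}^{T_1} Y_t \le 4\log(c/\delta)\big) \le \delta/c$, a lower-tail bound for a $\mathrm{Binomial}(T_1, K/L)$ variable.

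Finally I would apply the lower-tail Hoeffding inequality stated above to $\bar Y = \tfrac{1}{T_1}\sum_t Y_t$, whose mean is $K/L$. Writing the deviation as $a = K/L - 4\log(c/\delta)/T_1$, substituting the prescribed $T_1$ gives $4\log(c/\delta)/T_1 = K/L + (1 - \sqrt{1 + 8K/L})/4$ and hence $a = (\sqrt{1 + 8K/L} - 1)/4$, which is exactly the positive root of $2a^2 + a - K/L = 0$. Thus $2a^2 = K/L - a = 4\log(c/\delta)/T_1$, i.e. $T_1 a^2 / 2 = \log(c/\delta)$, and Hoeffding gives $P(\bar Y - K/L \le -a) \le e^{-T_1 a^2/2} = e^{-\log(c/\delta)} = \delta/c$, as required. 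The crux of the argument is the structural claim that the block stays $\bD$ with hitting probability $K/L$ together with the domination that makes post-isolation rounds only help; the concentration step is then routine, the single delicate point being the algebraic tuning of $T_1$ engineered so that the quadratic $2a^2 + a - K/L$ vanishes and the Hoeffding exponent collapses to exactly $\log(c/\delta)$.
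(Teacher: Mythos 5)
Your proposal is correct and follows essentially the same route as the paper's proof: both observe that before the edges are created TopRank's single block makes $P(\tilde a\in\bR_t)=K/L$, dominate the count from below by a $\mathrm{Binomial}(T_1,K/L)$ sum (the paper phrases this as a split into periods $P_1,P_2$ with $P_2\ge\sum_{t>P_1}X_t$, you phrase it as a pathwise coupling), and then apply the same lower-tail Hoeffding bound with the deviation $a=(\sqrt{1+8K/L}-1)/4$ chosen as the positive root of $2a^2+a-K/L=0$ so that the exponent collapses to $\log(c/\delta)$. Your write-up is in fact slightly more explicit than the paper's about why that choice of $T_1$ makes the bound come out to exactly $\delta/c$.
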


\begin{proof}[Proof of Lemma \ref{lemma6}] According to the previous discussion, we can separate $T_1$ into two periods $P_1$ and $P_2$ (i.e., $T_1 = P_1 + P_2$). In period one $G_t = \emptyset$ and in period two $G_t$ only contains edges from non-target items to the target item. Based on the TopRank property, in period one $ P(\tilde{a} \in \bR_t\vert t \le P_1) = K/L$ and in period two $ P(\tilde{a} \in \bR_t \vert P_1 + 1 \le t \le T_1) = 1$.
Define a Bernoulli distribution $X$ that satisfies $X = 1$ with probability $K/L$. With the help of the Hoeffding inequality, we can derive
\begin{align}
    P\bigg(\sum_{t=1}^{T_1} X_t - \frac{K}{L}T_1 \le -aT_1 \bigg) \le e^{-T_1 a^2/2}.
\end{align}
Set $T_1 = \frac{4\log(c/\delta)}{\frac{K}{L} + (1 - \sqrt{1 + 8K/L})/4}$ and $a = -(1 - \sqrt{1 + 8K/L})/4$. We can derive
\begin{align}
     P\bigg(\sum_{t=1}^{T_1} X_t \le 4\log(c/\delta)\bigg) \le \frac{\delta}{c}.
\end{align}
Further derivation shows that
\begin{align}\label{32}
    P\bigg(\sum_{t=1}^{T_1} X_t  > 4\log(c/\delta)\bigg) > 1 - \frac{\delta}{c}.
\end{align}
Follows the definition of the TopRank, one has  
\begin{align}\label{33}
\begin{split}
     P\bigg(\sum_{t=1}^{P_1} \bone \{\tilde{a} \in \bR_t\} + \sum_{t=P_1 + 1}^{T_1} \bone \{\tilde{a} \in \bR_t\} > 4\log(c/\delta)\bigg) &= P\bigg(\sum_{t=1}^{P_1} \bone \{\tilde{a} \in \bR_t\} + P_2 > 4\log(c/\delta)\bigg) 
     \\&\ge P\bigg(\sum_{t=1}^{T_1} X_t > 4\log(c/\delta)\bigg)
     \end{split}
\end{align}
where the first equation holds because $\sum_{t=P_1 + 1}^{T_1} \bone \{\tilde{a} \in \bR_t\} = P_2$. The last inequality holds because $P_2 \ge \sum_{t=P_1+1}^{T_1} X_t$. Combining (\ref{32}) and (\ref{33}), we can finally get
\begin{align}
    P\bigg(\sum_{t=1}^{T_1} \bone \{\tilde{a} \in \bR_t\}  \ge 4\log(c/\delta) \bigg) > 1 - \frac{\delta}{c}
\end{align}
when $T_1 = \frac{4\log(c/\delta)}{\frac{K}{L} + (1 - \sqrt{1 + 8K/L})/4}$. Here finish the proof of Lemma \ref{lemma6}.
\end{proof}

\begin{lemma} \label{lemma3} If the attacker implements attack Algorithm \ref{alg2} and $T_1 = \frac{4\log(c/\delta)}{\frac{K}{L} + (1 - \sqrt{1 + 8K/L})/4}$, the graph $G_t$ would not contain any cycle with probability at least $1 - (L^2+1/c)\delta$.
\end{lemma}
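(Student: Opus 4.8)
The plan is to split the event ``$G_t$ contains a cycle'' into two failure modes and control each separately: the attack may fail to isolate the target $\tilde{a}$, or a cycle may form among the non-target items through the ordinary (unattacked) TopRank dynamics. First I would invoke Lemma~\ref{lemma6}: for the prescribed $T_1$, with probability at least $1-\delta/c$ we have $\sum_{t=1}^{T_1}\bone\{\tilde{a}\in\bR_t\}>4\log(c/\delta)$. On this event Lemma~\ref{lemma2} guarantees that every edge $(a_j,\tilde{a})$ from a non-target item into the target is present by round $T_1$, so that afterwards the first block is $\bP_{t1}=\{\tilde{a}\}$ and $\tilde{a}$ sits alone at the top of $\bR_t$.

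Next I would show that the target can only ever acquire incoming edges, hence is a sink that cannot lie on any directed cycle. By antisymmetry $U_{tj\tilde{a}}=-U_{t\tilde{a}j}$, and Algorithm~\ref{alg2} forces $\bc_{\tilde{a}}^t=1$, $\bc_j^t=0$ whenever $\tilde{a}$ and $a_j$ share a displayed block, so $U_{t\tilde{a}j}\ge 0$ and thus $U_{tj\tilde{a}}\le 0$ for all $t\le T_1$. Consequently $S_{tj\tilde{a}}\le 0$ never reaches the strictly positive threshold $\sqrt{2N_{tj\tilde{a}}\log(\tfrac{c}{\delta}\sqrt{N_{tj\tilde{a}}})}$, so no edge $(\tilde{a},a_j)$ is created during the attack. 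Once $\tilde{a}$ is isolated it is never again placed in a common block with any other item, freezing $U_{tj\tilde{a}}=0$ and hence $S_{tj\tilde{a}}$; thus $(\tilde{a},a_j)$ is never created at all. Any cycle of $G_t$ must therefore be confined to the non-target items.

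It then remains to bound cycles inside the non-target set. During the first $T_1$ rounds Algorithm~\ref{alg2} zeroes every non-target click, so $U_{tjk}=0$ and no edge between two non-target items can appear before $T_1$. After $T_1$ the feedback is genuine, and since TopRank only compares items lying in a common block, the concentration bounds that drive TopRank's intrinsic acyclicity apply unchanged to the non-target items: the probability that any reversed edge (from a more attractive to a less attractive non-target item) is ever created is at most $L^2\delta$, and in the absence of reversed edges the non-target subgraph is consistent with the true attractiveness order and therefore acyclic.

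Combining these facts, $G_t$ can contain a cycle only if the attack fails to isolate $\tilde{a}$ (probability at most $\delta/c$) or a reversed non-target edge is formed (probability at most $L^2\delta$); a union bound gives acyclicity with probability at least $1-(L^2+1/c)\delta$. The step I expect to demand the most care is the transfer of TopRank's intrinsic cycle-freeness to the non-target items under the perturbed block structure, i.e.\ verifying that isolating $\tilde{a}$ and suppressing non-target clicks during $[1,T_1]$ leaves the pairwise concentration guarantees for the remaining items intact, so that the $L^2\delta$ bound still holds.
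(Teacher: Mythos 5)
Your proposal is correct and follows essentially the same route as the paper's proof: invoke Lemma~\ref{lemma6} to isolate $\tilde{a}$ with failure probability $\delta/c$, observe that $S_{tj\tilde{a}}\le 0$ so the target never acquires outgoing edges and the non-target pairwise statistics remain untouched during $[1,T_1]$, then inherit TopRank's intrinsic $L^2\delta$ acyclicity bound on the non-target items and union-bound. The point you flag as needing care (that zeroing non-target clicks during the attack leaves the non-target comparisons unbiased) is exactly the reduction the paper makes, phrased there as TopRank effectively restarting on $\bD\backslash\{\tilde{a}\}$ for the remaining $T-T_1$ rounds.
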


\begin{proof}[Proof of Lemma \ref{lemma3}]  We here analyze our attack Algorithm \ref{alg2} would not case $G_t$ contains any cycle with high probability if the input $T_1 = \frac{4\log(c/\delta)}{\frac{K}{L} + (1 - \sqrt{1 + 8K/L})/4}$. Consider the attacker implementing our attack strategy from round $1$ to round $T_1$. Define $a_i = \tilde{a}$ and $a_j \neq \tilde{a}$. The attacker frauds the TopRank to believe the target item $\tilde{a}$ is clicked $\sum_{t=1}^{T_1} \bone \{\tilde{a} \in \bR_t\}$ times and non-target items are clicked 0 time in $T_1$. After $S_{tij} \ge \sqrt{2N_{tij}\log(\frac{c}{\delta}\sqrt{N_{tij}})}$ and $N_{tij} > 0$ are satisfied, the edges would be established at the same time and $\tilde{a}$ would belong to the first block (line 6 in the TopRank and Lemma \ref{lemma2} and \ref{lemma3}). Note that during $T_1$, the attacker sets $\bc_{j}^t = 0$. Thus
\begin{align}\label{29}
\begin{split}
N_{T_1ji} &= \sum_{t=1}^{T_1}\vert U_{tji}\vert = \sum_{t=1}^{T_1}\vert \bc_{j}^t - \bc_{i}^t \vert =  \sum_{t=1}^{T_1} \bone \{\tilde{a} \in \bR_t\}\\
S_{T_1ji} &= \sum_{t=1}^{T_1} U_{tji} = \sum_{t=1}^{T_1} (\bc_{j}^t - \bc_{i}^t)  = - \sum_{t=1}^{T_1} \bone \{\tilde{a} \in \bR_t\}.
\end{split}
\end{align}
Since $U_{tij}$ and $U_{tji}$ would be $0$ after $t>T_1$ (line 9-13 in TopRank), we can obtain $S_{T_1ji} = - \sum_{t=1}^{T_1} \bone \{\tilde{a} \in \bR_t\}$ and $N_{T_1ji} = \sum_{t=1}^{T_1} \bone \{\tilde{a} \in \bR_t\}$ hold when $t>T_1$. This implies the directional edges from the target item to non-target items would never establish, i.e., $-\sum_{t=1}^{T_1} \bone \{\tilde{a} \in \bR_t\} < \sqrt{2\sum_{t=1}^{T_1} \bone \{\tilde{a} \in \bR_t\}\log(\frac{c}{\delta}\sqrt{\sum_{t=1}^{T_1} \bone \{\tilde{a} \in \bR_t\}})}$). Besides, due to the received click number from non-target items being $0$ in $T_1$, the $S_{T_1}$ and $N_{T_1}$ between non-target items would be $0$. This implies the manipulation of the attacker would not influence the TopRank judgment of the attractiveness between non-target items. In other words, the TopRank under Algorithm \ref{alg2} attack can be considered as the TopRank interacts with item set $\bD\backslash \tilde{a}$ in $T-T_1$ rounds.

According to the above discussion and Lemma \ref{lemma6}, if $T_1 = \frac{4\log(c/\delta)}{\frac{K}{L} + (1 - \sqrt{1 + 8K/L})/4}$, then $S_{T_1ij} \ge \sqrt{2N_{T_1ij}\log(\frac{c}{\delta}\sqrt{N_{T_1ij}})}$ would satisfy with probability at least $1 - \delta/c$. Besides, from round $T_1+1$ to $T$, cycles would occur with probability at most $\delta L^2$. Thus graph $G_t$ would not contain cycles with probability at least $1-(L^2+1/c)\delta$ until $T$.
\end{proof}

\begin{proof}[Proof of Theorem \ref{theorem2}] 
Suppose the attacker implements attack Algorithm \ref{alg2} with input value $T_1 = \frac{4\log(c/\delta)}{\frac{K}{L} + (1 - \sqrt{1 + 8K/L})/4}$. Then, the TopRank would establish edges from non-target items to $\tilde{a}$ with probability at least $1-\delta/c$ (According to Lemma \ref{lemma2} and Lemma \ref{lemma6}). Based on the analysis in Lemma \ref{lemma3}, the cycle would appear with probability at most $(L^2+1/c)\delta$ and the first block would only contain $\tilde{a}$ till $T$. That is to say, the target item in block $\bP_{t1}$ would always be placed at the first positions after $T_1$ with probability at least $1-(L^2+1/c)\delta$. Following Algorithm \ref{alg2}, the attacker would only manipulate the returned click feedback for $T_1$ times. Thus the attack cost can be bounded by $\bbC(T) \le KT_1$.

According to the above observation, we summarize that Algorithm \ref{alg2} can efficiently attack TopRank when $T_1 = \frac{4\log(c/\delta)}{\frac{K}{L} + (1 - \sqrt{1 + 8K/L})/4}$. Here finish the proof of Theorem \ref{theorem2}.
\end{proof}

\end{document}